\documentclass[11pt]{article}
\usepackage[margin=1.2in]{geometry}
\usepackage[colorlinks,urlcolor=blue,linkcolor=blue,citecolor=blue]{hyperref}

\usepackage{color,array}
\usepackage{amsmath,amssymb,amsfonts}
\usepackage{graphicx}
\usepackage{subcaption}
\usepackage{array}

\usepackage{textcomp}
\usepackage{bm}

\usepackage{url}
\usepackage{cite}
\usepackage{float}

\newcommand{\R}{\mathbb{R}}

\usepackage{comment}
\usepackage[ruled,linesnumbered]{algorithm2e}

\usepackage{multicol}
\usepackage{color}
\newcounter{multifig}
\usepackage{thmtools,thm-restate}
\usepackage{multirow}
\usepackage{mathtools}

\usepackage{lipsum}

\usepackage{array}
\usepackage{makecell}
\usepackage[thinc]{esdiff}

\newcommand\norm[1]{\left\lVert #1 \right\rVert}

\usepackage{fancyhdr}
\usepackage{amsthm}
\newtheorem{theorem}{Theorem}[section]

\newtheorem{remark}[theorem]{Remark}
\newtheorem{example}[theorem]{Example}
\newtheorem{assumption}[theorem]{Assumption}

\newtheorem{query}[theorem]{Query}
\newtheorem{lemma}[theorem]{Lemma}

\def\BibTeX{{\rm B\kern-.05em{\sc i\kern-.025em b}\kern-.08em
    T\kern-.1667em\lower.7ex\hbox{E}\kern-.125emX}}

\usepackage{environ}
\usepackage{mdframed}

\usepackage{authblk}

\renewenvironment{abstract}{%
  \begin{mdframed}[linewidth=0.8pt, innerleftmargin=6pt, innerrightmargin=6pt,
                   innertopmargin=6pt, innerbottommargin=6pt]%
  \small\textbf{Abstract---}%
}{%
  \end{mdframed}%
  \vspace{4pt}%
}

\newenvironment{IEEEkeywords}{%
  \vspace{2pt}\noindent\small\textbf{Index Terms---}%
}{%
  \par\vspace{6pt}%
}

\newcommand{\IEEEPARstart}[2]{\textbf{#1}#2}

\begin{document}

\title{\textbf{Collaborative AI Agents and Critics for Fault Detection and Cause Analysis in Network Telemetry}}

\author[1,2]{Syed Eqbal Alam}
\author[1]{Zhan Shu}

\affil[1]{Department of Electrical and Computer Engineering,
          University of Alberta, Edmonton, Alberta, Canada}
\affil[2]{SheQAI Research, Edmonton, Alberta, Canada}

\date{}

\maketitle
\thispagestyle{fancy}

\begin{abstract}
We develop algorithms for collaborative control of AI agents and critics in a multi-actor, multi-critic federated multi-agent system. Each AI agent and critic has access to classical machine learning or generative AI foundation models. 
The AI agents and critics collaborate with a central server to complete multimodal tasks such as fault detection, severity, and cause analysis in a network telemetry system, text-to-image generation, video generation, healthcare diagnostics from medical images and patient records, etcetera. The AI agents complete their tasks and send them to AI critics for evaluation. The critics then send feedback to agents to improve their responses. Collaboratively, they minimize the overall cost to the system with no inter-agent or inter-critic communication. AI agents and critics keep their cost functions or derivatives of cost functions private. Using multi-time scale stochastic approximation techniques, we provide convergence guarantees on the time-average active states of AI agents and critics. The communication overhead is a little on the system, of the order of $\mathcal{O}(m)$, for $m$ modalities and is independent of the number of AI agents and critics. Finally, we present an example of fault detection, severity, and cause analysis in network telemetry and thorough evaluation to check the algorithm's efficacy. 
\end{abstract}

\begin{IEEEkeywords}
AI agents, distributed optimization, optimal control, multimodal foundation models, generative AI, federated multi-agent system, large language models.
\end{IEEEkeywords}

\section{Introduction} 
\IEEEPARstart{F}{oundation models} are generic generative artificial intelligence models trained on a vast corpus of data. They need significant computing infrastructure to train them. Moreover, they can be adapted to perform a wide variety of tasks \cite{Rishi2021}. Foundation models, such as large language and vision models, have shown remarkable capabilities to generate human-like text, code, videos, audio, language understanding, image classification, etcetera.  Some of the examples of foundation models are OpenAI's GPT-4, Google Deepmind's Flamingo \cite{Alayrac2024}, Meta's Llama3.2, Mistral, IBM's Granite \cite{Mishra2024}, to name a few. 

The foundation model's generated and processed data type is called a modality. Based on the modalities, we categorize foundation models into unimodal and multimodal. Unimodal foundation models generate and process one type of data or modality, such as generating text, whereas multimodal models generate and process several modalities, such as text, image, video, etc. Developing uni-modal foundation models is a challenging task that involves resolving data and security issues, fairness, model alignment, etc. Furthermore, sometimes, the model gives fictional outputs with high confidence (referred to as {\em hallucination}). Developing multimodal foundation models that support multiple modalities increases complexity several times more than uni-modal models because of processing different modalities' data. Furthermore, it needs to address other issues of each modality it supports and their integration issues. As a basic example, in the multimodal model for text, vision, and audio,  for pre-processing input data, the text uses words as tokens; vision uses pixels as tokens; audio uses sound frequencies as tokens; creating these tokens involves different techniques. Instead of developing foundation models that work for many modalities, we can work with specialized models that do specific tasks very well and then integrate these foundation models of desired modalities; this can be done through multi-agent collaboration. Each agent can have access to one or more foundation models; we call such agents {\em AI agents}. Recently, systems utilizing foundation model-based single autonomous agents are progressing towards multiple collaborative agents. Every agent of the foundation models-based multi-agent system can have a unique capability, and they collaboratively can provide emerging capabilities for multimodality. Moreover, using multiple collaborative agents increases the accuracy of the tasks' outcome \cite{Junyou2024}.

Most of the LLM-based multi-agent systems work is centered around text generation, and a few are in multimodal settings \cite{Taicheng2024}.
We propose stochastic algorithms to control the number of AI agents and critics in a federated multi-agent system. In the proposed model, multiple agents and critics collaborate with a central server to complete multimodal tasks and aim to minimize the overall cost to the system. The AI agents and critics have access to foundation models of different modalities or classical machine learning models. The central server keeps track of the number of active agents and critics at a time step and broadcasts a {\em feedback signal} for a modality in the system at each time step. 
\begin{figure}
    \centering
\includegraphics[width=1.0\columnwidth, trim=5.2cm 13.8cm 3cm 4.1cm, clip]{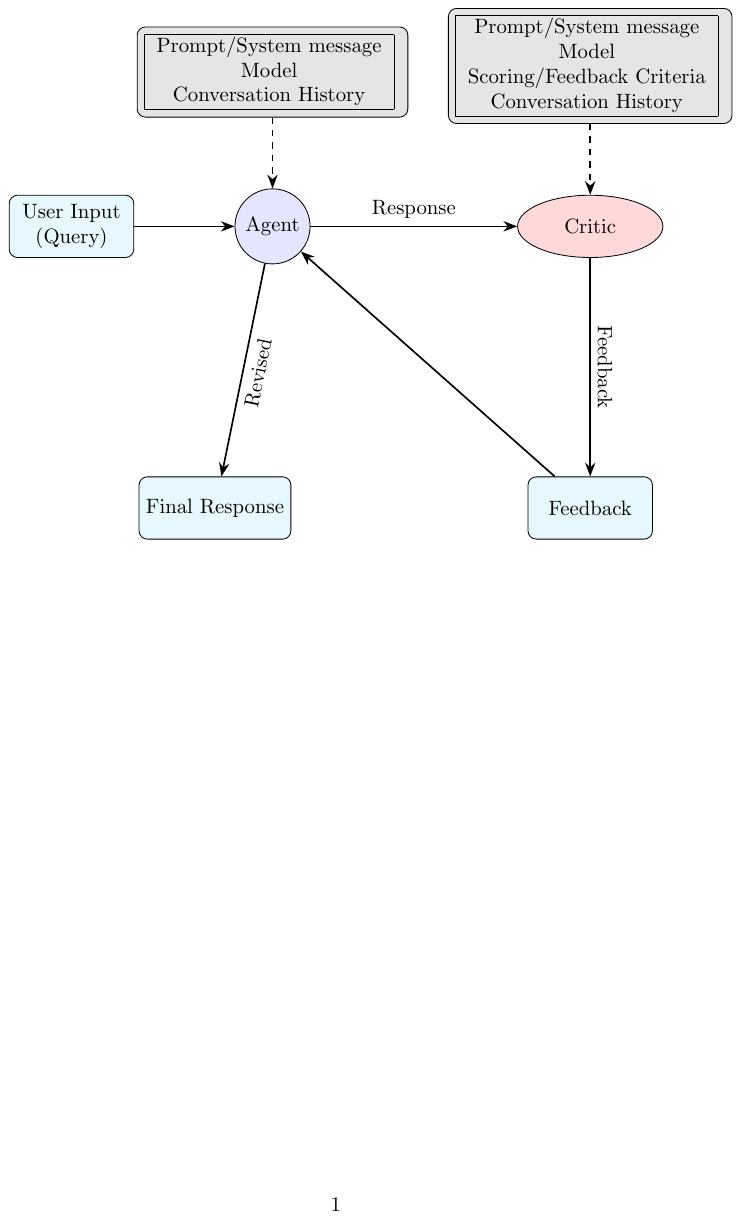}
    \caption{Block diagram for essential components of AI agent-critic interaction. Models represent the foundation models; for example, large language models, large vision models, etc.}
    \label{fig:diagram_AI-agents_critics}
\end{figure}
Users pass their questions or queries to AI agents; an agent generates its response and sends it to an active critic. The critic then evaluates the agent's response and sends feedback to the agent. The agent considers the critic's feedback and improves its response if it agrees with the critic's evaluation. With feedback, critics can also include a score, a numeric value. For example,
after receiving, the agent's response, the critic evaluates it, assigns a score, and sends feedback with score to the agent. The agent considers the critic's score and feedback and improves its response if the score is below a predefined threshold. If the score of the generated response is above the threshold, agents send it as final response for the task. 
The block diagram for essential components of AI agents and critics and their interactions is illustrated in Figure \ref{fig:diagram_AI-agents_critics}. Generative AI-enabled agents and critics have models (LLMs or LVMs), prompts to perform the task, and conversation history in their memory. In addition, the critics can also have evaluation criteria with different score weightage for response accuracy, task relevance, completeness, clarity, etc. 
Agents and critics make their decisions in a probabilistic way whether to participate at a time step or not. The response probabilities of an agent depend on its time-average active states, the (partial) derivatives of its cost function, and the feedback signals from the central server. Analogously, for the critics. The cost functions of an agent are coupled through its time-average active states for all modalities. Furthermore, the cost functions of a critic depend on its time-average active states for the modality it is evaluating. 
Collaboratively, they minimize the overall cost to the system with no inter-agent or inter-critic communication. Agents and critics keep their cost functions or derivatives of cost functions private. Using multi-time scale stochastic approximation techniques, we provide convergence guarantees on the time-average active states of agents and critics. The communication overhead is a little on the system and is independent of the number of agents and critics. 

We present an example of fault detection, severity, and cause analysis in network telemetry system. Network telemetry is an automated process to remotely collect and process network information \cite{Tan2021}.  It can monitor data flow across the network to identify any unusual traffic, anomaly, or fault. Furthermore, it improves networks performance, efficiency, reliability, and security~\cite{Minlan2019}.
Network telemetry is a building block for many network management applications \cite{Ran2020}, such as fault detection and localization \cite{Arzani2018}, congestion control \cite{Yuliang2019}, network health monitoring \cite{Hyun2019}, etc., more applications can be found at \cite{Tan2021, Landau-Feibish2025}.  
Fault detection, severity, and cause analysis is one of the most important problems in networks for safe and reliable operation of the networks. 
We evaluate our model with an open-source network telemetry dataset available at \cite{Putina2021}. We chose dataset $4$ that contains tabular data in a comma-separated values (CSV) format and log files for port state changes. The data has three types of port state changes: Optical transceiver pull and reinsert, admin state change (down and up), and link state changes (plug-in and plug-out of fiber optics cable). The producer devices are named ``leaf'' and ``spine'' with numeric values, such as leaf1, spine2, etcetera. The CSV file has $99$ columns, which include numeric and non-numeric data; to name a few---{\em name, time, bandwidth, bytes-received, bytes-sent, CRC-errors, input-data-rate, input-drops, reliability, state, system-ram-memory, total-cpu-fifteen-minute}, etcetera. Figure \ref{fig:Density-plots} illustrates the density plots of a few columns of all rows of the CSV file of dataset $4$ of \cite{Putina2021}. For better illustration, Figure \ref{fig:columns} presents plots of a few columns for some of the data points.
\begin{figure}[t]
    \centering
    \subfloat[]{\includegraphics[width=0.498\columnwidth]{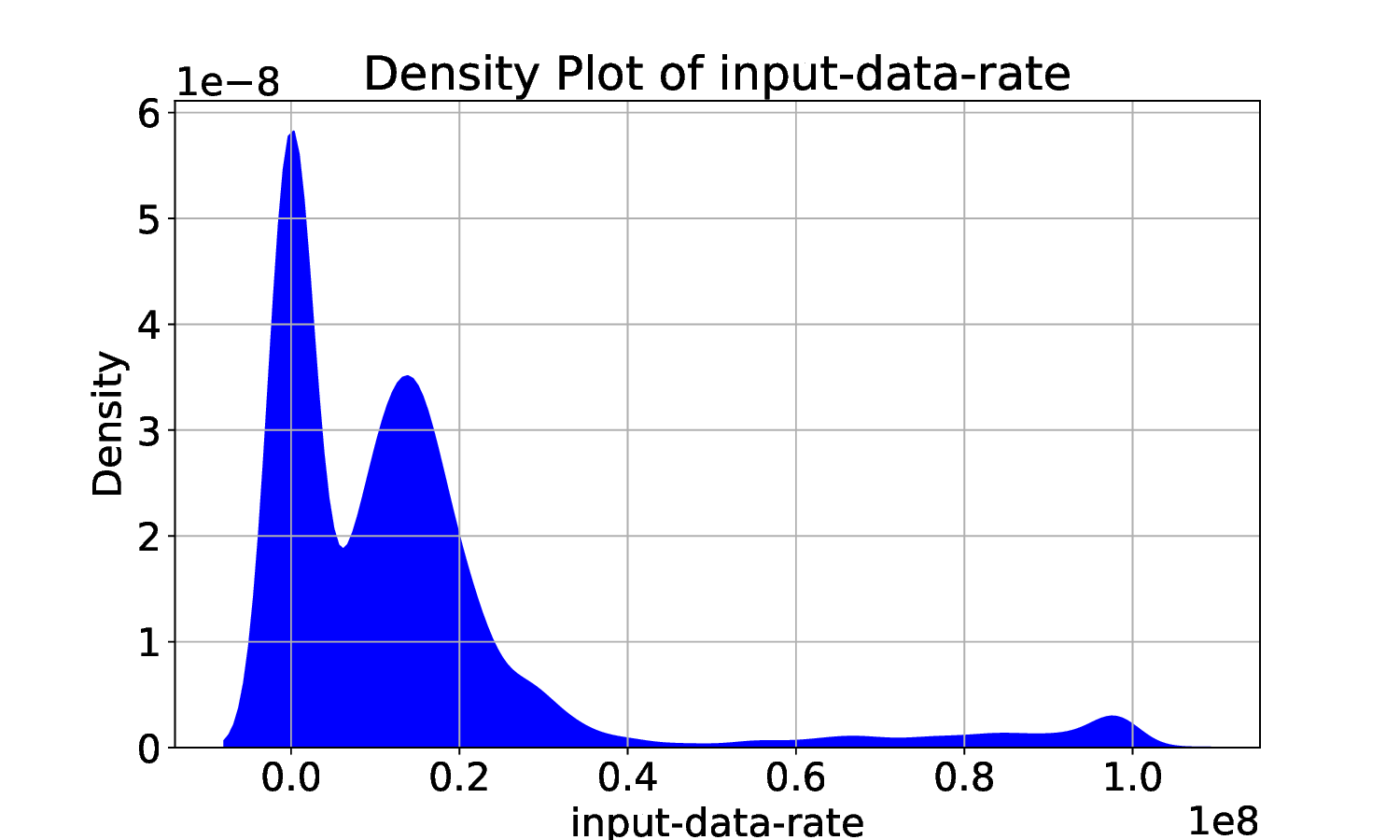}} 
    \subfloat[]{\includegraphics[width=0.498\columnwidth]{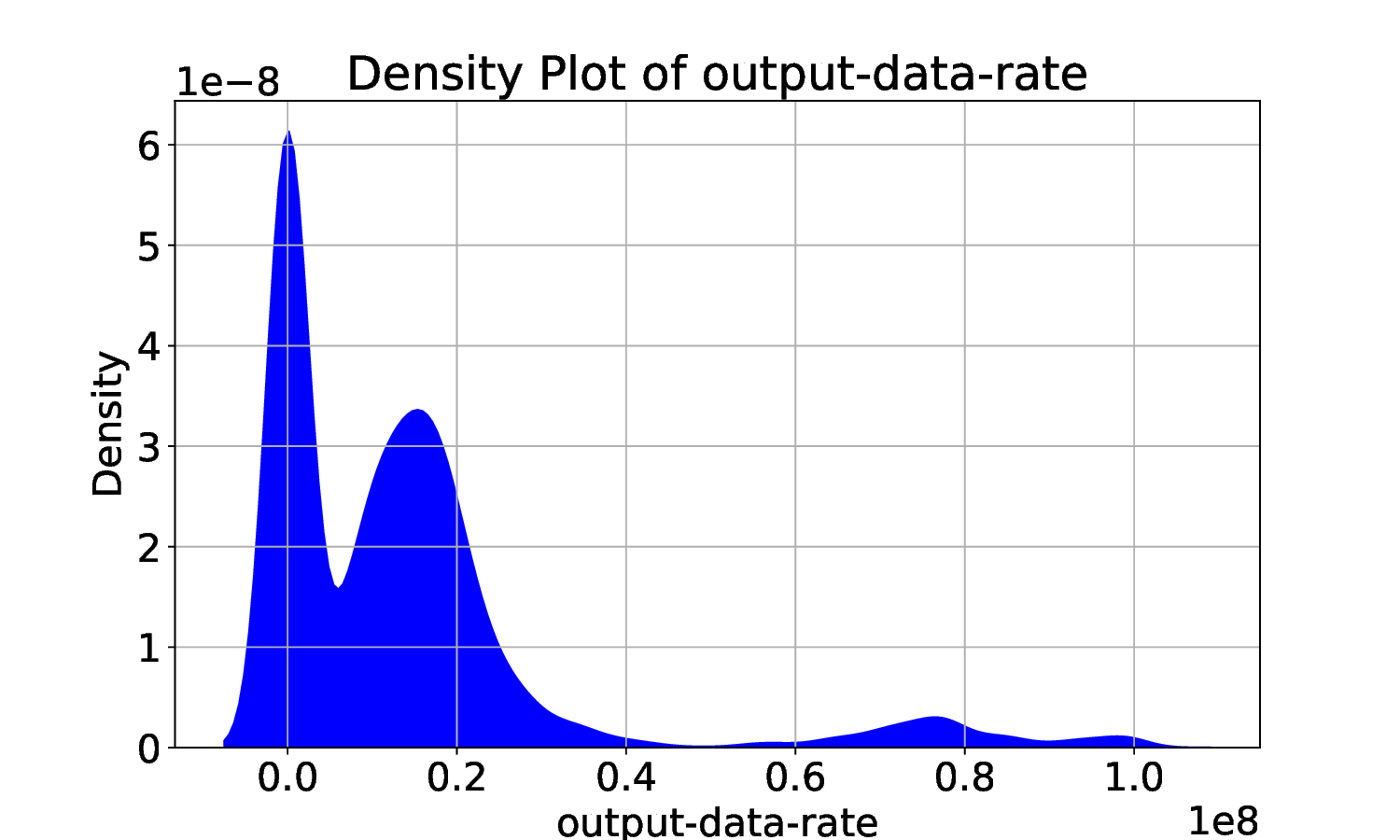}} \\
    \subfloat[]{\includegraphics[width=0.498\columnwidth]{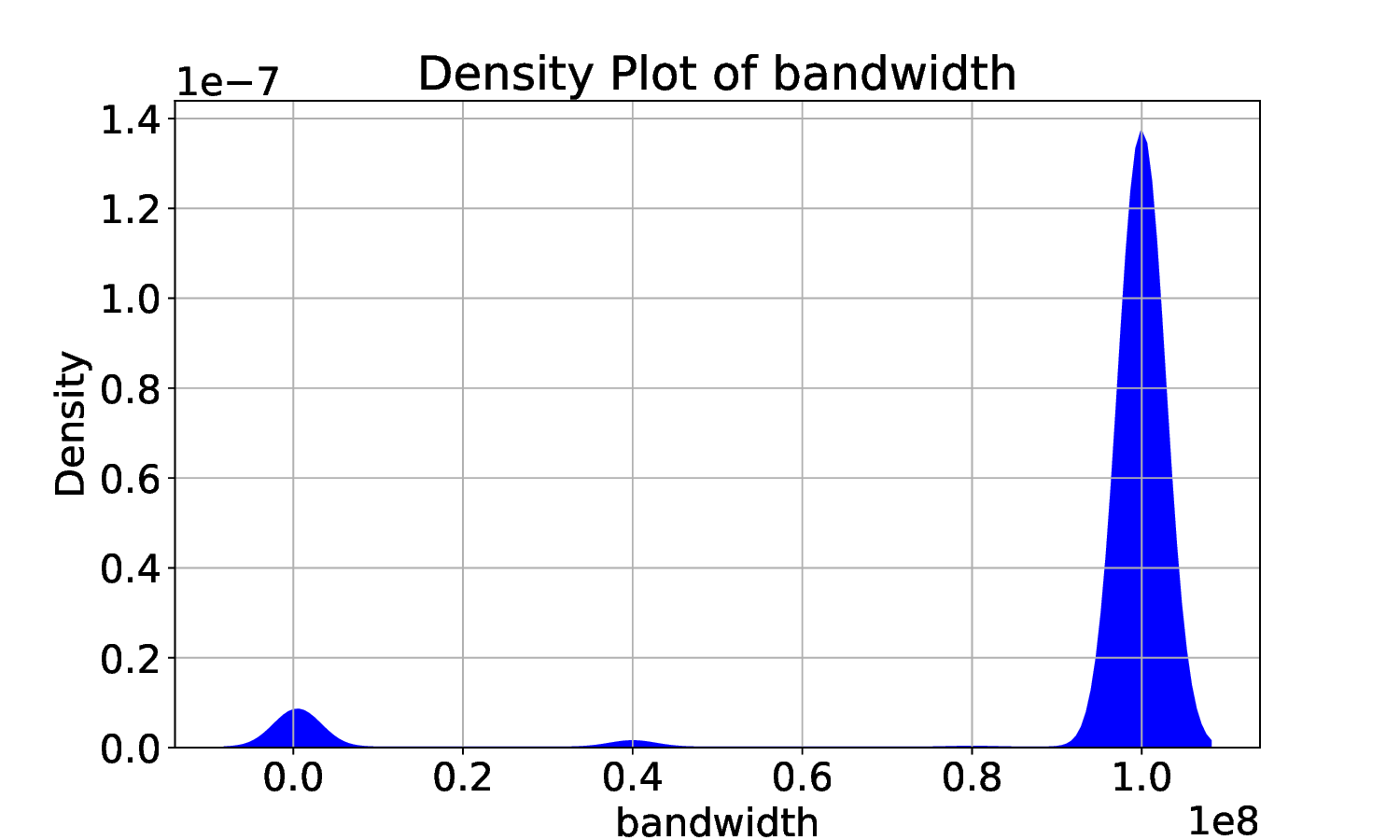}}  
    \subfloat[]{\includegraphics[width=0.498\columnwidth]{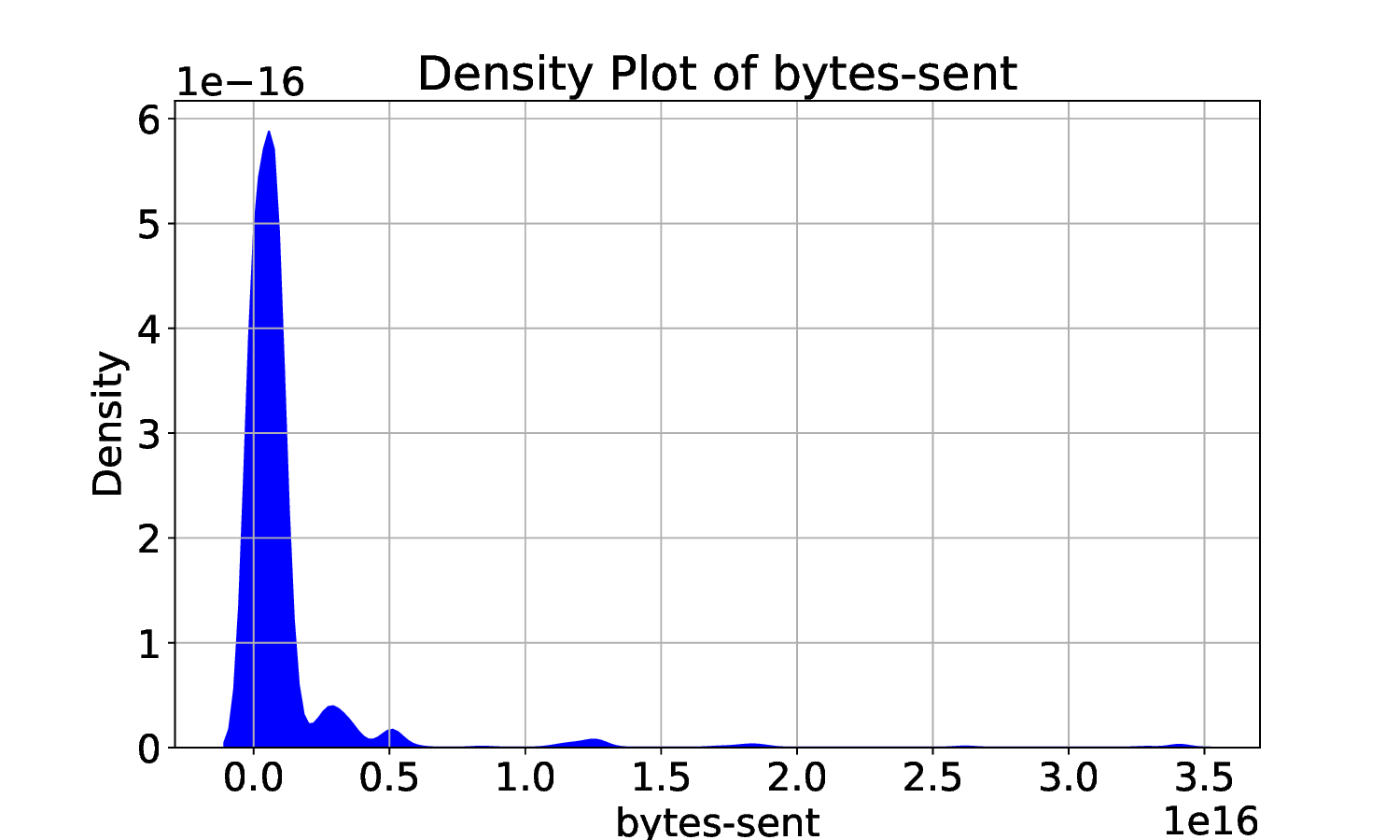}}   
    \caption{Density plots for input data rate, output data rate, bandwidth, and bytes sent for the CSV file (all rows) of the network telemetry dataset $4$ of \cite{Putina2021}.}
    \label{fig:Density-plots}
\end{figure}
\begin{figure}
   \centering \includegraphics[width=0.49\linewidth]{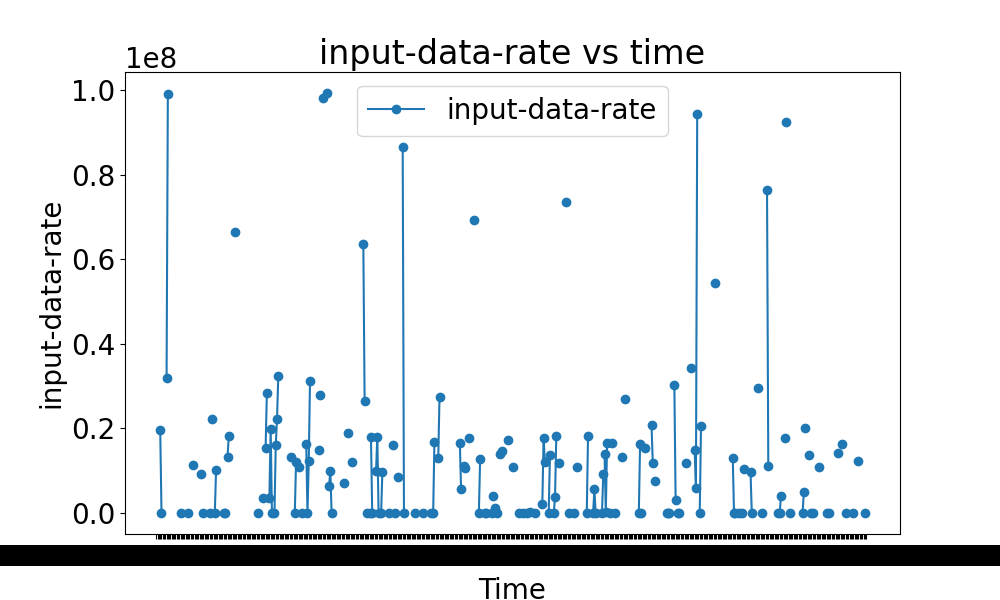}
\includegraphics[width=0.49\linewidth]{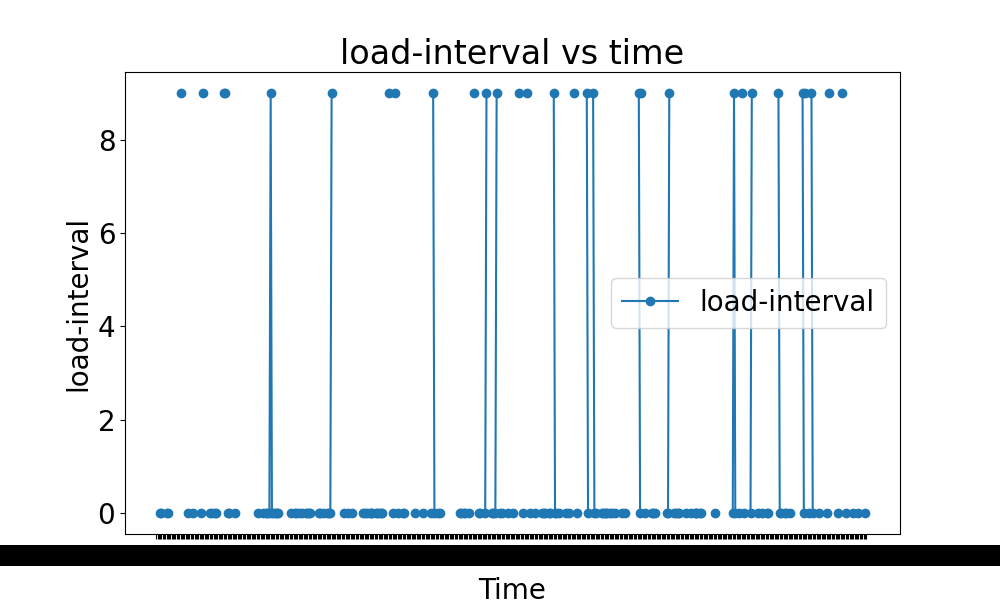}
\includegraphics[width=0.49\linewidth]{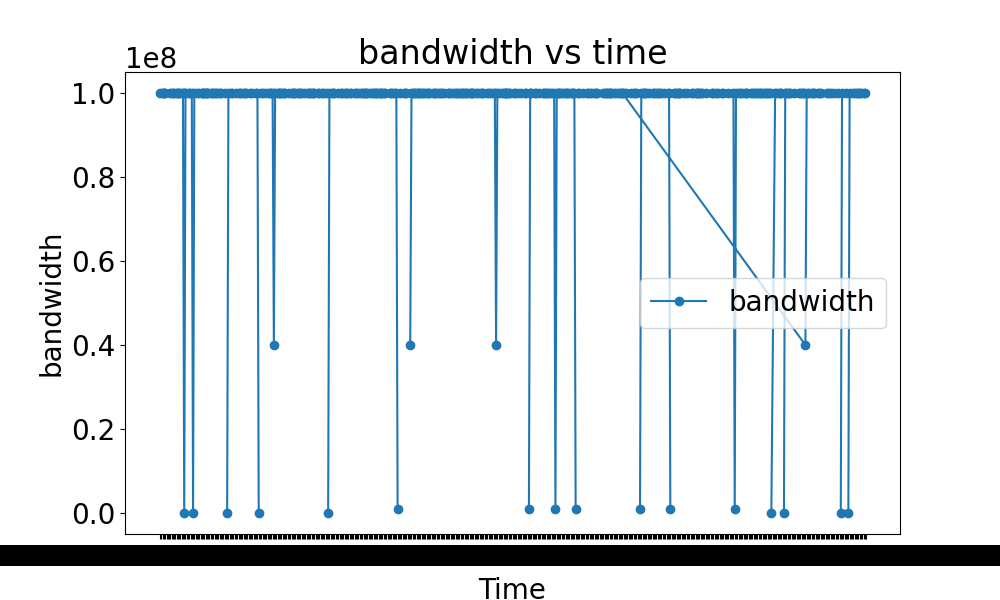}
\includegraphics[width=0.49\linewidth]{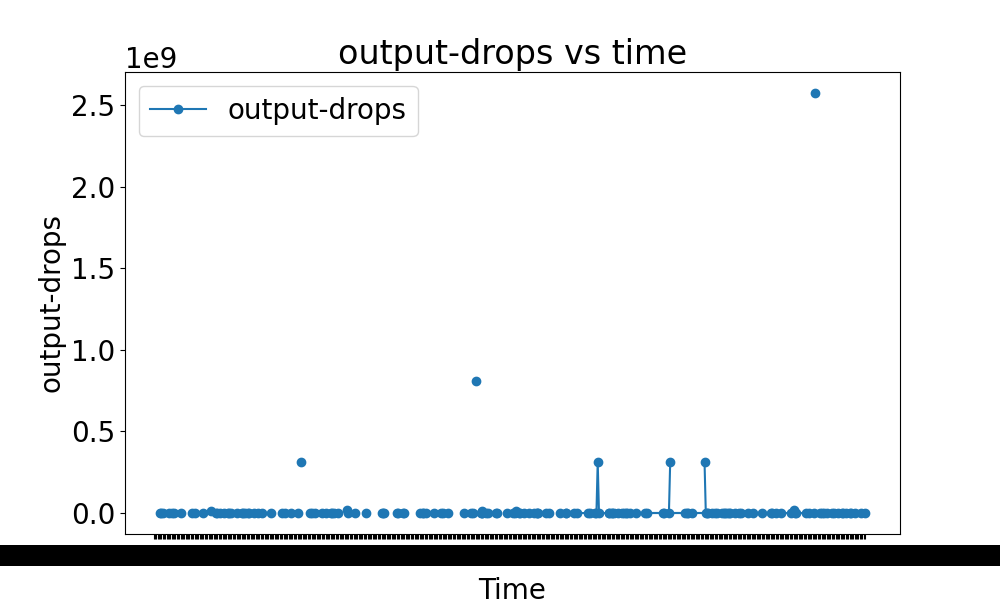}
\caption{The plots for input data rate, load interval, bandwidth, output drops versus time for a few data points from the CSV dataset.}
    \label{fig:columns}
\end{figure}

We evaluate the developed model for fault detection, severity, and cause analysis for the network telemetry dataset in two steps: $(i)$ Fault detection and $(ii)$ Severity and Cause analysis. 
In the fault detection step, we use two approaches: $(a)$ agents enabled with classical machine learning techniques and $(b)$ agents and critics enabled with Generative AI's foundation models. In the Severity and Cause analysis step, we use agents and critics enabled with Generative AI's foundation models. In classical machine learning, the Extreme Gradient Boosting (XG Boosting) model is used; it provides higher accuracy, F1 score, precision, and recall with faster processing than Generative AI's foundation models in the fault detection phase. In classical machine learning based approach, agents are enabled with the Extreme Gradient Boosting model; however, critics are not used (it does not affect the model formulation). Nevertheless, we use both agents and critics in the Generative AI's foundation model-based approaches for fault detection. In the fault detection step, we can use agents enabled with foundation models, and then critics enabled with the XG Boosting or vice versa, but at the cost of higher execution time. One of the advantages of enabling agents with classical machine learning is the speed and accuracy of fault detection, which is crucial in segregating the faults faster. For fault severity and cause analysis, we use agents and critics enabled by foundation models. The developed model is suitable for real-time fault detection, severity, and cause analysis. We make the following principal contributions to the paper.

\subsection*{Contributions:}
\begin{itemize}
\item We developed stochastic iterative algorithms for collaborative control of multiple AI agents and critics in a federated multi-agent system; the agents and critics leverage classical machine learning techniques and generative AI's foundation models to complete multimodal tasks.
\item We provide the convergence guarantees of the AI agents' and critics' time-average active states. Our solution is based on stochastic approximation techniques; using multiple time scales and generalized decreasing step sizes. The developed model incurs a little communication overhead on the system, of the order of $\mathcal{O}(m)$, for $m$ task modalities and it is independent of the number of AI agents and critics. 
\item We present an example of fault detection, severity, and cause analysis on a real-world network telemetry dataset \cite{Putina2021}. The dataset consists of tabular data (CSV files) and log files for port state changes---Optical transceiver (pull and reinsert), admin state change (down and up), and link state changes (plug-in and plug-out of fiber optics cable). We present extensive experimental results to demonstrate the algorithm's efficacy. Nonetheless, we compare the outcomes with Meta's Llama3.2, Mistral, DeepSeek-R1, Llava:7b, IBM's Granite3.2, and Microsoft's Phi4:14b; these are open-source large language models supported by Ollama and Langchain.
\end{itemize}

\section{Notations and Problem description} \label{prob_form}
Let $\mathcal{A} = \{A_1, A_2, \ldots,A_N\}$ be a set of $N$ AI agents in a multi-agent system. Let every agent have access to foundation models, perception module (data input), and have a memory to store their responses/conversation history. These agents collaborate to complete tasks of several modalities, such as audio, text, video, sensor data, etc. Let the total number of modalities be $m$. Furthermore, let us assume that a central server tracks the number of active agents completing the tasks. We call such a setting as {\em federated multi-agent system}. The central server knows the desired number of agents required to complete the tasks of each modality; we call it the {\em capacity of the modality}. Let $\mathcal{N}_1$, $\mathcal{N}_2, \ldots,\mathcal{N}_m$ denote the capacities of modalities $1,2,\ldots,m$, respectively. 
Furthermore, we consider a few critics for each modality. The critics of the modality evaluate the quality of agents' responses and provide feedback (with or without a score) to each response. Let the number of critics required to evaluate a task of modality $j$ be denoted by $C_j$, for $j=1,2,\ldots,m$.  Analogous to agents, each critic has access to a foundation model, perception module (data input), and has a memory to store their responses, queries, and agents' responses. The critics can also follow an evaluation criteria for agent's responses for accuracy, relevance to task, completeness and clarity. Let all critics are not required to be active at a time step, and let $\mathcal{M}_1$, $\mathcal{M}_2, \ldots,\mathcal{M}_m$ denote the critics capacities of modalities $1,2,\ldots,m$, respectively. Note that agents do not know capacities of modalities, $\mathcal{N}_1, \ldots,\mathcal{N}_m$; similarly, critics do not know capacities $\mathcal{M}_1, \ldots,\mathcal{M}_m$. Furthermore, the agents need not know the number of participating or active agents at a time step.

Let $\mathbb{N}$ denote the set of natural numbers and $\mathbb{R}$ denote the set of real numbers, and let $\mathbb{R}_+$ denote the set of positive real numbers. Let $k \in \mathbb{N}$ denote the discrete time steps. For $i=1,2,\ldots,N$ and $j=1,2,\ldots,m$, let the output response of agent $A_i$ for the task of modality $j$ at time step $k$ be denoted by $R_{ji}(k)$. Let the feedback provided by the critic $\nu$ for agent $A_i$'s response $R_{ji}(k)$ be denoted by $F_{j\nu}^i(k)$, $\nu=1,2,\ldots,C_j$, $j=1,2,\ldots,m$, $i=1,2,\ldots,N$. 
Let $X_{ji}(k) \in \{0,1\}$ denote the active states of agent $A_i$ at time step $k$ for modality $j$, let $X_{ji}(k) = 1$ denote the agent $A_i$ is active to complete the task of modality $j$ at time step $k$, whereas $X_{ji}(k) = 0$ denotes the agent is not active at time step $k$ to complete the task. Let $x_{ji}(k) \in [0, 1]$ denote the time-average number of times $A_i$ was active to complete the task of modality $j$ until time step $k$.
Let $Y_{j\nu}(k) \in \{0,1\}$ denote the active states of critic $\nu$ evaluating modality ${j}$ at time step $k$, and let $Y_{j\nu}(k) = 1$ denote critic $\nu$ evaluating agent's response of modality ${j}$ is active at time step $k$, whereas $Y_{j\nu}(k) = 0$ denote the critic $\nu$ is inactive or not participating for evaluation of agent's response of modality ${j}$ at time step $k$. Let $y_{j\nu}(k) \in [0, 1]$ denote the time-average number of times critic $\nu$ was active to evaluate the task of modality $j$ until time step $k$.

For agents $i = 1, 2, \ldots, N$, and modality $j = 1, 2, \ldots, m$, we define ${x}_{ji}(k)$ as follows:
\begin{align} \label{eq:average_alloc}
{x}_{ji}(k) := \frac{1}{k+1} \sum_{\ell=0}^k X_{ji}(\ell).
\end{align}

Analogously, for critics $\nu = 1, 2, \ldots, \mathcal{M}_j$ and modality $j = 1, 2, \ldots, m$, we define:
\begin{align} \label{eq:average_alloc_Critic}
{y}_{j\nu}(k) := \frac{1}{k+1} \sum_{\ell=0}^k Y_{j\nu}(\ell).
\end{align}

For $x_{ji} \in [0,1]$, let $\mathbf{x}_j = (x_{j1}, x_{j2}, \ldots, x_{jN})$ and $\mathbf{x} =(\mathbf{x}_1, \mathbf{x}_2, \ldots, \mathbf{x}_m)$.
Analogously, for $y_{j\nu} \in [0,1]$, let $\mathbf{y}_j = (y_{j1}, y_{j2}, \ldots, y_{jC_j})$ and $\mathbf{y} =(\mathbf{y}_1, \mathbf{y}_2, \ldots, \mathbf{y}_m)$.
Let agent $A_i$ has cost function $f_i: [0,1]^m \to \mathbb R_+$, $i=1,2,\ldots,N$; it captures the cost using classical machine learning models or the foundation models and other computational costs of agent $A_i$ performing the tasks of $m$ modalities. Additionally, let critic $\nu$ of modality $j$ has cost function $g_{j\nu}: [0,1] \to \mathbb R_+$,  $\nu = 1,2,\ldots,C_j$, $j=1,2,\ldots,m$; it captures the cost of the critic of modality $j$. For the cost functions, we have the following assumption to lead to unique optimal solution:
\begin{assumption}
The cost functions $f_i$ and $g_{j\nu}$ are twice continuously differentiable, strictly convex, and increasing in each variable, for $i=1,2,\ldots,N$, $\nu = 1,2,\ldots,C_j$, and $j=1,2,\ldots,m$.
\end{assumption}

We develop an iterative stochastic algorithm for the multi-actor multi-critic model wherein agents keep their cost functions and derivatives of the cost functions private even with critics and the central server; analogously, the critics also keep their cost functions and derivatives of the cost functions private. Nonetheless, they share their willingness to participate at each time step.

We formulate the problem as an optimization problem as follows:
\begin{align}
\begin{split} \label{prob:objective_fn}
\min_{\bf{x}, \bf{y}} \quad &\sum_{i=1}^{N} f_i(x_{1i},\ldots,x_{mi}) + \sum_{j=1}^{m} \sum_{\nu=1}^{C_j} g_{j\nu}(y_{j\nu}),
\\ \mbox{subject to } \quad &\sum_{i=1}^{N} x_{1i}  =  \mathcal{N}_1\\ &\vdots\\ &\sum_{i=1}^{N} x_{mi}  =  \mathcal{N}_m,\\
 &\sum_{\nu=1}^{C_1} y_{1\nu}  =  \mathcal{M}_1\\ &\vdots\\ &\sum_{\nu=1}^{C_m} y_{m\nu}  =  \mathcal{M}_m,
\\  &x_{ji} \in [0, 1], \quad i=1,\ldots,N \mbox{ and } j=1,\ldots,m,
\\  &y_{j\nu} \in [0, 1], \quad \nu=1,\ldots,C_j \mbox{ and } j=1,\ldots,m.
\end{split}
\end{align}

As the constraint set of \eqref{prob:objective_fn} is compact, therefore, optimal solutions exist. The assumption that the cost functions $f_i$ and $g_{j\nu}$ are strictly convex and increasing leads that $\sum_{i=1}^{N} f_i(x_{1i},\ldots,x_{mi})$ + $\sum_{j=1}^{m} \sum_{\nu=1}^{C_j} g_{j\nu}(y_{j\nu})$ is also strictly convex and increasing in decision variables $x_{ji}$ and $y_{j \nu}$. Therefore, the solution to \eqref{prob:objective_fn} is a unique optimal solution.
For $i=1,2,\ldots,N$, $\nu = 1,2, \ldots, C_j$, and $j=1,2\ldots,m$, let $x_{ji}^{*} \in (0,1]$ and ${y}_{j\nu}^{*} \in (0,1]$ denote the solution to \eqref{prob:objective_fn}. 
The algorithm determines the active states of agents $X_{ji}(k)$ and critics $Y_{j\nu}(k)$, and it ensures that the time-average number of active agents and active critics (defined in \eqref{eq:average_alloc} and \eqref{eq:average_alloc_Critic}) converge to their respective optimal values and achieve an overall minimum cost to the system, stated as follows:
\begin{align*}
&\lim_{k\to \infty} {x}_{ji}(k) = {x}_{ji}^{*}, 
 \lim_{k\to \infty} {y}_{j \nu}(k) = {y}_{j \nu}^{*}, \\& \mbox{for } i = 1, 2, \ldots, N, \nu = 1, 2, \ldots, C_j, j=1,2,\ldots,m,
\end{align*} $\mbox{ and }$
\begin{align*}
&\lim_{k\to \infty}  \sum_{i=1}^{N} f_i(x_{1i}(k),\ldots,x_{mi}(k)) + \lim_{k\to \infty}  \sum_{j=1}^{m} \sum_{\nu=1}^{C_j} g_{j\nu}(y_{j\nu}(k)) \\&=  \sum_{i=1}^{N} f_i(x_{1i}^*,\ldots,x_{mi}^*) + \sum_{j=1}^{m} \sum_{\nu=1}^{C_j} g_{j\nu}(y_{j\nu}^*). 
\end{align*}

We first find the Lagrangian of the optimization problem \eqref{prob:objective_fn} and then follow a similar analysis as in \cite{Syed2018_B} and find out that the (partial) derivatives of the cost functions of all agents collaborating to complete tasks of the same modality reach consensus at its optimal value; the same analysis holds for critics. We obtain the following results, for $i,u \in \{1, 2, \ldots, N\}$, $\nu, \ell \in \{1, 2, \ldots, C_j\}$, and $j=1,2,\ldots,m$:
\begin{align}  \label{optimality_cond_b1}
	\frac{\partial}{\partial x_{ji}} f_i(x_{1i},\ldots,x_{mi}) \Big \vert_{x_{ji}= x_{ji}^*}  = \frac{\partial}{\partial x_{ju}} f_u(x_{1u},\ldots,x_{mu}) \Big \vert_{x_{ju} = x_{ju}^*}.
\end{align} 
 \mbox{ and }
 \begin{align}  \label{optimality_cond_critic}
	\frac{d}{ d y_{j\nu}}g_{j\nu}(y_{j\nu}) \Big \vert_{y_{j\nu}= y_{j\nu}^*} = \frac{d}{d y_{j\ell}}g_{j\ell}(y_{j\ell}) \Big \vert_{y_{j\ell}= y_{j\ell}^*}.
\end{align} 
Karush-Kuhn-Tucker (KKT) conditions are satisfied by the consensus of derivatives (refer \eqref{optimality_cond_b1} and \eqref{optimality_cond_critic}) of the cost functions that are necessary and sufficient conditions of optimality of the optimization Problem \eqref{prob:objective_fn}; a similar analysis is done in \cite{Syed2018_B, Wirth2014, Syed2018, Alam2023_CDC}, further details of KKT conditions can be found at Chapter 5.5.3 \cite{Boyd2004}.

\section{Algorithm Design}  \label{bin_imp}
This section presents the developed algorithms for AI agents, AI critics, and the central server. AI agents and critics are Generative AI's foundation model or classical machine learning models enabled. AI agents have access to models of different modalities and complete multimodal tasks. Moreover, we consider that there are $m$ sets of critics, and critics of each set evaluate the agents' responses for one particular modality. Each AI agent runs its algorithm to calculate its probabilistic intent to participate in completing its (multimodal) tasks. Analogously, the critics run their algorithms to calculate their probabilistic intent to evaluate the responses of the AI agents and provide feedback on each response. After receiving feedback from the critics, agents update their responses if they agree with the evaluation. Notice that the algorithm can also be updated to include score with feedback. If the score is below a threshold, then the agent updates its response; otherwise, it sends the response as final. Moreover, the threshold could be set by the algorithm designer based on the prediction accuracy of the models. 

The central server keeps track of the number of active AI agents and critics at each time step. The feedback signal of $j$'th modality depends on its value at the previous time step, a step size, modality capacity, and the total number of active participants (agents or critics). The central server updates and broadcasts feedback signals for each modality to agents and critics at every time step. 
Figure \ref{fig:diagram_cS} presents the model as a block diagram for agents, critics, and the central server.
\begin{figure}
    \centering    \includegraphics[width=1.0\columnwidth, trim=5.2cm 7cm 3cm 4.3cm, clip]{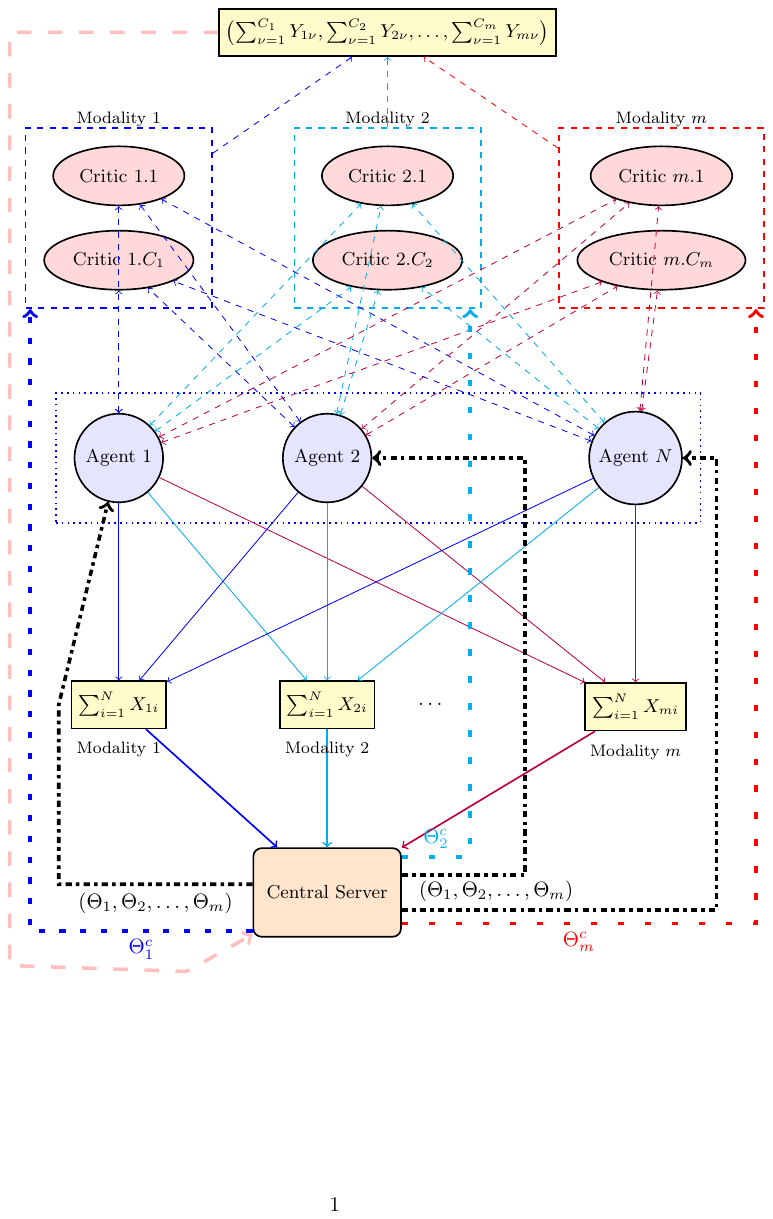}
    \caption{Federated multi-agent system's block diagram: AI agents and critics coordinate with a central server to complete multimodal tasks.}
    \label{fig:diagram_cS}
\end{figure}

We now define a few additional notations here to describe the algorithms. 
Let $\Theta_j(k)$ denote {\em feedback signal for AI agents} completing modality $j$'s task, and let $\Theta^c_j(k)$ denote {\em feedback signal for AI critics} of modality $j$ at time step $k$. Let $\eta_{j}(k)>0$ denote the step size used to update feedback signal $\Theta_j(k)$ for agents and $\eta^c_j(k)>0$ denote step size used to update $\Theta^c_j(k)$ for critics at time step $k$. Let  $\eta_{0_j}>0$ and $\eta^c_{0_j}>0$ be the initial values for step sizes of feedback signals for agents and critics, respectively. Let $\alpha_j>0$ denote a step size decay coefficient, and let $\beta_j \in (1/2,1)$ denote a decay exponent for step sizes of feedback signals for agents. We define the step size for agents as $\eta_j(k) := \frac{\eta_{0_j}}{(\alpha_j k +1)^{\beta_j}}$. Analogously, we define the step size of feedback signals for critics of modality $j$, let $\alpha^c_j>0$ denote the critic's step size decay coefficient, and let $\beta^c_j \in (1/2,1)$ denote the decay exponent of critics. We define the step size for critics as $\eta^c_j(k) := \frac{\eta^c_{0_j}}{(\alpha^c_j k +1)^{\beta^c_j}}$. The feedback signal is updated for agents and critics, as described in \eqref{omega_b1} and \eqref{omega_b2},  respectively.  The central server broadcast them at each time step. We present the algorithmic details for AI agents and critics in the following subsections.

\subsection{Algorithm of AI agents}
Let  $\sigma_{ji}(\Theta_j(k),x_{1i}(k),\ldots,x_{mi}(k))$ denote the probability distribution of agent $A_i$ at time step $k$. After receiving the feedback signal from the central server, agent $A_i$ calculates $\sigma_{ji}(\Theta_j(k),x_{1i}(k),\ldots,x_{mi}(k))$ to decide whether to be active or not for completing tasks of  modality $j$ at time step $k+1$; $\sigma_{ji}$ depends on feedback signals of modality $j$ for agents, agents $A_i$'s time-average number of active states $(x_{1i}(k),\ldots,x_{mi}(k))$ and the partial derivative of the cost function $f_i$ for $x_{ji}$. Agent $A_i$ calculates its response probability for all modalities $j$ at each time step $k \in \mathbb{N}$, for $i=1,2,\ldots,N$, see \eqref{eq:prob}. Moreover, 
agent $A_i$ finds out the outcome of the Bernoulli trial for modality $j$ at time step $k$, outcome $1$ occurs with probability $\sigma_{ji}(\Theta_j(k),x_{1i}(k),\ldots,x_{mi}(k))$ and outcome $0$ occurs with probability $1-\sigma_{ji}(\Theta_j(k),x_{1i}(k),\ldots,x_{mi}(k))$. Outcome value $1$ represents that the agent enters into an active state at time step $k+1$, and $0$ represents the agent will stay inactive, described as follows:
 \begin{align} \label{eq:alloc-update}
&X_{ji}(k+1) \nonumber \\&=
\begin{cases} 
1 \quad \text{with probability }  \sigma_{ji}(\Theta_j(k),x_{1i}(k),\ldots,x_{mi}(k));\\ 
0 \quad \text{with probability }  1 - \sigma_{ji}(\Theta_j(k),x_{1i}(k),\ldots,x_{mi}(k)).
\end{cases}
\end{align}
Analogous to modality $j$, agents follow this process for every modality at each time step. The algorithm for agents is presented in Algorithm \ref{algo3}. 

\begin{algorithm} \SetAlgoLined Input:
		Capacities $\mathcal{N}_{1}, \ldots, \mathcal{N}_{m}$, $\mathcal{M}_{1}, \ldots, \mathcal{M}_{m}$; \ States $ X_{11}(k), \ldots, X_{mN}(k)$, $ Y_{11}(k), \ldots, Y_{j\nu}(k)$, for $k \in \mathbb{N}$, $\nu =1,2,\ldots,C_j$, $j =1,2,\dots,m$.
		
		Output:
		Feedback signals $\Theta_{1}(k+1), \Theta_{2}(k+1), \ldots, \Theta_{m}(k+1)$, $\Theta^c_{1}(k+1), \Theta^c_{2}(k+1), \ldots, \Theta^c_{m}(k+1)$, for $k \in \mathbb{N}$.
		
		Initialization: Initialize feedback signals $\Theta_{1}(0)$, $\Theta_{2}(0), \ldots, \Theta_{m}(0)$ and $\Theta^c_{1}(0)$, $\Theta^c_{2}(0), \ldots, \Theta^c_{m}(0)$ with a real number.
		
		\ForEach{$k \in \mathbb{N}$}{
			
			\ForEach{modality $j = 1, 2, \ldots, m$}{
				\begin{itemize}	
				\item[--] update $\Theta_j(k+1)$ according to \eqref{omega_b1} \\and broadcast it in the network;
            \item[--] update $\Theta^c_j(k+1)$ according to \eqref{omega_b2} \\and broadcast it in the network.
            \end{itemize}        
		} }
		\caption{Algorithm of the central server.}
		\label{algoCU2}
	\end{algorithm}
    
\begin{algorithm} \SetAlgoLined Input:
		$\Theta_{1}(k), \Theta_{2}(k), \ldots, \Theta_{m}(k)$;
        Critic's feedback $F_{11}^1(k),\ldots, F_{1C_1}^N(k),\ldots, F_{mC_m}^N(k)$  
		
		Output: States $X_{1i}(k+1), \ldots, X_{mi}(k+1)$ and agents responses $R_{1i}(k), \ldots, R_{mi}(k)$,  for $k \in \mathbb{N}$.
		
		Initialization: $count_{ji} \leftarrow 0$, $X_{ji}(0) \leftarrow 1$ and
		${x}_{ji}(0) \leftarrow X_{ji}(0)$, $i = 1, \ldots, N$, critic's feedback $F_{j\nu}^i(0) \leftarrow Null$, for
		$j = 1, \ldots, m$, and critic $\nu$.
		
		\ForEach{$k \in \mathbb{N} $}{
			
		\ForEach{modality $j = 1, 2, \ldots, m$}{
	Calculate $\sigma_{ji}(\Theta_j(k),x_{1i}(k), \ldots, x_{mi}(k))$ as in \eqref{eq:prob}. Generate Bernoulli independent random variable $B_{ji}(k)$ as follows:
     
$B_{ji}(k) \leftarrow
\begin{cases} 
1 \quad \text{w. p. }  \sigma_{ji}(\Theta_j(k),x_{1i}(k), \ldots, x_{mi}(k));\\ 
0 \quad \text{w. p. }  1 - \sigma_{ji}(\Theta_j(k),x_{1i}(k), \ldots, x_{mi}(k)).
\end{cases}
$
				
	\eIf{ $B_{ji}(k) = 1$}{Agent $A_i$ enters into active state, update: $X_{ji}(k+1) \leftarrow 1$\; 
    \While{$count_{ji}<2$}{
            \If{$count_{ji}=0$ /*initial response*/}{Work on task of modality $j$, store your response in $R_{ji}(k)$ and send it to critics for evaluation\;
            $count_{ji} \leftarrow 1$\;}
   \Else{
     Consider critic's feedback $F_{j\nu}^i(k)$ to improve your response $R_{ji}(k)$\;
     Revise the response if you agree with critic's evaluation\;
     Send the response to the user or save it to the response file\;
       }
       Receive an active critic $\nu$'s feedback $F_{j\nu}^i(k)$ for response $R_{ji}(k)$\;
              
     }}{$X_{ji}(k+1) \leftarrow 0$;}
    $count_{ji} \leftarrow 0$\;
    } 
}
\caption{Algorithm of multimodal AI agent $i$.}
		\label{algo3}
	\end{algorithm}
The central server updates the feedback signal $\Theta_j(k + 1)$ for agents of modality $j$ as follows:
	\begin{align} \label{omega_b1}
			&\Theta_j(k+1) \triangleq \Theta_j(k) -  \frac{\eta_{0_j}}{(\alpha_j k+1)^{\beta_j}} \left (\sum_{i=1}^N X_{ji}(k+1) - \mathcal{N}_j \right ),
	\end{align}
 $j=1,2,\ldots,m$ and $k \in \mathbb{N}$.

	 After receiving the feedback signal $\Theta_j(k)$ for modality $j$ from the central server at time step $k$, agent $A_i$ calculates probability $\sigma_{ji}(\Theta_j(k),x_{1i}(k),\ldots,x_{mi}(k))$ to make a decision to be active or not to participate in completing the task of modality $j$ at the next time step, the probability distribution of agent $A_i$ for modality $j$ is calculated as follows:
	\begin{align} \label{eq:prob}
		& \sigma_{ji}(\Theta_j(k),x_{1i}(k),\ldots,x_{mi}(k)) \nonumber \\& \triangleq  \Theta_j(k) \frac{ x_{ji}(k)}{ \frac{\partial}{\partial x_{ji}}f_i(x_{1i}(k),\ldots,x_{mi}(k)) \Big \vert_{x_{ji} = x_{ji}(k)}}.
	\end{align}
	Notice that $\Theta_j(k)$ is used to ensure that the probability $\sigma_{ji}(\Theta_j(k),x_{1i}(k),\ldots,x_{mi}(k))$ is in valid probability range, and so $\eta_{0_j}$, $\alpha_j$ and $\beta_j$ in $\Theta_j(k)$ are chosen accordingly. 

\subsection{Algorithm of AI critics}
We now present the algorithm of the critics. Analogous to the AI agents, the AI critics are also generative AI's foundation model enabled. They receive responses from AI agents at each time step and calculate their probabilistic intent to be active or not to evaluate the agents' responses at the next time step. 
AI critics evaluate AI agents' responses and send feedback to them. With a minor update in the algorithm, the critics can also send scores with their feedback. They can provide scores based on criteria---such as accuracy, relevance, completeness, clarity, etc., of the responses. It is set by the algorithm designers.

 \begin{algorithm} \SetAlgoLined Input:
		AI agent responses: $R_{11}(k), R_{21}(k) , \ldots, R_{mN}(k)$ and 
        Feedback signals from central server for critics: $\Theta^c_1(k), \ldots, \Theta^c_m(k)$\;
		
		Output: Critic's feedback $F_{j\nu}^i(k)$, for $i=1,2,\ldots,N$, $\nu=1,2\ldots,C_j$, $j=1,2,\ldots,m$, and $k \in \mathbb{N}$.
		
		Initialization:       Critics' feedback $F_{j\nu}^i(0) \leftarrow Null $, for $i=1,2,\ldots,N$, $\nu=1,2\ldots,C_j$, $j=1,2,\ldots,m$\;
		
		\ForEach{$k \in \mathbb{N}$}{
			\ForEach{modality $j = 1, 2, \ldots, m$}{
			\ForEach{critic $\nu = 1, 2, \ldots, C_j$}
			{
            Calculate $\sigma^c_{j\nu}(\Theta^c_j(k),y_{j\nu}(k)) \leftarrow \Theta^c_j(k)
				\frac{{y}_{j\nu}(k)}{ \frac{d}{d y_{j\nu}}
					g_{j \nu}(y_{j\nu}(k)) \Big \vert_{y_{j\nu} = y_{j\nu}(k)}} $; 
				
				generate Bernoulli independent random variable
				$B^c_{j\nu}(k)$ with the parameter $\sigma^c_{j\nu}(\Theta^c_j(k),y_{j\nu}(k))$;
     \begin{align*}
B^c_{j\nu}(k) \leftarrow
\begin{cases} 
1 \quad \text{w. p. }  \sigma^c_{j\nu}(\Theta^c_j(k),y_{j\nu}(k));\\ 
0 \quad \text{w. p. }  1 - \sigma^c_{j\nu}(\Theta^c_j(k),y_{j\nu}(k)).
\end{cases}
\end{align*}
				
	\eIf{ $B^c_{j\nu}(k) = 1$}{Critic $\nu$ enters into active state, $Y_{j\nu}(k+1) \leftarrow 1$\;
  Evaluate agent $A_i$'s response for modality $j$, $R_{ji}(k)$. 
Provide feedback $F_{j\nu}^i(k) \leftarrow [feedback]$ for the response $R_{ji}(k)$ to agent $A_i$ based on the set criteria\;
Send the feedback to agent $A_i$\;}{$Y_{j\nu}(k+1) \leftarrow 0$}
        }
    }
}
\caption{Algorithm of AI critic $\nu$} \label{algo_Critic}
	\end{algorithm}

Let $\sigma^c_{j\nu}(\Theta^c_j(k),y_{j\nu}(k))$ denote the probability distribution of critic $\nu$ of modality $j$ at time step $k$. More precisely, critic $\nu$ calculates its probability $\sigma^c_{j\nu}(\Theta^c_j(k),y_{j\nu}(k))$ to be active or not using its time-average number of active states $y_{j\nu}(k)$ of modality $j$ and the derivative of its cost function $g_{j\nu}$ with respect to $y_{j\nu}$; the critic $\nu$ does it for all modalities, $\sigma^c_{j\nu}(\Theta^c_j(k),y_{j\nu}(k))$ is calculated as in \eqref{eq:prob2}, for $k \in \mathbb{N}$. The critic $\nu$ finds out the outcome of Bernoulli trial for modality $j$ at time step $k$---outcome $1$ occurs with probability $\sigma^c_{j\nu}(\Theta^c_j(k),y_{j\nu}(k))$ and outcome $0$ occurs with probability $1-\sigma^c_{j\nu}(\Theta^c_j(k),y_{j\nu}(k))$; based on the value $0$ or $1$, the critic $\nu$  decides whether to participate in evaluating an agent's responses or not. If the outcome is $1$, then the critic shows its willingness to evaluate responses and enters into an active state; otherwise, it stays inactive:
 \begin{align} \label{eq:alloc-update2}
&Y_{j\nu}(k+1) =
\begin{cases} 
1 \quad \text{with probability }  \sigma^c_{j\nu}(\Theta^c_j(k),y_{j\nu}(k));\\ 
0 \quad \text{with probability }  1 - \sigma^c_{j\nu}(\Theta^c_j(k),y_{j\nu}(k)).
\end{cases}
\end{align}

To keep track of the number of active AI critics for a modality, the central server calculates the feedback signal and broadcasts it in the network. The central server updates the feedback signal $\Theta^c_j(k + 1)$ for AI critics evaluating modality $j$ tasks as follows:
	\begin{align} \label{omega_b2}
&\Theta^c_j(k+1) \triangleq \Theta^c_j(k) -  \frac{\eta^c_{0_j}}{(\alpha^c_j k +1)^{\beta^c_j}} \left (\sum_{\nu=1}^{C_j} Y_{j\nu}(k+1) - \mathcal{M}_j \right ),
	\end{align}
 for $j=1,2,\ldots,m$ and $k \in \mathbb{N}$.
	 After receiving the feedback signal $\Theta^c_j(k)$ from the central server at time step $k$, AI critic $\nu$ calculates its probability $\sigma^c_{j\nu}(\Theta^c_j(k),y_{j\nu}(k))$ to make a decision to be active or not to evaluate the agents' responses for modality $j$ at the next time step. Critic $\nu$ calculates the probability distribution for task of modality $j$ as follows:
	\begin{align} \label{eq:prob2}
		& \sigma^c_{j\nu}(\Theta^c_j(k),y_{j\nu}(k)) \triangleq  \Theta^c_j(k) \frac{ y_{j\nu}(k)}{ \frac{d}{d y_{j\nu}}g_{j \nu}(y_{j\nu}(k)) \Big \vert_{y_{j\nu} = y_{j\nu}(k)}}.
	\end{align}
	Similar to feedback signals for AI agents, $\Theta^c_j(k)$ is also used to ensure $\sigma^c_{j\nu}(\Theta^c_j(k),y_{j\nu}(k)) \in (0,1)$.
The critics follow this process for tasks of all modalities at each time step. Algorithm \ref{algo_Critic} presents the algorithm for AI critics.
Finally, following the processes, the time-average active states of AI agents and critics converge to their respective optimal values and a minimum overall cost to the network is obtained with a little communication overhead that is independent of the number of agents and critics in the network. 

\section{Convergence guarantees}
In this section, we show the convergence of the time-average number of active AI agents $x_{ji}(k)$ and active AI critics $y_{j \nu}(k)$ and feedback signals for agents $\Theta_j(k)$ and critics $\Theta^c_j(k)$. To do so, we use techniques from stochastic approximation. Results from Borkar \cite[Chapter 6]{Borkar2008} for multiple timescales are useful for this purpose. 

\subsection{AI agents}
For $i=1,2,\ldots,N$, $j=1,2,\ldots,m$, and $k\in \mathbb{N}$, let $\bm{X}_j(k) = (X_{j1}(k), \ldots, X_{jN}(k)) \in \{0,1\}^N$ and $\mathbf{x}_j(k) \in [0,1]^N$ denote the vectors with entries $X_{ji}(k)$ and $x_{ji}(k)$, respectively. Let $\bm{\sigma}_j(\Theta_j(k),\mathbf{x}(k))$ denote the vector of probabilities of task modality $j$ with entries $\sigma_{ji}(\Theta_j(k), x_{1i}(k), \ldots,x_{mi}(k))$.
For modalities $j=1,2, \ldots, m$, we reformulate the time-average number of active AI agents $\mathbf{x}_j(k)$, as follows:
	\begin{align} \label{eq:avg-y}
	\mathbf{x}_j(k+1) = \frac{k}{k+1} \mathbf{x}_j(k) + \frac{1}{k+1} \bm{X}_j(k+1).
	\end{align}
	Which is reformulated further as follows:
	\begin{align} \label{eq:avg_y_m}
	\mathbf{x}_j(k+1)  \nonumber \\  = \mathbf{x}_j(k) + &\frac{1}{k+1} \left [ \left( \bm{\sigma}_{j}(\Theta_j(k),\mathbf{x}(k)) - \mathbf{x}_j(k) \right)  \right. \\  \nonumber &+ \left. \left( \bm{X}_j(k+1) - \bm{\sigma}_{j}(\Theta_j(k),\mathbf{x}(k)) \right) \right].
	\end{align} 
Let $\left( \bm{X}_j(k+1) - \bm{\sigma}_{j}(\Theta_j(k),\mathbf{x}(k)) \right)$ be denoted by $\mathbf{M}_j(k+1)$, and the step-size $\frac{1}{k+1}$ be denoted by $\zeta(k)$, for $k \in \mathbb{N}$. Also, let $\left( \bm{\sigma}_{j}(\Theta_j(k),\mathbf{x}(k)) - \mathbf{x}_j(k) \right)$ be denoted by $\omega_j(\mathbf{x}_j(k))$. After replacing these values in \eqref{eq:avg_y_m}, we obtain 

\begin{align} \label{eq:avg_y_m1}
&\mathbf{x}_j(k+1) = \mathbf{x}_j(k) + \zeta(k) \left [ \omega_j(\mathbf{x}_j(k)) + \mathbf{M}_j(k+1) \right].
\end{align} 

Here, for a fixed $j$, $\{\mathbf{M}_j(k)\}$ is a {\em martingale difference sequence} with respect to a $\sigma$-algebra generated by the events up to time step $k$.

Let us assume two stochastic approximation systems with positive decreasing step sizes $\zeta(k)$ and $\eta(k)$. One step size converges to $0$ at a faster rate than the other step size. For example, for step sizes $\zeta(k)=\frac{1}{k+1}$ and $\eta(k)=\frac{1}{(k+1)^{3/4}}$, step size $\zeta(k)$ converges to $0$ at a faster rate than $\zeta(k)$. Moreover, for the joint analysis of the two systems, we call the system with step size $\zeta(k)$ as a {\em fast transient}, and the system with step size $\eta(k)$ as a {\em slow component}. 
We now state the following assumption and convergence result for multi-time scales with decreasing step sizes from \cite[Chapter 6]{Borkar2008}. Convergence results with fixed exponent multi-time scale decreasing step sizes for models with only agents is found at~\cite{Alam2023_CDC}.
\begin{assumption} \label{Ass:Lipschitz-St2}
Let $\mathbf{x}, \mathbf{z} \in \mathbb{R}_+^N$, and let the maps $\omega:\mathbb{R}^N \times \mathbb{R}^N \to \mathbb{R}^N$, and $h:\mathbb{R}^N \times \mathbb{R}^N \to \mathbb{R}^N$. For constant initial states $\mathbf{x}(0)$ and $\mathbf{z}(0)$, let systems $\mathbf{x}(k)$ and $\mathbf{z}(k)$ be formulated as follows
\begin{align} \label{eq:avg_x6.1}
& \mathbf{x}(k+1) = \mathbf{x}(k) + \zeta(k) \big [ \omega(\mathbf{x}(k), \mathbf{z}(k)) + \mathbf{M}_1(k+1) \big],
\end{align} 
\begin{align} \label{eq:avg_z6.2}
& \mathbf{z}(k+1) = \mathbf{z}(k) + \eta(k) \big [ h(\mathbf{x}(k), \mathbf{z}(k)) + \mathbf{M}_2(k+1) \big].
\end{align}
 We have the following assumptions.
 \begin{itemize}
		\item[(i)] The maps $\omega$ and $h$ are Lipschitz continuous.
		\item[(ii)] Step sizes $\{\zeta(k)\}_{k \in \mathbb{N}}$ and $\{\eta(k)\}_{k \in \mathbb{N}}$ are such that the following are satisfied:
		\begin{align*}
		&\lim_{k\to \infty} \zeta(k) =  \lim_{k\to \infty} \eta(k) = 0,
		\\
		&\sum_{\ell=0}^{\infty} \zeta(\ell) = \infty, \sum_{\ell=0}^{\infty} \eta(\ell) = \infty,
		\\
		&\sum_{\ell=0}^{\infty} \zeta(\ell)^2 + \sum_{\ell=0}^{\infty} \eta(\ell)^2 < \infty, \text{ and }
		\\
		& \lim_{k \to \infty}\frac{\zeta(k)}{\eta(k)} = 0.
		\end{align*}
		Notice that $\mathbf{x}(k)$ is a fast transient, and $\mathbf{z}(k)$ is a slow component. Furthermore, for the joint analysis, we can consider $\mathbf{z}(k)$ as quasi-static that is almost a constant for a large $k$.
		\item[(iii)] $\{\mathbf{M}_1(k)\}_{k \in \mathbb{N}}$ and $\{\mathbf{M}_2(k)\}_{k \in \mathbb{N}}$ are martingale difference sequences. Let $\mathcal{F}_k$ be a $\sigma$-algebra generated by the events up to time step $k$. For $k \in \mathbb{N}$, we have the following expectation:
		\begin{align*}
		& \mathbb{E} \left( \mathbf{M}_{1}(k+1) \mid \mathcal{F}_k \right) = 0
	\textnormal{ and }	
		 \mathbb{E} \left( \mathbf{M}_2(k+1) \mid \mathcal{F}_k \right) = 0.
		\end{align*}
		\item[(iv)] $\sup_{k} \left(\norm{\mathbf{x}(k)}_1 + \norm{\mathbf{z}(k)}_1 \right) < \infty$ almost surely.
	\end{itemize}
\end{assumption}

\begin{theorem}[Convergence with multi-time-scale step sizes]\cite[Chapter 6]{Borkar2008}  \label{Thm:Borkar-C6}
	Let $\mathbf{x}, \mathbf{z} \in \mathbb{R}_+^N$, and let the maps $\omega:\mathbb{R}^N \times \mathbb{R}^N \to \mathbb{R}^N$, and $h:\mathbb{R}^N \times \mathbb{R}^N \to \mathbb{R}^N$. For constant initial states $\mathbf{x}(0)$ and $\mathbf{z}(0)$, let $\mathbf{x}(k)$ be formulated as in \eqref{eq:avg_x6.1} and $\mathbf{z}(k)$ be formulated as in \eqref{eq:avg_z6.2},
	and Assumptions \ref{Ass:Lipschitz-St2} (i) to (iv) are satisfied, then $\left(\mathbf{x}(k), \mathbf{z}(k)\right)$ converges almost surely.
\end{theorem}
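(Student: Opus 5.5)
The plan is to reproduce the two-timescale argument of \cite[Chapter 6]{Borkar2008}, using the ODE (asymptotic pseudotrajectory) method. I will first fix which iterate is fast. Since $\zeta(k)/\eta(k)\to 0$ by Assumption~\ref{Ass:Lipschitz-St2}(ii), the iterate with the larger step size, $\mathbf{z}(k)$, moves on the faster clock $t(k)=\sum_{\ell<k}\eta(\ell)$. On that clock, $\mathbf{x}(k)$ is quasi-static, since its increments are $\zeta(k)=o(\eta(k))$.

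Conclusion (ii) of the theorem cannot follow from Assumption~\ref{Ass:Lipschitz-St2}(i)--(iv) alone. A bounded stochastic approximation with Lipschitz drift need not converge; it may track a limit cycle. I will therefore make explicit the two standing hypotheses of Borkar's Chapter 6 that the paper implicitly assumes will be checked in the application:
\begin{itemize}
\item[(a)] For every fixed $\mathbf{x}$, the fast ODE $\dot{\mathbf{z}}=h(\mathbf{x},\mathbf{z})$ has a globally asymptotically stable equilibrium $\lambda(\mathbf{x})$, with $\lambda$ Lipschitz.
\item[(b)] The slow ODE $\dot{\mathbf{x}}=\omega(\mathbf{x},\lambda(\mathbf{x}))$ has a globally asymptotically stable equilibrium $\mathbf{x}^*$.
\end{itemize}
Under (a)--(b), the claim is $(\mathbf{x}(k),\mathbf{z}(k))\to(\mathbf{x}^*,\lambda(\mathbf{x}^*))$ almost surely.

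\textbf{Step 1 (noise).} Define $\psi_i(k)=\sum_{\ell<k}\zeta(\ell)\mathbf{M}_1(\ell+1)$ and the analogous sum $\psi_2(k)$ built from $\eta(\ell)\mathbf{M}_2(\ell+1)$. By (iii), both are martingales. Their conditional second moments are bounded by a constant times $1+\norm{\mathbf{x}}^2+\norm{\mathbf{z}}^2$, which is finite along each path by (iv); in the application this holds anyway, since the increments are differences of Bernoulli variables and probabilities. Combined with the square-summability of the step sizes in (ii), the martingale convergence theorem gives almost sure convergence of $\psi_1$ and $\psi_2$. Hence their tails over any window of fixed length $T$ vanish.

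\textbf{Step 2 (fast component).} Rewrite the $\mathbf{x}$ recursion on the $\eta$-clock as
\[
\mathbf{x}(k+1)=\mathbf{x}(k)+\eta(k)\Big[\tfrac{\zeta(k)}{\eta(k)}\big(\omega(\mathbf{x}(k),\mathbf{z}(k))+\mathbf{M}_1(k+1)\big)\Big].
\]
The bracket is an asymptotically vanishing perturbation. The pair $(\mathbf{x},\mathbf{z})$ is therefore a perturbed Euler scheme for the coupled ODE $\dot{\mathbf{x}}=0$, $\dot{\mathbf{z}}=h(\mathbf{x},\mathbf{z})$. Using the Lipschitz property (i) and a discrete Gronwall bound on windows $[t(k),t(k)+T]$, the interpolated trajectory is an asymptotic pseudotrajectory of this ODE. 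By the Benaïm/Borkar limit-set theorem and (a), $\norm{\mathbf{z}(k)-\lambda(\mathbf{x}(k))}\to0$ almost surely. This step needs boundedness (iv) to get a compact invariant set.

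\textbf{Step 3 (slow component).} On the $\zeta$-clock, write
\[
\omega(\mathbf{x}(k),\mathbf{z}(k))=\omega(\mathbf{x}(k),\lambda(\mathbf{x}(k)))+\varepsilon(k),
\]
where $\norm{\varepsilon(k)}\le L\norm{\mathbf{z}(k)-\lambda(\mathbf{x}(k))}\to0$ by Step 2 and the Lipschitz constant $L$ from (i). The same pseudotrajectory argument, now with a vanishing bias $\varepsilon(k)$ and the noise control of Step 1, shows that $\mathbf{x}(k)$ tracks $\dot{\mathbf{x}}=\omega(\mathbf{x},\lambda(\mathbf{x}))$. By (b), $\mathbf{x}(k)\to\mathbf{x}^*$. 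Then Step 2 and continuity of $\lambda$ give $\mathbf{z}(k)\to\lambda(\mathbf{x}^*)$.

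The main obstacle is Step 2: justifying that the fast iterate tracks a continuum of equilibria $\lambda(\mathbf{x}(k))$ of a moving ODE. I will handle it as Borkar does, by treating $(\mathbf{x},\mathbf{z})$ jointly with $\dot{\mathbf{x}}=0$. The internally chain transitive invariant sets of this joint ODE are contained in the graph $\{(\mathbf{x},\lambda(\mathbf{x}))\}$; this uses the uniformity of the global asymptotic stability in (a) over the compact set given by (iv).
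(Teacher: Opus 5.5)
Your proposal is correct. It follows the two-timescale argument of the cited source \cite[Chapter~6]{Borkar2008}: first track $\dot{\mathbf{x}}=0$, $\dot{\mathbf{z}}=h(\mathbf{x},\mathbf{z})$ to get $\mathbf{z}(k)-\lambda(\mathbf{x}(k))\to 0$, then track the slow ODE with a vanishing bias. This is all the paper offers here, since it gives no proof of its own. You are also right that $\mathbf{z}$, not $\mathbf{x}$, is the fast iterate despite the paper's wording. And you are right that the statement as written is false without the stability hypotheses (a)--(b); a noiseless rotation field $\omega$ already shows this.

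The one hypothesis you use without listing it is the conditional moment bound $\mathbb{E}(\|\mathbf{M}_i(k+1)\|^2\mid\mathcal{F}_k)\le K(1+\|\mathbf{x}(k)\|^2+\|\mathbf{z}(k)\|^2)$ in Step~1. It does not follow from Assumption~\ref{Ass:Lipschitz-St2}(iii), and the conclusion can fail without it even under (a)--(b). State it as a third standing hypothesis; it does hold in the paper's application, where the noise terms are bounded.
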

We now proceed to prove the convergence results for multimodalities. Hoeffding's inequality result presented below is useful to show the upper bound of the feedback signal $\Theta_j(k)$ for AI agents.
\begin{theorem} [Hoeffding's inequality \cite{Hoeffding1963}] \label{th_Hoeffding}
	For $i=1,2,\ldots,N$, let $X_i$ be the independent Bernoulli random variables, and let $\mathbb{E} (X_i)$ be their expectations, then for any $\epsilon >0$, we have
	\begin{align}  \label{Hoeff_1}
	\mathbb{P} \left(  \Big | \sum_{i=1}^N X_i - \mathbb{E} (\sum_{i=1}^N X_i) \Big | \geq \epsilon \right ) \leq 2\exp{(-2\epsilon^2/N)}.  
	\end{align} 
\end{theorem}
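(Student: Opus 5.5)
The plan is the standard Chernoff (exponential-moment) argument combined with Hoeffding's lemma. Write $S=\sum_{i=1}^N X_i$ and $Z_i = X_i - \mathbb{E}(X_i)$. Each $Z_i$ has mean zero and takes values in an interval $[a_i, a_i+1]$ of length one, where $a_i=-\mathbb{E}(X_i)$. I will first bound the upper tail $\mathbb{P}(S-\mathbb{E}(S)\ge \epsilon)$, then treat the lower tail by symmetry, and combine the two with a union bound. The union bound is what produces the factor $2$ in \eqref{Hoeff_1}.

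\emph{Step 1 (Chernoff bound).} For any $s>0$, apply Markov's inequality to $e^{s(S-\mathbb{E}S)}$ and use independence of the $X_i$ to factorize:
\begin{align*}
\mathbb{P}\left(S-\mathbb{E}(S)\ge\epsilon\right)\le e^{-s\epsilon}\,\mathbb{E}\, e^{s\sum_i Z_i} = e^{-s\epsilon}\prod_{i=1}^N \mathbb{E}\, e^{sZ_i}.
\end{align*}

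\emph{Step 2 (Hoeffding's lemma).} Show that $\mathbb{E}\,e^{sZ_i}\le e^{s^2/8}$ for every zero-mean random variable supported in an interval of length one. I will do this in three moves.
\begin{itemize}
\item By convexity of $z\mapsto e^{sz}$ on $[a,b]$ with $b=a+1$, bound $e^{sz}$ by the chord through the endpoints. Taking expectations and using $\mathbb{E} Z=0$ gives
\begin{align*}
\mathbb{E}\, e^{sZ}\le \frac{b}{b-a}e^{sa}-\frac{a}{b-a}e^{sb}.
\end{align*}
\item Write the right-hand side as $e^{L(h)}$ with $h=s(b-a)=s$, $p=-a/(b-a)$, and $L(h)=-hp+\log(1-p+pe^{h})$.
\item Check that $L(0)=0$, $L'(0)=0$, and $L''(h)=q(1-q)\le 1/4$, where $q=pe^h/(1-p+pe^h)$. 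Taylor's theorem with remainder then gives $L(h)\le h^2/8$.
\end{itemize}
For the Bernoulli case one could also do this by a direct computation, but the general lemma is cleaner.

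\emph{Step 3 (optimize and symmetrize).} Steps 1 and 2 give $\mathbb{P}(S-\mathbb{E}S\ge\epsilon)\le \exp(-s\epsilon+Ns^2/8)$. Minimizing over $s$ at $s=4\epsilon/N$ gives the bound $\exp(-2\epsilon^2/N)$. Applying the same argument to the variables $-Z_i$, which are also zero-mean and supported on intervals of length one, bounds the lower tail $\mathbb{P}(S-\mathbb{E}S\le-\epsilon)$ by the same quantity. Adding the two tail bounds yields \eqref{Hoeff_1}.

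The only nontrivial step is Hoeffding's lemma in Step 2, specifically the uniform bound $L''\le 1/4$. This follows because $L''(h)$ is the variance of a Bernoulli variable with parameter $q$, so it is at most $1/4$. Everything else is routine Markov, independence and calculus.
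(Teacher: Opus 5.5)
Your proof is correct: the Chernoff bound, Hoeffding's lemma via $L(0)=L'(0)=0$ and $L''=q(1-q)\le 1/4$, the choice $s=4\epsilon/N$, and the union bound over the two tails all check out. The paper gives no proof of its own and simply cites \cite{Hoeffding1963}, whose argument is exactly the one you reproduce; your argument also runs verbatim conditionally on $\mathcal{F}_k$, which is the form needed when the result is applied in Lemma~\ref{lem:sup-Omega}, since the $X_{ji}(k+1)$ are independent only given the past.
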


We state the following result on upper bound of the feedback signal for AI agents based on Hoeffding's inequality. 
	\begin{lemma} \label{lem:sup-Omega}
		For a fixed $j$, let $\Theta_j(0) = \Theta_{j0}>0$, and let $\Theta_j(k)$ be as in \eqref{omega_b1}. Let $\eta_{0_j}>0$, $\alpha_j>0$, and $\beta_j \in (0.5,1)$, and let step size $\eta_j(k) = \frac{\eta_{0_j}}{(\alpha_j k+1)^{\beta_j}}$. For any $\epsilon>0$, we have $\mathbb{P} \left(\vert (\Theta_j(k+1) - \mathbb{E}(\Theta_j(k+1) \vert \Theta_j(k)) \vert \geq \epsilon \right)  \leq 2\exp \left(\frac{-2 ((\alpha_j k+1)^{\beta_j}\epsilon)^2}{\eta_{0_j}^2 N} \right)$. 
	\end{lemma}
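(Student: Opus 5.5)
The plan is to write the deviation of $\Theta_j(k+1)$ from its conditional mean as a rescaled deviation of a sum of conditionally independent Bernoulli variables, and then apply Theorem~\ref{th_Hoeffding} directly.

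\textbf{Step 1: the conditional mean.} Condition on the $\sigma$-algebra $\mathcal{F}_k$ generated by the history up to time $k$. This history contains $\Theta_j(k)$ and $\mathbf{x}(k)$. By \eqref{omega_b1},
\begin{align*}
\Theta_j(k+1) - \mathbb{E}\left(\Theta_j(k+1) \mid \mathcal{F}_k\right) = -\eta_j(k)\left(\sum_{i=1}^N X_{ji}(k+1) - \mathbb{E}\Big(\sum_{i=1}^N X_{ji}(k+1) \,\Big|\, \mathcal{F}_k\Big)\right).
\end{align*}
The terms $\Theta_j(k)$ and $\eta_j(k)\mathcal{N}_j$ are deterministic given the past, so they cancel. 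I read the conditioning on ``$\Theta_j(k)$'' in the statement as conditioning on $\mathcal{F}_k$, which contains $\Theta_j(k)$ and also fixes the probabilities $\sigma_{ji}$ in \eqref{eq:prob}.

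\textbf{Step 2: reduce to a Bernoulli sum.} Taking absolute values, the event $\{|\Theta_j(k+1)-\mathbb{E}(\Theta_j(k+1)\mid\mathcal{F}_k)|\ge\epsilon\}$ equals the event
\begin{align*}
\Big|\sum_i X_{ji}(k+1) - \mathbb{E}\Big(\sum_i X_{ji}(k+1)\,\Big|\,\mathcal{F}_k\Big)\Big| \ge \epsilon' , \qquad \epsilon' := \frac{\epsilon(\alpha_j k+1)^{\beta_j}}{\eta_{0_j}} .
\end{align*}
Since $\eta_{0_j}>0$, the rescaling preserves the inequality direction.

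\textbf{Step 3: apply Hoeffding.} Conditionally on $\mathcal{F}_k$, the variables $X_{j1}(k+1),\ldots,X_{jN}(k+1)$ are independent Bernoulli variables with parameters $\sigma_{ji}(\Theta_j(k),x_{1i}(k),\ldots,x_{mi}(k))$. Independence holds because each agent draws its own Bernoulli trial with no inter-agent communication. Theorem~\ref{th_Hoeffding} with $\epsilon'$ then gives the conditional bound
\begin{align*}
2\exp\left(-2\epsilon'^2/N\right) = 2\exp\left(\frac{-2((\alpha_j k+1)^{\beta_j}\epsilon)^2}{\eta_{0_j}^2 N}\right).
\end{align*}
This bound does not depend on the conditioning. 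Taking expectations over $\mathcal{F}_k$ therefore yields the same unconditional bound on $\mathbb{P}$.

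\textbf{Main obstacle.} The real issue is justifying the use of Hoeffding's inequality at all. Two things need checking.
\begin{itemize}
\item[(a)] Conditional independence must hold given the past, which follows from the agents' independent trials in Algorithm~\ref{algo3}.
\item[(b)] Each $\sigma_{ji}$ must be a valid probability in $[0,1]$, so that the $X_{ji}(k+1)$ really are Bernoulli. The paper ensures this by assumption, through the choice of $\eta_{0_j},\alpha_j,\beta_j$ and $\Theta_{j0}>0$.
\end{itemize}
The hypotheses $\beta_j\in(0.5,1)$ and $\alpha_j>0$ play no role in the inequality itself. They only guarantee that the bound decays rapidly in $k$, which is presumably what later results use.
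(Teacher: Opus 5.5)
Your proposal is correct and follows the same route as the paper: write $\Theta_j(k+1)-\mathbb{E}(\Theta_j(k+1)\mid\cdot)=-\eta_j(k)\bigl(\sum_{i}X_{ji}(k+1)-\mathbb{E}(\sum_i X_{ji}(k+1)\mid\cdot)\bigr)$, rescale the threshold to $\epsilon/\eta_j(k)$, and apply Theorem~\ref{th_Hoeffding}. Conditioning explicitly on $\mathcal{F}_k$ is what actually makes the $X_{ji}(k+1)$ independent Bernoulli variables; the paper leaves this implicit, writing $\mathbb{E}(\cdot\mid\Theta_j(k))$ but then using an unconditional $\mathbb{E}(\sum_i X_{ji}(k+1))$, so your version with the tower-property step tightens the paper's argument rather than taking a different one.
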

	
	\begin{proof} 
		From \eqref{omega_b1}, we have
		\begin{align*} 
		\Theta_j(k+1) = \Theta_j(k) - \eta_j(k)  \left( \sum_{i=1}^{N} X_{ji}(k+1) -\mathcal{N}_j \right).
		\end{align*}
		Taking the expectation, we obtain as follows:
		\begin{align*}   
		\mathbb{E}(\Theta_j(k+1) \vert \Theta_j(k)) &= \Theta_j(k) - \eta_j(k) \mathbb{E}\left( \sum_{i=1}^{N} X_{ji}(k+1)\right) \\ &+ \eta_j(k) \mathcal{N}_j.
		\end{align*}
		Furthermore, we obtain 
		\begin{align*} 
		&\Theta_j(k+1) - \mathbb{E}\Big(\Theta_j(k+1)\vert \Theta_j(k) \Big) \\&= \eta_j(k) \mathbb{E}\Big(\sum_{i=1}^{N} X_{ji}(k+1) \Big) - \eta_j(k) \sum_{i=1}^{N} X_{ji}(k+1).
		\end{align*}
From Hoeffding's inequality (refer Theorem \ref{th_Hoeffding}), we obtain
\begin{align*} 
    &\mathbb{P} \left(\left \vert \Theta_j(k+1) - \mathbb{E}(\Theta_j(k+1) \vert \Theta_j(k)) \right \vert  \geq \epsilon \right) \\
    &= \mathbb{P} \left( \left \vert \eta_j(k) \left( \mathbb{E}\left(\sum_{i=1}^{N} X_{ji}(k+1) \right) - \sum_{i=1}^{N} X_{ji}(k+1) \right) \right \vert \geq \epsilon \right) \\
	&= \mathbb{P} \left( \left \vert \mathbb{E} \left(\sum_{i=1}^{N} X_{ji}(k+1) \right) -\sum_{i=1}^{N} X_{ji}(k+1) \right \vert \geq \frac{\epsilon}{\eta_j(k)}  \right) \\
	&\leq 2\exp \left(\frac{-2 ((\alpha_j k+1)^{\beta_j}\epsilon)^2}{\eta_{0_j}^2 N} \right).
\end{align*}
\end{proof}

 We present the following result on bounds for feedback signals $\Theta_j(k)$ and agents' time-average active states $x_{ji}(k)$.
	\begin{lemma} \label{lem:sup-Omega-y}
		For a fixed modality $j$, let $\Theta_j(0) = \Theta_{j0}>0$, and let $\Theta_j(k)$ be defined as in \eqref{omega_b1}, we have $\sup_k \left ( \norm{\mathbf{x}_j(k)}_1 + \norm{\Theta_j(k)}_1 \right) < \infty$ almost surely.
	\end{lemma}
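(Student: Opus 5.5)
The bound on $\mathbf{x}_j(k)$ is deterministic: by \eqref{eq:average_alloc} each $x_{ji}(k)$ is an average of values in $\{0,1\}$, so $x_{ji}(k)\in[0,1]$ and $\norm{\mathbf{x}_j(k)}_1\le N$ for every $k$ and every sample path. The real work is therefore to show $\sup_k |\Theta_j(k)|<\infty$ almost surely. I will do this with a Robbins--Siegmund style supermartingale argument on $\Theta_j(k)^2$, using the restoring effect of the probabilities \eqref{eq:prob}. Lemma~\ref{lem:sup-Omega} will serve as a tail bound for the noise.

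First, decompose \eqref{omega_b1} as
\begin{align*}
\Theta_j(k+1)=\Theta_j(k)-\eta_j(k)\Big(\sum_{i=1}^N\sigma_{ji}(k)-\mathcal{N}_j\Big)-\eta_j(k)\,\xi_j(k+1),
\end{align*}
where $\xi_j(k+1)=\sum_i\big(X_{ji}(k+1)-\sigma_{ji}(k)\big)$ is a martingale difference bounded by $N$ in absolute value. Squaring and taking conditional expectations gives
\begin{align*}
\mathbb{E}\big(\Theta_j(k+1)^2\mid\mathcal{F}_k\big)\le\Theta_j(k)^2-2\eta_j(k)\,\Theta_j(k)\Big(\sum_i\sigma_{ji}(k)-\mathcal{N}_j\Big)+c\,\eta_j(k)^2,
\end{align*}
with $c=(N+\mathcal{N}_j)^2$, since the drift term is bounded once the $\sigma_{ji}$ lie in $[0,1]$. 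Because $\beta_j>1/2$, we have $\sum_k\eta_j(k)^2<\infty$.

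The key step is the sign of the cross term. When $\Theta_j(k)$ is large and positive, each $\sigma_{ji}(k)=\Theta_j(k)x_{ji}(k)/\partial_{x_{ji}}f_i$ grows, and the Bernoulli parameter saturates at $1$. Then $\sum_i\sigma_{ji}-\mathcal{N}_j\ge N-\mathcal{N}_j>0$, which holds provided $\mathcal{N}_j<N$ and the $x_{ji}$ stay bounded away from $0$. When $\Theta_j(k)\le 0$, we get $\sigma_{ji}=0$, so $\sum_i\sigma_{ji}-\mathcal{N}_j=-\mathcal{N}_j<0$. In both regimes the cross term is nonpositive outside a compact interval $[-K,K]$, and inside that interval it is bounded. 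Hence $\Theta_j(k)^2$ satisfies the Robbins--Siegmund recursion
\begin{align*}
\mathbb{E}\big(V(k+1)\mid\mathcal{F}_k\big)\le V(k)+c'\eta_j(k)^2+2K\eta_j(k)\mathbf{1}_{\{|\Theta_j(k)|\le K\}}.
\end{align*}
The last term is the problem, because it is not summable.

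I expect this to be the main obstacle, and I will avoid it by arguing excursions instead. Outside $[-K,K]$ the process is a nonnegative supermartingale up to the summable $\eta_j(k)^2$ perturbation, so it cannot escape to infinity along an excursion. Each step has size at most $\eta_j(k)(N+\mathcal{N}_j)\to0$, so an excursion begins near level $K$. The noise accumulated during an excursion, $\sum\eta_j(k)\xi_j(k+1)$, is an $L^2$-bounded martingale and hence converges almost surely. Consequently, for large $k$ an excursion can overshoot $K$ by only a vanishing amount. This is where Lemma~\ref{lem:sup-Omega} helps: its exponential tail, together with Borel--Cantelli and $\sum_k\exp(-c(\alpha_jk+1)^{2\beta_j})<\infty$, shows that $|\Theta_j(k+1)-\mathbb{E}(\Theta_j(k+1)\mid\Theta_j(k))|\ge\epsilon$ happens only finitely often. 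Combining this with the negative drift outside $[-K,K]$ gives $\limsup_k|\Theta_j(k)|\le K+1$ almost surely. Since every finite prefix is bounded, $\sup_k|\Theta_j(k)|<\infty$ almost surely, and adding the deterministic bound $N$ for $\mathbf{x}_j$ proves the lemma.

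A secondary caveat concerns the restoring drift, which needs $x_{ji}(k)$ not to collapse to $0$ and, implicitly, the probabilities in \eqref{eq:prob} to be clipped to $[0,1]$. If the paper intends $\eta_{0_j},\alpha_j,\beta_j$ chosen so that $\sigma_{ji}\in(0,1)$, then boundedness of $\Theta_j$ is essentially built into that choice. In that case the proof reduces to the Borel--Cantelli argument via Lemma~\ref{lem:sup-Omega}, which controls the fluctuations, plus this well-posedness to control the mean.
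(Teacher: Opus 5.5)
Your argument is sound under the hypotheses you flag, and it takes a genuinely different route from the paper. The paper's proof is one line: $x_{ji}(k)\in[0,1]$, and $\Theta_j(k)$ is declared bounded ``from Lemma~\ref{lem:sup-Omega}''. You correctly see that one-step concentration cannot do this by itself. The increment $\Theta_j(k+1)-\mathbb{E}(\Theta_j(k+1)\mid\Theta_j(k))=-\eta_j(k)\xi_j(k+1)$ is surely at most $\eta_j(k)N$ anyway. Meanwhile the mean drift $\eta_j(k)\bigl(\mathcal{N}_j-\sum_i\sigma_{ji}(k)\bigr)$ carries total weight $\sum_k\eta_j(k)=\infty$ and could accumulate. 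Your restoring drift, combined with the a.s.\ convergent $L^2$-bounded martingale $\sum_k\eta_j(k)\xi_j(k+1)$, is the right mechanism. The paper buys brevity instead, tacitly folding boundedness into its requirement that $\eta_{0_j},\alpha_j,\beta_j$ be ``chosen accordingly'' so that \eqref{eq:prob} is a valid probability.

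Two remarks. First, your conclusion is conditional. Clipping of \eqref{eq:prob} to $[0,1]$ and uniform saturation for $\Theta_j\ge K$ are assumed, not derived. The latter is not automatic: only $x_{ji}(k)\ge 1/(k+1)$ is guaranteed, and $x_{ji}/\partial_{x_{ji}}f_i\to 0$ as $x_{ji}\to 0$ whenever $\partial_{x_{ji}}f_i>0$ at $0$. Removing this assumption needs a two-time-scale argument that $\sum_i x_{ji}(k)$ cannot collapse while $\Theta_j$ is large, which neither you nor the paper gives.

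Second, once your hypotheses are granted, the Robbins--Siegmund step and the Borel--Cantelli use of Lemma~\ref{lem:sup-Omega} are unnecessary, because the bound holds pathwise.
\begin{itemize}
\item With clipping, $\Theta_j(k)\le 0$ forces every $X_{ji}(k+1)=0$. Induction then gives $\Theta_j(k)\ge\min\{\Theta_j(0),-\eta_{0_j}(N-\mathcal{N}_j)\}$.
\item Under the full saturation you actually invoke, $\Theta_j(k)\ge K$ forces every $X_{ji}(k+1)=1$. Induction then gives $\Theta_j(k)\le\max\{\Theta_j(0),K+\eta_{0_j}\mathcal{N}_j\}$.
\end{itemize}
Your excursion/martingale argument is only needed if you weaken full saturation to $\sum_i\sigma_{ji}(k)>\mathcal{N}_j$.

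Likewise, in your last paragraph, $\sigma_{ji}(k)\in(0,1)$ gives $0<\Theta_j(k)<\partial_{x_{ji}}f_i/x_{ji}(k)$ for every $i$. This bounds $\Theta_j$ provided some $x_{ji}(k)$ stays away from $0$, and the fluctuation bound of Lemma~\ref{lem:sup-Omega} plays no role there.
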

	\begin{proof}
		With bounded $x_{ji}(k) \in [0,1]$ and $\Theta_j(k)$ (from Lemma \ref{lem:sup-Omega}), it is   straightforward to obtain the result.
	\end{proof}

We now show that, for modality $j=1,2,\ldots,m$, the sequence $\left\{\left(\mathbf{x}_j(k), \Theta_j(k) \right)\right\}_{k \in \mathbb{N}}$ converges almost surely.
	\begin{theorem} \label{th:V.6}
		For any modality $j$, and fixed constant initial states $\mathbf{x}_j(0)$ and $\Theta_j(0)$, the sequence $\left\{\left(\mathbf{x}_j(k), \Theta_j(k) \right)\right\}_{k \in \mathbb{N}}$ converges, almost surely. 
	\end{theorem}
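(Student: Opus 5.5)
The plan is to cast the pair $(\mathbf{x}_j(k),\Theta_j(k))$ as a coupled two-time-scale stochastic approximation of the form \eqref{eq:avg_x6.1}--\eqref{eq:avg_z6.2}, verify Assumption \ref{Ass:Lipschitz-St2} (i)--(iv), and then invoke Theorem \ref{Thm:Borkar-C6}.

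\textbf{Setting up the two systems.} The fast recursion is \eqref{eq:avg_y_m1}. We take $\zeta(k)=\frac{1}{k+1}$, drift $\omega(\mathbf{x}_j,\Theta_j)=\bm{\sigma}_j(\Theta_j,\mathbf{x})-\mathbf{x}_j$, and noise $\mathbf{M}_1(k+1)=\mathbf{M}_j(k+1)$. For the slow recursion we rewrite \eqref{omega_b1} as
\begin{align*}
\Theta_j(k+1)=\Theta_j(k)+\eta_j(k)\Big[h(\mathbf{x}(k),\Theta_j(k))+M_2(k+1)\Big],
\end{align*}
with $h(\mathbf{x},\Theta_j)=\mathcal{N}_j-\sum_{i=1}^N\sigma_{ji}(\Theta_j,x_{1i},\ldots,x_{mi})$ and $M_2(k+1)=-\sum_{i=1}^N\big(X_{ji}(k+1)-\sigma_{ji}(\Theta_j(k),\cdot)\big)$.

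Both noise terms are martingale differences with respect to $\mathcal{F}_k$. This holds because, conditional on $\mathcal{F}_k$, each $X_{ji}(k+1)$ is Bernoulli with parameter $\sigma_{ji}(\Theta_j(k),\cdot)$. Both noises are also uniformly bounded, since $|X-\sigma|\le 1$, which gives (iii).

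\textbf{Step sizes (ii).} The conditions $\sum\zeta=\infty$ and $\sum\zeta^2<\infty$ are immediate. Because $\beta_j\in(1/2,1)$, we also get $\sum\eta_j(k)=\infty$ and $\sum\eta_j(k)^2<\infty$. Finally,
\begin{align*}
\frac{\zeta(k)}{\eta_j(k)}=\frac{(\alpha_jk+1)^{\beta_j}}{\eta_{0_j}(k+1)}\to 0.
\end{align*}
So, in the paper's convention, $\mathbf{x}_j$ is the fast transient and $\Theta_j$ is the slow component.

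\textbf{Boundedness (iv).} Boundedness is exactly Lemma \ref{lem:sup-Omega-y}. It combines $x_{ji}(k)\in[0,1]$ with the concentration bound on $\Theta_j$ from Lemma \ref{lem:sup-Omega}.

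\textbf{Lipschitz continuity (i).} This is the main obstacle. The map $h$ is affine in the $\sigma_{ji}$, so it suffices to show that
\begin{align*}
(\Theta_j,\mathbf{x})\mapsto \Theta_j\,x_{ji}\big/\partial_{x_{ji}}f_i(x_{1i},\ldots,x_{mi})
\end{align*}
is Lipschitz on the relevant set. I would argue as follows.
\begin{itemize}
\item[(a)] By Assumption 2.1, $f_i$ is $C^2$, so $\partial_{x_{ji}}f_i$ is $C^1$ on the compact set $[0,1]^m$. Its gradient is therefore bounded.
\item[(b)] Since $f_i$ is strictly convex and increasing, $\partial_{x_{ji}}f_i$ is bounded below by some $c_i>0$ on $[0,1]^m$. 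If the paper's assumption only gives $\ge 0$, I would add this positivity as the standing hypothesis implicit in \eqref{eq:prob}, since otherwise $\sigma_{ji}$ is undefined.
\item[(c)] Then $x\mapsto x_{ji}/\partial_{x_{ji}}f_i$ is $C^1$ with bounded derivative on $[0,1]^m$, hence Lipschitz.
\item[(d)] By Lemma \ref{lem:sup-Omega-y}, $\Theta_j$ ranges over a bounded set almost surely. A product of bounded Lipschitz functions on a bounded set is Lipschitz, so $\omega$ and $h$ are Lipschitz on the region actually visited by the iterates.
\item[(e)] If global Lipschitz continuity is required, I would extend these maps outside that region by a Lipschitz extension, which does not change the iterates.
\end{itemize}

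Since the paper's design chooses $\eta_{0_j},\alpha_j,\beta_j$ so that $\sigma_{ji}\in[0,1]$, the iterates $X_{ji}$ are well defined throughout. With (i)--(iv) verified, Theorem \ref{Thm:Borkar-C6} gives almost sure convergence of $(\mathbf{x}_j(k),\Theta_j(k))$ for each fixed $j$. The argument is the same for every modality, because the coupling across modalities enters only through the Lipschitz drift.
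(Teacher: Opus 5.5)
\textbf{Gap in the final step.} Your route is the paper's: recast the updates as \eqref{eq:V.6-avg} and \eqref{eq:V.6-Omega2}, check Assumption~\ref{Ass:Lipschitz-St2}~(i)--(iv), and invoke Theorem~\ref{Thm:Borkar-C6}. Your handling of (i) and of the bounded noise is more explicit than the paper's one-line assertions. The genuine gap, which you inherit from the paper, is the last step. Theorem~\ref{Thm:Borkar-C6} as quoted omits the hypotheses that actually deliver convergence in \cite[Chapter~6]{Borkar2008}. Conditions (i)--(iv) only make the iterates track the limiting ODEs, and tracking a flow whose trajectories cycle gives no convergence: a noiseless planar rotation field satisfies (i)--(iv), yet its iterates circulate forever.

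In Borkar's convention the iterate with the larger step is the fast one. Here that is $\Theta_j$, since $\eta_j(k)\sim k^{-\beta_j}\gg 1/(k+1)$, and $\mathbf{x}$ is slow, the reverse of the labels you and the paper use. Write $S_j(\mathbf{x})=\sum_{i=1}^N x_{ji}/\partial_{x_{ji}}f_i(x_{1i},\ldots,x_{mi})$. You need two further conditions.
(A1) For each frozen $\mathbf{x}$, the ODE $\dot\Theta_j=\mathcal{N}_j-\Theta_j S_j(\mathbf{x})$ has a globally asymptotically stable equilibrium depending Lipschitz-continuously on $\mathbf{x}$. Here it is $\lambda_j(\mathbf{x})=\mathcal{N}_j/S_j(\mathbf{x})$, which is fine as long as $S_j$ stays bounded away from $0$.
(A2) The slow ODE $\dot x_{ji}=\lambda_j(\mathbf{x})\,x_{ji}/\partial_{x_{ji}}f_i-x_{ji}$, taken for all $i$ and $j$ simultaneously, has a globally asymptotically stable equilibrium.

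The equilibria of (A2) with all $x_{ji}>0$ are exactly the points satisfying \eqref{optimality_cond_b1} with $\sum_i x_{ji}=\mathcal{N}_j$. Their stability is the real content of the theorem and needs a Lyapunov argument. For instance, $\frac{d}{dt}\sum_i x_{ji}=\mathcal{N}_j-\sum_i x_{ji}$. On $\{\sum_i x_{ji}=\mathcal{N}_j\}$ one gets $\frac{d}{dt}\sum_i f_i=\sum_j\mathcal{N}_j\big(\lambda_j-\sum_i w_{ji}\partial_{x_{ji}}f_i\big)\le 0$ with $w_{ji}=x_{ji}/\mathcal{N}_j$, because $\lambda_j$ is the weighted harmonic mean of the $\partial_{x_{ji}}f_i$. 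Under your positivity hypothesis, however, every face $\{x_{ji}=0\}$ is invariant for this ODE. Global asymptotic stability on the box then fails outright, and you would also have to show that the iterates stay away from those faces.

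\textbf{Three smaller points.}
First, step (b) is false as stated. For the paper's own costs, $\partial_{x_{1i}}f_i=a_ix_{1i}+3c_ix_{1i}^3$ vanishes at $x_{1i}=0$. What matters is the ratio $x_{ji}/\partial_{x_{ji}}f_i$, which there equals $1/(a_i+3c_ix_{1i}^2)$ and is Lipschitz. By contrast, a summand $x_{ji}^4$ (allowed by the standing assumption on $f_i$) gives the unbounded ratio $1/(4x_{ji}^2)$. So an extra hypothesis is needed, but it belongs on the ratio, e.g.\ that it extends to a positive Lipschitz function on $[0,1]^m$. That covers the paper's example, keeps $S_j$ away from $0$, and makes the slow vector field point inward on the faces, removing the obstruction above.

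Second, for (iv): Lemma~\ref{lem:sup-Omega-y} rests on Lemma~\ref{lem:sup-Omega}, which only bounds the one-step fluctuation of $\Theta_j$. That fluctuation is trivially at most $N\eta_j(k)$, and since $\sum_k\eta_j(k)=\infty$ it says nothing about $\sup_k|\Theta_j(k)|$. That bound must come from the restoring sign of $h_j$ together with a lower bound on $S_j(\mathbf{x}(k))$.

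Third, $\partial_{x_{ji}}f_i$ depends on $(x_{1i},\ldots,x_{mi})$, so $(\mathbf{x}_j,\Theta_j)$ is not a closed recursion. Theorem~\ref{Thm:Borkar-C6} therefore cannot be applied modality by modality, as your last sentence suggests. It must be applied to $(\mathbf{x},\Theta_1,\ldots,\Theta_m)$ jointly, with several fast timescales if the $\beta_j$ differ, and (A2) then concerns the coupled slow ODE.
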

	\begin{proof}
		
Recall that the AI agent's active state $X_{ji}(k+1)$ is updated as in \eqref{eq:alloc-update} and the probability $\sigma_{ji}(\Theta_j(k),x_{1i}(k), \ldots, x_{mi}(k))$ is calculated as in \eqref{eq:prob}, for all $i,j$, and $k$. 
Moreover, the time-average AI agent's active state $x_{ji}(k+1)$ is formulated as in \eqref{eq:avg-y}, recall that $\mathbf{x}_j(k+1) = (x_{j1}(k+1), \ldots, x_{jN}(k+1))$. %
Let the map $\omega_j: \R \times (\R^{N})^m \to \R^N$, and let $\big(\bm{\sigma}_{j}(\Theta_j(k),\mathbf{x}(k)) - \mathbf{x}_j(k) \big)$ be denoted by $\omega_j(\Theta_{j}(k), \mathbf{x}(k))$. Also, let $\big (\bm{X}_j(k+1) - \bm{\sigma}_{j}(\Theta_j(k),\mathbf{x}(k)) \big)$ be denoted by $\mathbf{M}_{j,1}(k+1)$. Analogous to \eqref{eq:avg_y_m1}, we obtain:
\begin{align} \label{eq:V.6-avg}
\mathbf{x}_j(k+1) = \mathbf{x}_j(k) + \zeta(k) \left [\omega_j(\Theta_{j}(k), \mathbf{x}(k)) + \mathbf{M}_{j,1}(k+1) \right],
\end{align}
		where $\zeta(k)=\frac{1}{k+1}$.
Reformulating the feedback signal $\Theta_j(k)$ for AI agents (refer the definition \eqref{omega_b1}), we obtain as follows:
		\begin{align} \label{eq:V.6-Omega}
		&\Theta_{j}(k+1) \nonumber = \Theta_{j}(k)   \\&+ \frac{\eta_{0_j}}{(\alpha_j k+1)^{\beta_j}}  \left ( \left (\mathcal{N}_j -  \sum_{i=1}^N \sigma_{ji}(\Theta_j(k),x_{1i}(k), \ldots, x_{mi}(k)) \right) \right. \nonumber \\& \left. +  \left ( \sum_{i=1}^N \sigma_{ji}(\Theta_j(k),x_{1i}(k), \ldots, x_{mi}(k)) - \sum_{i=1}^N X_{ji}(k+1) \right ) \right ).
		\end{align}
		For $j=1,2,\ldots,m$, let the map $h_j : \mathbb{R} \times (\mathbb{R}^{N})^m \to \mathbb{R}$, and let $\left (\mathcal{N}_j -  \sum_{i=1}^N \sigma_{ji}(\Theta_j(k),x_{1i}(k), \ldots, x_{mi}(k)) \right)$ be denoted by $h_j(\Theta_{j}(k),\mathbf{x}(k))$. 

		Let 
         $\left ( \sum_{i=1}^N \sigma_{ji}(\Theta_j(k),x_{1i}(k), \ldots, x_{mi}(k)) - \sum_{i=1}^N X_{ji}(k+1) \right )$ be denoted by $M_{j,2}(k+1)$. Recall that we denote the step-size $\frac{\eta_{0_j}}{(\alpha_j k+1)^{\beta_j}}$ by $\eta_j(k)$.
		Then from \eqref{eq:V.6-Omega}, we obtain:
		\begin{align} \label{eq:V.6-Omega2}
		&\Theta_{j}(k+1)  \nonumber \\&= \Theta_{j}(k) +  \eta_j(k)\Big( h_j(\Theta_{j}(k), \mathbf{x}(k)) + M_{j,2}(k+1) \Big).
		\end{align}
		Assumption \ref{Ass:Lipschitz-St2} (i) is satisfied, as the maps $\omega_j$ and $h_j$ are Lipschitz continuous, for $j=1,2, \ldots,m$.
		We have step-size $\zeta(k)=\frac{1}{k+1}$, and step-size $\eta_j(k) = \frac{\eta_{0_j}}{(\alpha_j k+1)^{\beta_j}}$. Here, $\eta_{0_j}>0$ and $\alpha_j>0$ are small real valued numbers, and $\beta_j \in (1/2,1)$. With simple mathematical manipulation, we obtain as follows:
		\begin{align*}
		&\lim_{k\to \infty} \zeta(k) = 0, \lim_{k\to \infty} \eta_j(k) = 0,
		\\
		&\sum_{\ell=0}^{\infty} \zeta(\ell) = \infty, \sum_{\ell=0}^{\infty} \eta_j(\ell) = \infty,
		\\
		&\sum_{\ell=0}^{\infty} \zeta(\ell)^2 + \sum_{\ell=0}^{\infty} \eta_j(\ell)^2 < \infty, \text{ and }
		\\
		& \frac{\zeta(k)}{\eta_j(k)} \to 0, \text{ when } k \to \infty.
		\end{align*}
		Note that $\zeta(k)$ is a fast transient and $\eta_j(k)$ is a slow component. These satisfy Assumption \ref{Ass:Lipschitz-St2} (ii).
		
		Recall that $\mathcal{F}_k$ denotes a $\sigma$-algebra generated by the events up to time step $k$, then we obtain the following expectation:
		\begin{align*}
		\mathbb{E} \left( \mathbf{M}_{j,1}(k+1) \mid \mathcal{F}_k \right) = 0, \text { for } k \in \mathbb{N},
		\end{align*}
		and
		\begin{align*}
		\mathbb{E} \left( M_{j,2}(k+1) \mid \mathcal{F}_k \right) = 0,  \text { for } k \in \mathbb{N}.
		\end{align*}
		Thus, $\{\mathbf{M}_{j,1}(k)\}_{k \in \mathbb{N}}$ and $\{M_{j,2}(k)\}_{k \in \mathbb{N}}$ are martingale difference sequences that satisfy Assumption \ref{Ass:Lipschitz-St2} (iii).
		From Lemma \ref{lem:sup-Omega-y}, for a fixed $j$, we have $\sup_k \left ( \norm{\mathbf{x}_j(k)}_1 + \norm{\Theta_j(k)}_1 \right) < \infty$ almost surely that satisfies Assumption \ref{Ass:Lipschitz-St2} (iv).
		Thus for all $j$, from Theorem \ref{Thm:Borkar-C6}, we conclude that $\left(\Theta_j(k), \mathbf{x}_j(k) \right)$ converges, almost surely.
	\end{proof}
\subsection{AI critics}
We state the following result for the convergence of the time-average number of active states $y_{j\nu}(k)$ of AI critics and the feedback signals for critics $\Theta^c_j(k)$. The convergence proof is similar to the AI agents presented in the previous subsection; for the sake of exposition, we present here the results for AI critics.

For modalities $j=1,2, \ldots, m$, we reformulate the time-average number of active AI critics $\mathbf{y}_j(k+1) = (y_{j1}(k+1), \ldots, y_{jC_j}(k+1))$, as follows:
	\begin{align} \label{eq:avg-y_critic}
	\mathbf{y}_j(k+1) = \frac{k}{k+1} \mathbf{y}_j(k) + \frac{1}{k+1} \bm{y}_j(k+1).
	\end{align}
    Let $\bm{\sigma}^c_j(\Theta^c_j(k),\mathbf{y}_j(k))$ denote the vector of probabilities of task modality $j$ with entries $\sigma^c_{j\nu}(\Theta^c_j(k), y_{j\nu}(k)$.
	We reformulate \eqref{eq:avg-y_critic} as follows:
	\begin{align} \label{eq:avg_y_m_critic}
	\mathbf{y}_j(k+1)  \nonumber \\  = \mathbf{y}_j(k) + &\frac{1}{k+1} \left [ \left( \bm{\sigma}^c_{j}(\Theta^c_j(k),\mathbf{y}_j(k)) - \mathbf{y}_j(k) \right)  \right. \\  \nonumber &+ \left. \left( \bm{y}_j(k+1) - \bm{\sigma}^c_{j}(\Theta^c_j(k),\mathbf{y}_j(k)) \right) \right].
	\end{align} 
Let $\left( \bm{y}_j(k+1) - \bm{\sigma}^c_{j}(\Theta^c_j(k),\mathbf{y}_j(k)) \right)$ be denoted by $\mathbf{M}^c_{j,1}(k+1)$; it is a  martingale difference sequence. Recall that the step-size $\frac{1}{k+1}$ is denoted by $\zeta(k)$, for $k \in \mathbb{N}$. Let the map $\omega^c_j:\mathbb{R} \times \mathbb{R}^{C_j} \to \mathbb{R}$, and let $\left( \bm{\sigma}^c_{j}(\Theta^c_j(k),\mathbf{y}_j(k)) - \mathbf{y}_j(k) \right)$ be denoted by $\omega^c_j(\Theta^c_j(k), \mathbf{y}_j(k))$. After replacing these values in \eqref{eq:avg_y_m_critic}, we obtain 

\begin{align} \label{eq:avg_y_m1_critic}
&\mathbf{y}_j(k+1) = \mathbf{y}_j(k) + \zeta(k) \left [ \omega^c_j(\Theta^c_j(k), \mathbf{y}_j(k)) + \mathbf{M}^c_{j,1}(k+1) \right].
\end{align} 

We reformulate the feedback signal $\Theta^c_j(k)$ for AI critics defined in \eqref{omega_b2} as follows: 
		\begin{align} \label{eq:V.6-Omega_critic}
		&\Theta^c_{j}(k+1) \nonumber \\ &= \Theta^c_{j}(k) +  \eta^c_j(k)  \left ( \left (\mathcal{M}_j -  \sum_{\nu=1}^{C_j} \sigma^c_{j\nu}(\Theta^c_j(k),y_{j\nu}(k)) \right) \right. \nonumber \\& \left. +  \left ( \sum_{\nu=1}^{C_j} \sigma^c_{j\nu}(\Theta^c_j(k),y_{j\nu}(k)) - \sum_{\nu=1}^{C_j} Y_{j\nu}(k+1) \right ) \right ).
		\end{align}
		For $j=1,2,\ldots,m$, let the map $h^c_j : \mathbb{R} \times \mathbb{R}^{C_j} \to \mathbb{R}$, and let $\left (\mathcal{M}_j -  \sum_{\nu=1}^{C_j} \sigma^c_{j\nu}(\Theta^c_j(k),y_{j\nu}(k)) \right)$ be denoted by $h^c_j(\Theta^c_{j}(k),\mathbf{y}_j(k))$. 
Furthermore, let $\left ( \sum_{\nu=1}^{C_j} \sigma^c_{j\nu}(\Theta^c_j(k),y_{j\nu}(k)) - \sum_{\nu=1}^{C_j} Y_{j\nu}(k+1) \right )$ be denoted by $M^c_{j,2}(k+1)$. Recall that the step-size $\frac{\eta^c_{0_j}}{(\alpha^c_j k+1)^{\beta^c_j}}$ is denoted by $\eta^c_j(k)$.
From \eqref{eq:V.6-Omega_critic}, we obtain:
\begin{align} \label{eq:V.6-Omega2_critic}
\Theta^c_{j}(k+1)  = \Theta^c_{j}(k) +  \eta^c_j(k)\Big( h^c_j(\Theta^c_{j}(k), \mathbf{y}_j(k)) + M^c_{j,2}(k+1) \Big).
\end{align}

The result on upper bound of the feedback signal for AI critics is presented as follows. 
	\begin{lemma} \label{lem:sup-Omega_critic}
		For a fixed modality $j$, let $\Theta^c_j(0) = \Theta^c_{j_0}>0$ be the initial value of feedback signal for agents completing tasks of modality $j$, and let $\Theta^c_j(k)$ be as in \eqref{omega_b2}. Let $\eta^c_{0_j}>0$, $\alpha^c_j>0$, and $\beta^c_j \in (0.5,1)$, and let step size $\eta^c_j(k) = \frac{\eta^c_{0_j}}{(\alpha^c_j k+1)^{\beta^c_j}}$. For any $\epsilon^c>0$, we have $\mathbb{P} \left(\vert (\Theta^c_j(k+1) - \mathbb{E}(\Theta^c_j(k+1) \vert \Theta^c_j(k)) \vert \geq \epsilon^c \right)  \leq 2\exp \left(\frac{-2 ((\alpha^c_j k+1)^{\beta^c_j}{\epsilon^c})^2}{{(\eta^c_{0_j})}^2 C_j} \right)$. 
	\end{lemma}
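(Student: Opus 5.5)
The argument mirrors the proof of Lemma~\ref{lem:sup-Omega}, with agents replaced by critics of modality $j$: the $N$ Bernoulli variables $X_{ji}(k+1)$ become the $C_j$ variables $Y_{j\nu}(k+1)$, and $\eta_j(k)$ becomes $\eta^c_j(k)$. We start from the update \eqref{omega_b2}, written as
\begin{align*}
\Theta^c_j(k+1) = \Theta^c_j(k) - \eta^c_j(k)\left(\sum_{\nu=1}^{C_j} Y_{j\nu}(k+1) - \mathcal{M}_j\right).
\end{align*}
Next we take the conditional expectation given $\Theta^c_j(k)$, and more precisely given $\mathcal{F}_k$. The quantities $\Theta^c_j(k)$, $\eta^c_j(k)$ and $\mathcal{M}_j$ are then deterministic, so only the sum of the $Y_{j\nu}(k+1)$ is averaged. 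Subtracting the two expressions cancels $\Theta^c_j(k)$ and $\mathcal{M}_j$, which leaves
\begin{align*}
\Theta^c_j(k+1) - \mathbb{E}\big(\Theta^c_j(k+1)\,\vert\,\Theta^c_j(k)\big) = \eta^c_j(k)\left(\mathbb{E}\Big(\sum_{\nu=1}^{C_j} Y_{j\nu}(k+1)\Big) - \sum_{\nu=1}^{C_j} Y_{j\nu}(k+1)\right).
\end{align*}

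We then rewrite the event $\{|\cdot|\geq \epsilon^c\}$ by dividing through by $\eta^c_j(k)>0$. It becomes the event that the sum of the $Y_{j\nu}(k+1)$ deviates from its mean by at least $\epsilon^c/\eta^c_j(k) = \epsilon^c(\alpha^c_j k+1)^{\beta^c_j}/\eta^c_{0_j}$. We bound this with Hoeffding's inequality (Theorem~\ref{th_Hoeffding}), applied with $C_j$ summands in place of $N$. Substituting $\epsilon=\epsilon^c(\alpha^c_j k+1)^{\beta^c_j}/\eta^c_{0_j}$ into $2\exp(-2\epsilon^2/C_j)$ gives exactly the claimed bound $2\exp\big(-2((\alpha^c_j k+1)^{\beta^c_j}\epsilon^c)^2/((\eta^c_{0_j})^2 C_j)\big)$.

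The one step that needs justification is the hypothesis of Hoeffding's inequality, namely that $Y_{j1}(k+1),\dots,Y_{jC_j}(k+1)$ are independent Bernoulli variables. This holds conditionally on $\mathcal{F}_k$. Given $\mathcal{F}_k$, each critic $\nu$ draws $B^c_{j\nu}(k)$ independently in Algorithm~\ref{algo_Critic}, with parameter $\sigma^c_{j\nu}(\Theta^c_j(k),y_{j\nu}(k))$, and this parameter is $\mathcal{F}_k$-measurable and lies in $(0,1)$ by the choice of $\Theta^c_j(k)$. So we apply Hoeffding's inequality to the conditional law and read the expectation and probability in the statement as conditional on $\mathcal{F}_k$. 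The bound does not depend on the conditioning, so it also holds unconditionally after taking expectations.
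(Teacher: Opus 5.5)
Your proposal is correct and follows the route the paper intends. The paper gives no separate proof for the critic version; it defers to the proof of Lemma~\ref{lem:sup-Omega}, which is exactly your chain: subtract the conditional mean, divide by $\eta^c_j(k)$, and apply Theorem~\ref{th_Hoeffding} with $C_j$ summands in place of $N$. Your explicit reading of the conditional expectation as given $\mathcal{F}_k$, rather than given $\Theta^c_j(k)$ alone, is what actually justifies Hoeffding: under $\Theta^c_j(k)$ alone the $Y_{j\nu}(k+1)$ need not be independent, since they also depend on the $y_{j\nu}(k)$. The paper leaves this implicit and even writes an unconditional $\mathbb{E}$ at that step.
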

	
We now present the convergence result of the time-average number of active AI critics $\mathbf{y}_j(k)$ and the feedback signals for AI critics $\Theta^c_j(k)$.
\begin{theorem}[Convergence results of AI critics] \label{th:V.7_critic}
		For any task modality $j$, and fixed constant initial values $\mathbf{y}_j(0)$ and $\Theta^c_j(0)$, the sequence $\left\{\left(\mathbf{y}_j(k), \Theta^c_j(k) \right)\right\}_{k \in \mathbb{N}}$ converges, almost surely. 
	\end{theorem}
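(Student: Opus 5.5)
The plan is to mirror the proof of Theorem \ref{th:V.6} and reduce the claim to Theorem \ref{Thm:Borkar-C6}, taking the fast iterate to be $\mathbf{y}_j(k)$ and the slow iterate to be $\Theta^c_j(k)$. The coupled system is already in the required form. The time average obeys \eqref{eq:avg_y_m1_critic}, with step size $\zeta(k)=\frac{1}{k+1}$, drift $\omega^c_j(\Theta^c_j(k),\mathbf{y}_j(k))=\bm{\sigma}^c_j(\Theta^c_j(k),\mathbf{y}_j(k))-\mathbf{y}_j(k)$ and noise $\mathbf{M}^c_{j,1}(k+1)$. Here the vector appearing in \eqref{eq:avg-y_critic} is read as $(Y_{j1}(k+1),\ldots,Y_{jC_j}(k+1))$. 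The feedback signal obeys \eqref{eq:V.6-Omega2_critic}, with step size $\eta^c_j(k)$, drift $h^c_j$ and noise $M^c_{j,2}(k+1)$. It therefore suffices to check Assumption \ref{Ass:Lipschitz-St2} (i)--(iv) for this pair.

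\emph{Step 1 (Lipschitz maps).} The map $\sigma^c_{j\nu}(\Theta,y)=\Theta\, y/g_{j\nu}'(y)$ is $C^1$ on the relevant domain, because $g_{j\nu}$ is twice continuously differentiable and increasing. To get a positive lower bound on $g'_{j\nu}$, I will use strict convexity and monotonicity to obtain $g'_{j\nu}\ge c>0$ on the compact range actually visited. On a compact set containing the iterates (available from Step 4), both $\omega^c_j$ and $h^c_j=\mathcal{M}_j-\sum_\nu \sigma^c_{j\nu}$ are then Lipschitz. Assumption \ref{Ass:Lipschitz-St2} requires global Lipschitz continuity, so I will extend the maps outside the compact set in a Lipschitz way; this is harmless because the iterates never leave the set.

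\emph{Step 2 (step sizes).} With $\beta^c_j\in(1/2,1)$, the conditions $\sum_k\eta^c_j(k)=\infty$ and $\sum_k\eta^c_j(k)^2<\infty$ hold. The ratio satisfies $\zeta(k)/\eta^c_j(k)\sim (\alpha^c_j)^{\beta^c_j}k^{\beta^c_j-1}/\eta^c_{0_j}\to0$, and the corresponding properties of $\zeta$ are standard.

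\emph{Step 3 (martingale differences).} Conditional on $\mathcal{F}_k$, each $Y_{j\nu}(k+1)$ is Bernoulli with parameter $\sigma^c_{j\nu}(\Theta^c_j(k),y_{j\nu}(k))$ by \eqref{eq:alloc-update2}. Hence $\mathbb{E}(\mathbf{M}^c_{j,1}(k+1)\mid\mathcal{F}_k)=0$ and $\mathbb{E}(M^c_{j,2}(k+1)\mid\mathcal{F}_k)=0$. Both noise sequences are also bounded, by $1$ and $C_j$ respectively, which gives the square-integrability Borkar's framework tacitly uses.

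\emph{Step 4 (stability), the main obstacle.} Boundedness of $\mathbf{y}_j(k)$ is trivial because each $y_{j\nu}(k)\in[0,1]$. Boundedness of $\Theta^c_j(k)$ is the delicate part, and here I would invoke Lemma \ref{lem:sup-Omega_critic} as the analogue of Lemma \ref{lem:sup-Omega-y}. The Hoeffding bound is summable in $k$, since $\sum_k \exp(-c k^{2\beta^c_j})<\infty$. By Borel--Cantelli, the one-step deviations of $\Theta^c_j(k+1)$ from its conditional mean therefore exceed any fixed $\epsilon^c$ only finitely often, almost surely. To control the conditional-mean drift $\eta^c_j(k)h^c_j$, I would add a sign argument. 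When $\Theta^c_j$ is large, the probabilities $\sigma^c_{j\nu}$ saturate upward, so $h^c_j<0$ and $\Theta^c_j$ is pushed down. When $\Theta^c_j$ is small, $h^c_j\approx\mathcal{M}_j>0$ and it is pushed up. Under the paper's standing choice of parameters keeping $\sigma^c_{j\nu}\in(0,1)$, this gives a bound on $\sup_k|\Theta^c_j(k)|$ almost surely, which is Assumption \ref{Ass:Lipschitz-St2} (iv).

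With (i)--(iv) verified, Theorem \ref{Thm:Borkar-C6} yields almost sure convergence of $(\mathbf{y}_j(k),\Theta^c_j(k))$ for each modality $j$.
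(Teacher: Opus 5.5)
Your route is the paper's: its proof of this theorem just says to repeat the proof of Theorem \ref{th:V.6}, i.e.\ verify Assumption \ref{Ass:Lipschitz-St2} (i)--(iv) and invoke Theorem \ref{Thm:Borkar-C6}. The gap is that this reduction does not by itself give convergence. Conditions (i)--(iv) only make the iterates asymptotically track the limiting ODEs, so they converge to an internally chain transitive invariant set, not necessarily to a point. Borkar's two-timescale theorem also needs two hypotheses. For each frozen slow variable, the fast ODE must have a globally asymptotically stable equilibrium that depends Lipschitz-continuously on the slow variable. The slow ODE driven by that equilibrium must in turn have a globally asymptotically stable equilibrium. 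Theorem \ref{Thm:Borkar-C6} as quoted drops these hypotheses, and you inherit the omission. You also inherit the paper's reversed labels: since $\zeta(k)/\eta^c_j(k)\to 0$, $\Theta^c_j$ is the fast iterate and $\mathbf{y}_j$ the slow one. The missing step is concrete. For frozen $\mathbf{y}_j$, $\dot\Theta=\mathcal{M}_j-\Theta\sum_\nu y_{j\nu}/g'_{j\nu}(y_{j\nu})$ is linear with globally stable rest point $\lambda(\mathbf{y}_j)=\mathcal{M}_j\big/\sum_\nu \bigl(y_{j\nu}/g'_{j\nu}(y_{j\nu})\bigr)$. This map is Lipschitz only where that sum stays away from $0$. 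You must then analyze $\dot y_{j\nu}=\lambda(\mathbf{y}_j)\,y_{j\nu}/g'_{j\nu}(y_{j\nu})-y_{j\nu}$. Along this flow $\frac{d}{dt}\sum_\nu y_{j\nu}=\mathcal{M}_j-\sum_\nu y_{j\nu}$, and its interior rest points are exactly the consensus points \eqref{optimality_cond_critic} with $\sum_\nu y_{j\nu}=\mathcal{M}_j$. There are further rest points on the boundary, though; for instance, when $g'_{j\nu}(0)>0$ the faces $\{y_{j\nu}=0\}$ are invariant. So you still need a Lyapunov argument for the interior point and a reason the iterates avoid the boundary.

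Two of your own verification steps also fail as written.

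In Step 4, Lemma \ref{lem:sup-Omega_critic} plus Borel--Cantelli is vacuous. Deterministically $|\eta^c_j(k)M^c_{j,2}(k+1)|\le C_j\,\eta^c_j(k)\to0$, and knowing that each increment is eventually below $\epsilon^c$ says nothing about the noise accumulated over infinitely many steps. What works is the martingale convergence theorem plus a drift bound: $\sum_k\eta^c_j(k)M^c_{j,2}(k+1)$ converges a.s.\ because the noise is bounded and $\sum_k\eta^c_j(k)^2<\infty$. Your sign argument, however, needs $h^c_j<0$ for all large $\Theta$ uniformly along the path. That requires a positive lower bound on $\sum_\nu y_{j\nu}(k)/g'_{j\nu}(y_{j\nu}(k))$, which you have not shown. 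Appealing to ``parameters keeping $\sigma^c_{j\nu}\in(0,1)$'' does not help: that is an unproved property of the trajectory, not a consequence of the choice of $\eta^c_{0_j},\alpha^c_j,\beta^c_j$.

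In Step 1, strict convexity and monotonicity do not give $g'_{j\nu}\ge c>0$. The paper's own $g_{1\nu}(y)=a'_\nu y^2+b'_\nu y^4$ has $g'_{1\nu}(0)=0$, and nothing keeps $y_{j\nu}(k)$ away from $0$. What matters is the map $y\mapsto y/g'_{j\nu}(y)$. In that example it equals $1/(2a'_\nu+4b'_\nu y^2)$ and is Lipschitz. But for $g_{j\nu}(y)=1+y^4$, which meets all the standing assumptions, it equals $1/(4y^2)$ and is unbounded near $0$. So Assumption \ref{Ass:Lipschitz-St2} (i) needs an extra hypothesis, e.g.\ $g'_{j\nu}(0)>0$, or directly that $y/g'_{j\nu}(y)$ is Lipschitz on $[0,1]$.
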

\begin{proof}
As stated previously, the step size $\zeta(k)$ is fast transient and $\eta^c_j(k)$ is a slow component. Furthermore, $\mathbf{M}^c_{j,1}(k+1)$ and $M^c_{j,2}(k+1)$ are martingale difference sequences, for the given sigma algebra $\mathcal{F}_k$; refer \eqref{eq:avg_y_m1_critic} and \eqref{eq:V.6-Omega2_critic}, respectively. Following the steps similar to AI agents as in the proof of Theorem \ref{th:V.6}, we show that Assumption \ref{Ass:Lipschitz-St2} is satisfied.
\end{proof}

Finally, we make the following remark about the communication complexity of the model.
\begin{remark}[Bounds on Communication overhead]
The AI agents and critics do not share cost function or their (partial) derivatives with other AI agents or critics in the system, the central server broadcasts the feedback signals $\Theta_j(k)$ and  $\Theta^c_j(k)$ at each time step; because of the broadcast of the feedback signals, the system incurs a little communication overhead, of the order of $\mathcal{O}(m)$, for $m$ modalities. Moreover, in a double precision floating point format, if the system completes the tasks of $m$ modalities, the communication overhead on the system will be $2\times 64 m$ bits per time step. The communication overhead at a time step will be $128 m$ bits, which is a little, and it does not depend on the number of AI agents and critics in the system. 
\end{remark}
 
\section{Experimental results: Fault detection and cause analysis in a multimodal network telemetry system}  \label{bin_applications}

We evaluate our model with an open-source network telemetry dataset available at \cite{Putina2021} for fault detection,  severity, and cause analysis. We chose dataset $4$ that contains tabular data in a comma-separated values (CSV) format and log files for port state changes. The data has three types of port state changes: Optical transceiver pull and reinsert, admin state change (down and up), and link state changes (plug-in and plug-out of fiber optics cable). For fault detection, developed model is evaluated by two approaches: AI agents  enabled with classical machine learning techniques (e.g., XG Boosting), and AI agents and critics enabled with large language models. We observe that agents enabled with XG Boosting provide faster results and higher accuracy, F1-score, precision, and recall than AI agents and critics enabled with LLMs.

As described earlier, the CSV file of network telemetry dataset $4$ of \cite{Putina2021} has $99$ columns, which include numeric and non-numeric data. Data are pre-processed before being accessible to AI agents and critics.  The pre-processing step is presented in the following subsection.

\subsection{Pre-processing of network telemetry data} \label{preprocess_CSV}
The time column in the CSV file is in Unix nanosecond format, whereas the dates in log and event files are in Pacific Daylight time (PDT). To unify the formats, Unix nanosecond time is converted into Pacific Daylight time (PDT). Furthermore, all the numeric columns with zero variance are removed because they do not affect fault analysis. 
Using the time duration of fault events presented in the event file of the dataset 4 of \cite{Putina2021}, we extract the fault data and add a fault column in the CSV file. An equal number of no-fault entries is added to the CSV file; the entries are labeled as ``1'' for fault and ``0'' for no fault, we call the dataset a {\em balanced dataset}. Furthermore, these entries are randomized.

{\bf Fault detector agents enabled with the XG Boosting:} For the XG Boosting, the balanced dataset with fault column (CSV file) is split into two parts---one is used for training and another for testing purposes. XG Boosting model is trained with 2/3 of the records, and the remaining records (test files) are split into smaller files. The XG Boosting trained model is saved and accessed by fault detector agents. At each iteration, active agents access a CSV test file to predict the faults. The resulting fault-classified data is saved as a CSV file. 

{\bf Fault detector agents and critics enabled with large language models (LLMs):}
AI agents and critics can access different large language models (LLMs), such as Llama3.2, DeepSeek-R1, Mistral, etc., for response generation and evaluation. For LLM-enabled agents and critics, the CSV data are converted into text queries and saved as CSV files for efficient processing by the LLMs.

For the fault detection step with LLM-enabled agents and critics, the text queries CSV file is then converted into smaller chunks to pass to agents and critics for fault detection. At each iteration, an active agent processes a file and passes its responses to critics for feedback. An active critic then evaluates the agent's responses, checks their correctness, and sends feedback to the agent. The agent then considers the critic's feedback to improve its response; however, if the agent disagrees with the critic's evaluation, it does not change the answer.

The following is an example of queries for fault detection with LLM-enabled agents and critics.
 \begin{query}
Predict whether Device with `name': `Cisco-IOS-XR-infra-statsd-oper:infra-statistics/interfaces/interface/latest/generic-counters', `time': `2017-09-01 10:36:28.566000', `EncodingPath': `Cisco-IOS-XR-infra-statsd-oper:infra-statistics/interfaces/interface/latest/generic-counters', `Producer': `leaf5', `bytes-received': `862661866283767.0', `bytes-sent': `596481617153767.0', `carrier-transitions': `0.0', `crc-errors': '0.0', `input-drops': `0.0', `input-errors': `0.0', `input-ignored-packets': `0.0', `input-queue-drops': `0.0', `interface-name': `HundredGigE0/0/0/17', `output-buffer-failures': `0.0', `output-drops': `0.0', `output-errors': `0.0', `output-queue-drops': `0.0', `packets-received': `254910632957.0', `packets-sent': `241408351439.0' has fault or not.
\end{query}

{\bf Fault severity and cause analysis:}
In the fault severity and cause analysis step, we use LLM-enabled agents and critics to process fault queries from both approaches---the XG Boosting and LLM-based. Recall that in the fault detection step with the XG Boosting, the fault-classified data is saved in CSV files, which are converted into text queries with predicted faults as a column. Whereas, for LLM-enabled fault detection, the CSV network telemetry data is pre-processed to create text queries and accessed by agents and critics to predict faults (`Fault', `No Fault'), the predicted value is added as a column in the queries and saved as CSV files. The queries with predicted `Fault' are saved in another CSV files based on processed query chunks. These query data files are accessible to AI agents for fault severity and cause analysis. The AI agents first classify the faults as `Critical' or `Non-critical' with access to the network telemetry log files (stored in ChromaDB) and provide a brief cause of the fault. The AI agent's response is then sent to AI critics for evaluation. An active critic evaluates the correctness of the agent's fault severity classification and cause summary and sends feedback to the agent for further improvement. The critics also have access to network telemetry log files through ChromaDB instances. After receiving the feedback, the agent improves its response. The responses are saved in CSV files. The `Critical' flagged faults can be notified to network administrator. 

\subsection{Setup}
In the experiment, we consider $9$ agents participating in completing tasks of two modalities. For example, the fault detection step uses network telemetry CSV data, whereas the fault severity and cause analysis step uses fault queries CSV data and network telemetry log files. However, it can be easily implemented for more than two modalities. We first present convergence results for a generic case for two modalities. Then, we present specific fault detection, severity, and cause analysis results with agents enabled by classical machine learning and generative AI's foundation model. For modality $1$, we chose the desired number of agents completing the task to be $6$; we also call it the {\em capacity}, denoted by $\mathcal{N}_1$. Analogously, for modality $2$, the desired number of agents completing the task is $7$, the capacity $\mathcal{N}_2=7$. Moreover, we chose $3$ critics for each modality to evaluate the agents' responses. At every time step, $2$ critics for each modality are desired to be active to evaluate the agents' responses. 

Agents and critics access generative AI's foundation models or classical machine learning models. In the experiment, we use open-source frameworks. We use Ollama to deploy foundation models on our local machine and Langchain to facilitate the integration of various foundation models such as Llama3.2, DeepSeek-R1, Mistral, Llava:7b, etcetera. We implement the algorithms on a machine with Intel Core $i9$, CPU $2.20$ GHz, RAM $32$ GB, and NVIDIA's GPU GeForce RTX $4080$.
  
In the model, a central server collaborates with the AI agents and critics and knows the number of active AI agents and critics at a time step. It also knows the capacities of agents and critics for both modalities. The central server updates and sends feedback signals to agents and critics for each modality at a time step.

The AI agents have the following cost functions coupled through their time-average active states for both modalities. In the cost function, $a_i$, $b_i$, $c_i$, and $d_i$ are the uniformly distributed random variables, for $i=1,2,\ldots,N$. We choose $a_i \in (0,20)$, $b_i \in (0,15)$, $c_i \in (0,20)$, and $d_i \in (0,30)$, for $i=1,2,\ldots,N$, the cost function of AI agents presented as follows:

  \begin{align}
      &f_i(x_{1i}, x_{2i}) = \frac{1}{2} a_{i}x_{1i}^2+\frac{3}{4} c_{i} x_{1i}^4 + \frac{1}{2} b_{i} x_{2i}^2 + \frac{1}{2} d_i x_{2i}^4, \\&\textnormal{ for } i=1,2,\ldots,N. \nonumber
  \end{align}

  Critic $\nu$ has the following cost functions that depend on its time-average active states for the respective task modalities. In the following cost function, $a'_\nu$, $b'_\nu$, $c'_\ell$, and $d'_\ell$ are the uniformly distributed random variables, for $\nu=1,2,\ldots,C_1$ and $\ell=1,2,\ldots,C_2$. We choose $a'_\nu \in (0,10)$, $b'_\nu \in (0,6)$, $c'_\ell \in (0,10)$, and $d'_\ell \in (0,6)$.

The critics evaluating tasks of modality $1$ have the following cost function:
\begin{align}
    g_{1 \nu} (y_{1\nu}) &= a'_{\nu} y_{1\nu}^2 + b'_{\nu} y_{1\nu}^4,  \textnormal{ for } \nu=1,2,\ldots,C_1.   
\end{align}

The critics evaluating tasks of modality $2$ have the following cost function:
\begin{align}
    g_{2 \ell} (y_{2\ell}) &= \frac{1}{2} c'_{\ell} y_{2\ell}^2 + \frac{1}{2}  d'_{\ell} y_{2\ell}^4, \textnormal{ for } \ell=1,2,\ldots,C_2.    
\end{align}

The initial values of feedback signals $\Theta_1(0)$ and $\Theta_2(0)$ for AI agents  are initialized with $0.1$. Recall that the step size for the feedback signals for AI agents completing tasks of modality $j$ is $\eta_j(k) = \frac{\eta_{0_j}}{(\alpha_j k +1)^{\beta_j}}$, for task modalities $j=1,2$. For the two task modalities, we chose the same values, $\eta_{0_1} =\eta_{0_2} =  0.034$, $\alpha_1=\alpha_2=0.14$, and $\beta_1 = \beta_2 =0.51$. 

The initial values of feedback signals  $\Theta^c_1(0)$ and $\Theta^c_2(0)$ for AI critics are initialized with $0.33$. Recall that the step size for the feedback signals for critics evaluating tasks of modality $j$ is $\eta^c_j(k) = \frac{\eta^c_{0_j}}{(\alpha^c_j k +1)^{\beta^c_j}}$, $j=1,2$. We chose the same values for the parameters, $\eta^c_{0_1} = \eta^c_{0_2} =0.035$, $\alpha^c_1=\alpha^c_2=0.15$, and $\beta^c_1 = \beta^c_2 = 0.51$. The central server updates the feedback signals for AI agents and critics as in \eqref{omega_b1} and \eqref{omega_b2}, respectively. Notice that the AI critics' cost functions are not coupled and are independent for modalities $1$ and $2$. 
 
\subsection{Evaluations}
We now present the experimental results for two modalities for generic tasks for exposition. The results hold for the fault detection and severity and cause analysis example, presented later in this section. To compare the convergence results obtained from our algorithm, we use the CVX solver that solves Optimization problem \ref{prob:objective_fn} in a centralized way. The centralized optimal solution is considered the gold standard.

Figures \ref{fig:agent_avg_mod1} and  \ref{fig:agent_avg_mod2} illustrate the time-average active states of AI agents completing tasks of modality $1$ and $2$, respectively. The time average active states of AI agents $x_{1i}(k)$ and $x_{2i}(k)$ converge over time, close to their respective optimal values, $x^*_{1i}$ and $x^*_{2i}$. 
\begin{figure}
\centering
\begin{subfigure}{0.49\linewidth}
    \centering
    \includegraphics[width=\linewidth]{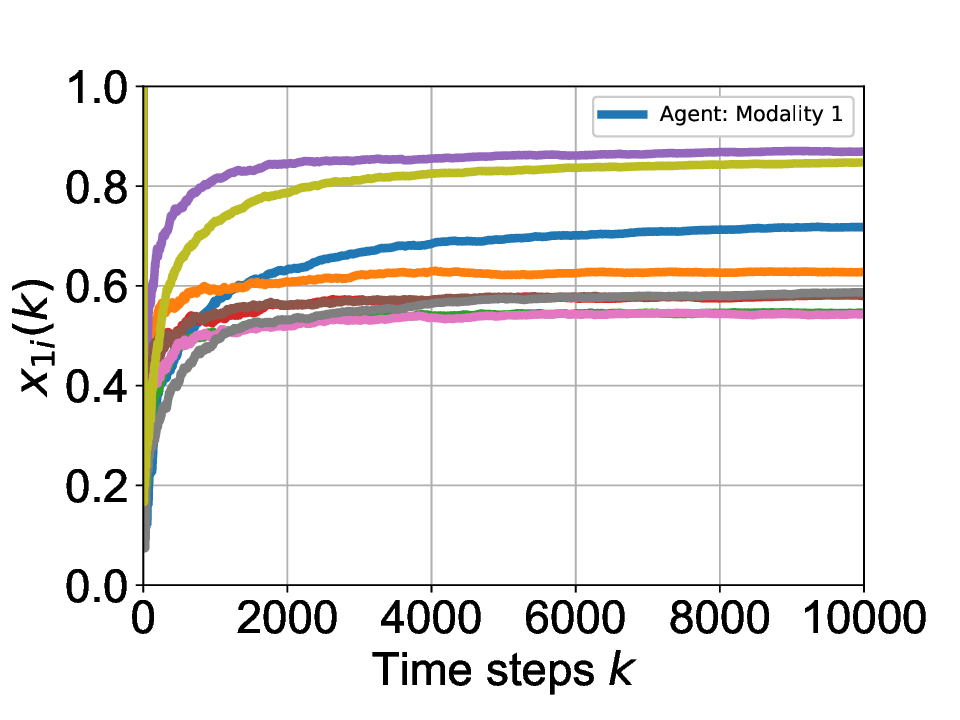}
    \caption{Modality $1$}
    \label{fig:agent_avg_mod1}
\end{subfigure}
\hfill
\begin{subfigure}{0.49\linewidth}
    \centering
    \includegraphics[width=\linewidth]{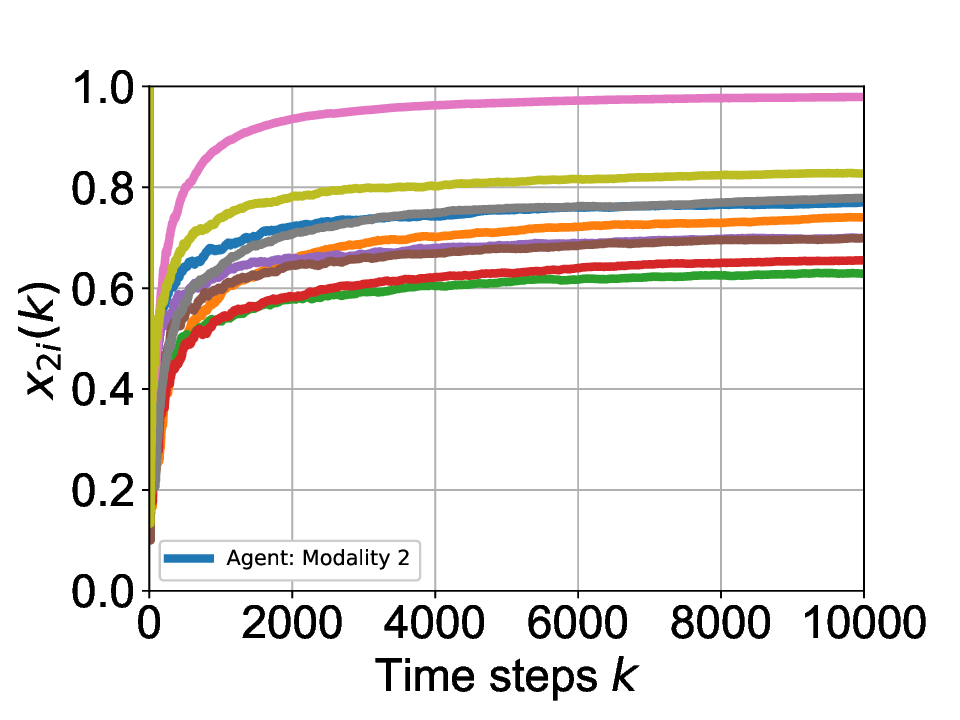}
    \caption{Modality $2$}
    \label{fig:agent_avg_mod2}
\end{subfigure}
\caption{The convergence of time-average active states $x_{1i}$ and $y_{2i}$ of AI agents performing tasks of (a) modality $1$ and (b) modality $2$, respectively, to close to their respective optimal values $x_{1i}^*$ and $x_{2i}^*$.}
\label{fig:agent_avg}
\end{figure}
Analogously, Figures \ref{fig:critic_avg_m1} and  \ref{fig:critic_avg_m2} illustrate the convergence of the time average active states of AI critics $y_{1i}(k)$ and $y_{2i}(k)$, close to their respective optimal values, $y_{1i}^*$ and $y_{2i}^*$. 
\begin{figure}
\centering
\begin{subfigure}{0.49\linewidth}
    \centering
    \includegraphics[width=\linewidth]{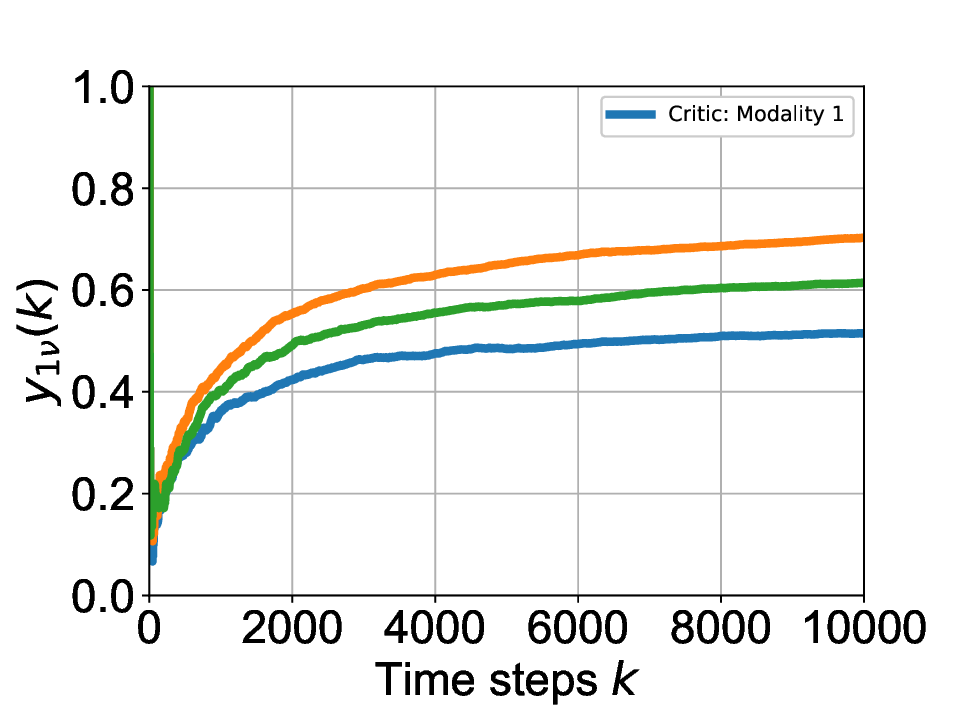}
    \caption{Modality $1$}
    \label{fig:critic_avg_m1}
\end{subfigure}
\hfill
\begin{subfigure}{0.49\linewidth}
    \centering
    \includegraphics[width=\linewidth]{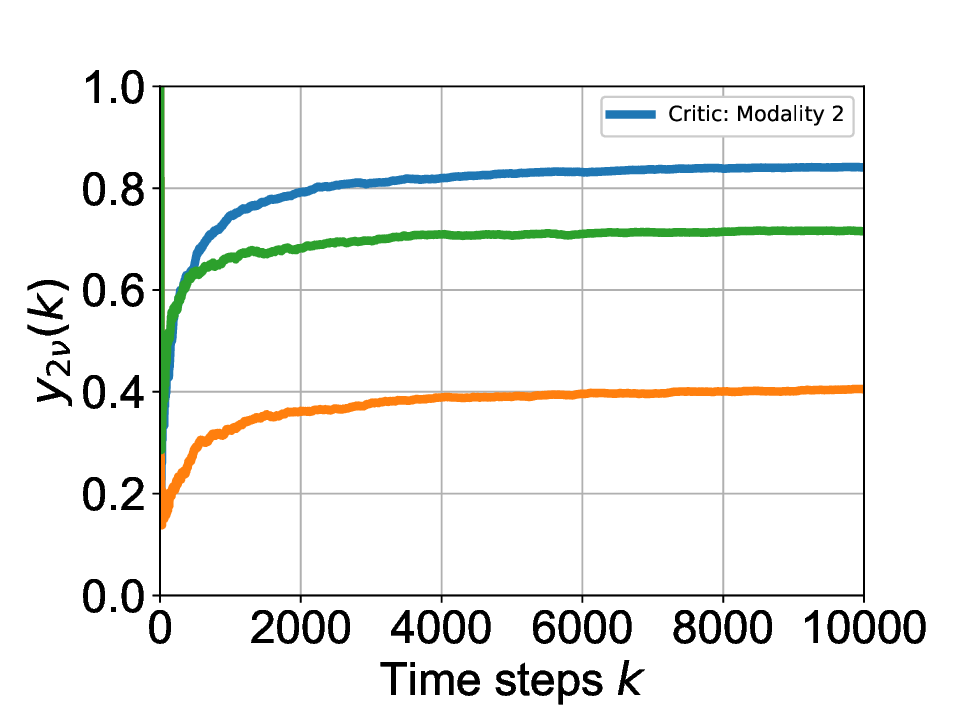}
    \caption{Modality $2$}
    \label{fig:critic_avg_m2}
\end{subfigure}
\caption{The convergence of time-average active states $y_{1\nu}$ and $y_{2\ell}$ of AI critics  evaluating tasks of (a) modality $1$ and (b) modality $2$, respectively, to close to their respective optimal values $y^*_{1\nu}$ and $y^*_{2\ell}$, $\nu=1,2,\ldots,C_1$, and $\ell=1,2,\ldots,C_2$.}
\label{fig:critic_avg}
\end{figure}

\begin{figure}
    \centering
   \includegraphics[width=0.49\linewidth]{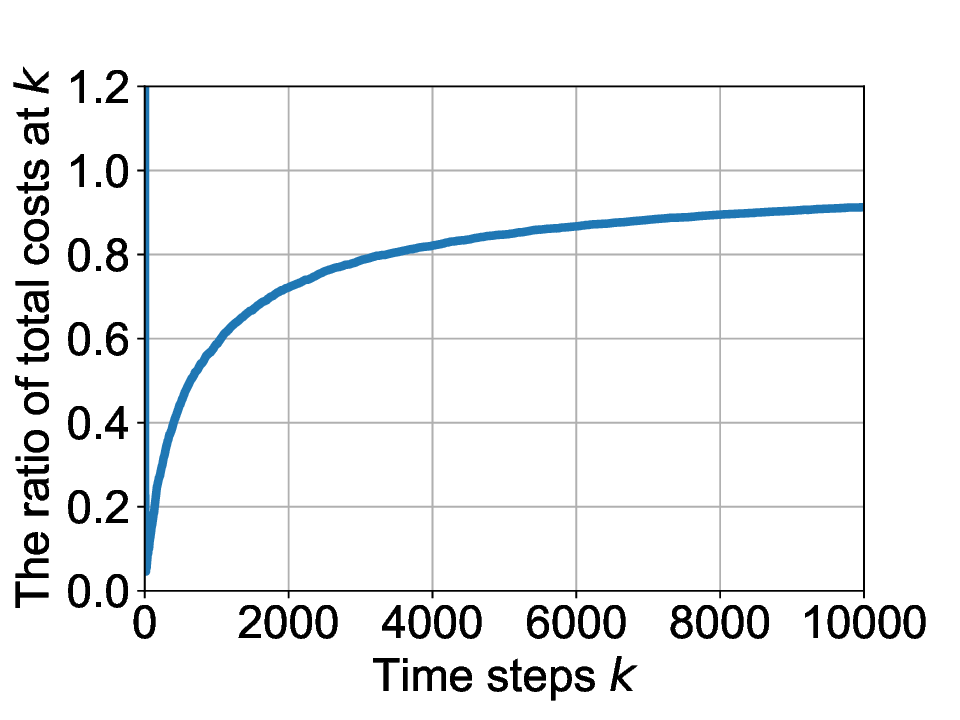}
    \caption{The evolution of the ratio of total cost obtained by the developed algorithm and the total optimal cost obtained by the CVX solver.}
    \label{fig:total_cost}
\end{figure}

\begin{figure}
    \centering
   \includegraphics[width=0.49\linewidth]{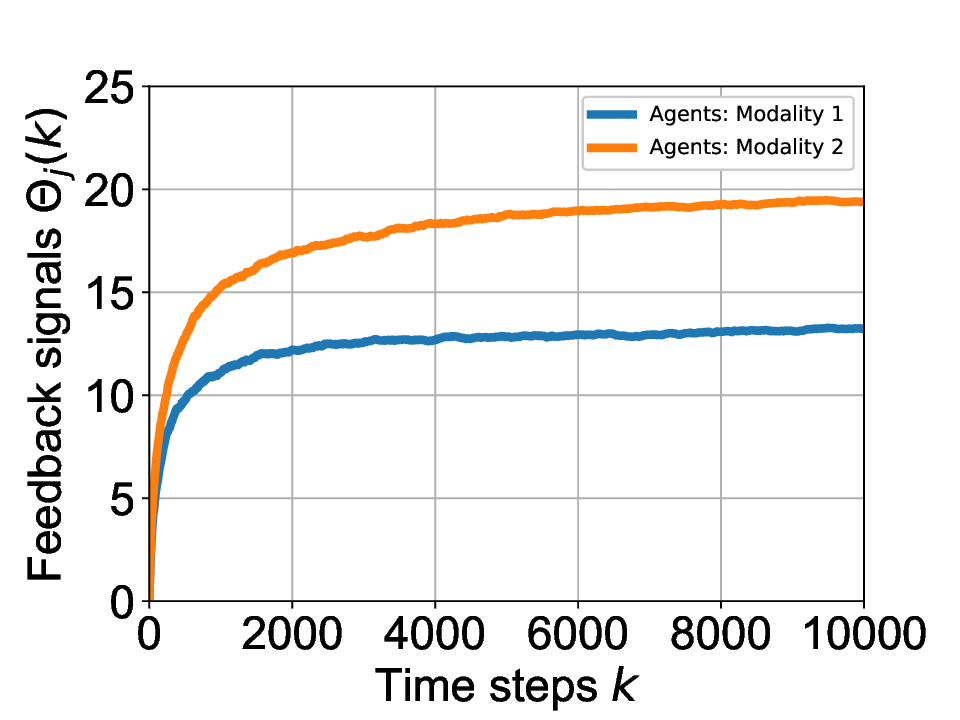}
    \caption{The evolution of feedback signals $\Theta_1(k)$ and $\Theta_2(k)$  broadcast by the central server for agents completing tasks of modalities $1$ and $2$, respectively.}
    \label{fig:feedback_signal_agents}
\end{figure}

\begin{figure}
    \centering
   \includegraphics[width=0.49\linewidth]{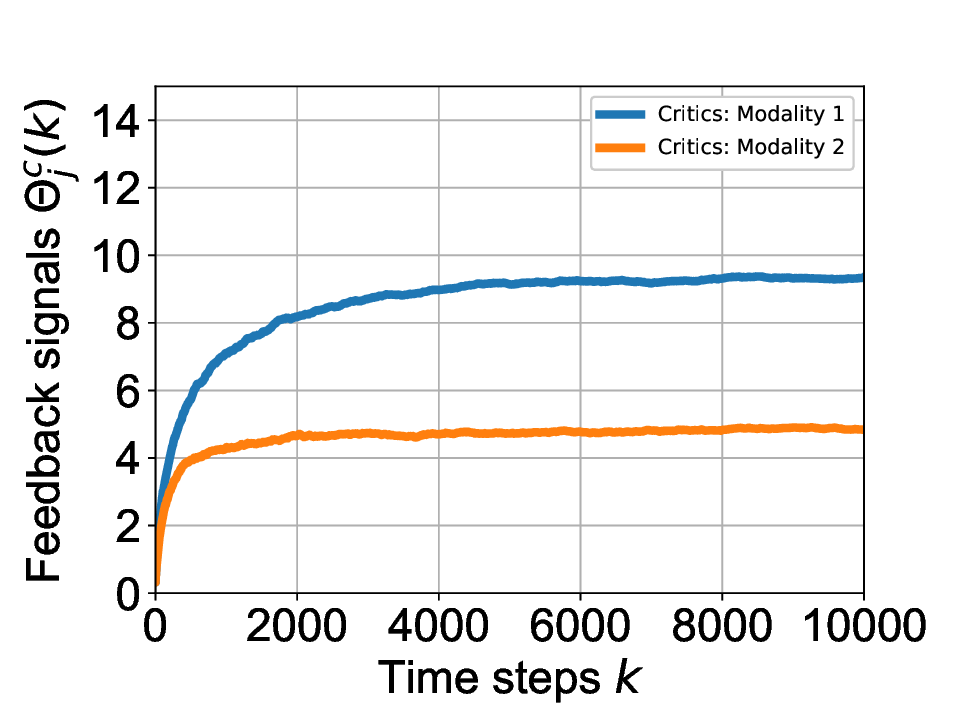}
    \caption{The evolution of feedback signals $\Theta^c_1(k)$ and $\Theta^c_2(k)$ broadcast by the central server for critics evaluating tasks of modalities $1$ and $2$, respectively.}
    \label{fig:feedback_signal_critics}
\end{figure}

\begin{table*}
\footnotesize
\setlength{\tabcolsep}{1.2pt}
\begin{tabular}{|c|c|p{1.6cm}|c|p{1.6cm}|}
        \hline
     
        \textbf{Models} & \textbf{Accuracy} & \textbf{Precision} & \textbf{Recall} & \textbf{F1-Score} 
        \\ \hline
           \hline
             Agents with XG Boosting & 0.9644 & 0.9397 & 0.9929 & 0.9656\\
             \hline
             Agents and Critics enabled with Llama3.2 (with  RAG) & 0.5882 & 0.8000 & 0.6154 & 0.6957\\
             \hline
            Agents and Critics enabled with Llama3.2 (without RAG) &  0.4612 & 0.4027 & 0.6648 & 0.5016 \\
\hline
    \end{tabular}
    \caption{Evaluation of the algorithms for fault detection on network telemetry dataset $4$ of \cite{Putina2021}: Agents are enabled with either classical machine learning techniques (e.g., XG Boosting) or large language models with (or without) retrieval augmented generation (RAG). The agents enabled with XG Boosting perform better than the large language model-enabled agents on the dataset.}
    \label{tab:fault_detection}
\end{table*}

\begin{table*}
\tiny
\begin{center}
\renewcommand{\arraystretch}{1}
\setlength{\tabcolsep}{2pt}
\begin{tabular}{|c| c |p{5.6cm} |c|p{5.6cm}|}
\hline
\textbf{Models} & \textbf{Actor} & \textbf{Query} & \textbf{Severity} & \textbf{Cause Summary} \\ 
\hline
\hline
Llama3.2 &  & \multirow{6}{5.6cm}{24. Predict whether Device with `name': `Cisco-IOS-XR-infra-statsd-oper:infra-statistics/interfaces/interface/latest/data-rate', `time': `2017-09-01 10:23:18.222000', `EncodingPath': `Cisco-IOS-XR-infra-statsd-oper:infra-statistics/interfaces/interface/latest/data-rate', `Producer': `leaf3', `bandwidth': `100000000.0', `input-data-rate': `10261058.0', `input-load': `26.0', `input-packet-rate': `166870.0', `interface-name': `HundredGigE0/0/0/5', `load-interval': `0.0', `output-data-rate': `10482276.0', `output-load': `26.0', `output-packet-rate': `449830.0', `peak-input-data-rate': `0.0', `peak-input-packet-rate': `0.0', `peak-output-data-rate': `0.0', `peak-output-packet-rate': `0.0', `reliability': `255.0' has a Critical fault or Non-critical fault.} & Non-critical & The device experienced a sudden decrease in input data rate due to a likely temporary network performance issue with the 'HundredGigE0/0/0/5' interface. However, it is stated that the interface is operational, indicating it's not a critical failure. \\ 
\cline{1-1}
\cline{4-5}
DeepSeek-R1 &  &  & Non-critical & Slight inefficiency due to output data rate increase relative to input with observed packet loss. \\ \cline{1-1}
\cline{4-5}
Mistral &  &  & Non-critical & The high levels of data rates and load on interface 'HundredGigE0/0/0/5' are within acceptable bounds for a production network, indicating normal operation. However, it is advised to monitor these values over time to ensure they do not exceed thresholds, indicating potential network congestion or performance degradation. \\ \cline{1-1}
\cline{4-5}
Llava:7b &  \multirow{6}{1cm}{\centering Agent}&  & Critical & An issue with the network interface is causing output drops on the device, which could impact network performance and reliability.
\\
\cline{1-1}
\cline{4-5}
IBM's Granite3.2:8b & & & Non-critical & The device's data rates, packet rates, and reliability metric all indicate normal operation with no unexpected spikes or drops in traffic. Although the load is high at 26.0, it does not signify a fault but heavy utilization, supported by the absence of critical-indicating peak values (peak-input/output-data-rate, peak-input/output-packet-rate).
\\
\cline{1-1}
\cline{4-5}
Microsoft's Phi4:14b &  & & Non-critical & The interface operates within expected parameters, with no indicators of bandwidth saturation, significant packet loss, or unreliability. Metrics such as data rate and load are normal, and reliability is at maximum without errors affecting performance. Despite some high values mentioned elsewhere, none relate to a critical fault for this specific instance.
\\ 
\hline \hline
\end{tabular}
\caption{Fault severity and cause analysis for agents enabled with large language models. The agents classify the query as {\em Non-critical} or {\em Critical} and briefly summarize the fault cause. The agents have access to network telemetry logs that were vector-embedded and stored in ChromaDB. Note that the large language models generated the text in the {\em Cause Summary}.}
\label{tab:fault_cause-agent}
\end{center}
\end{table*}

\begin{table*}
\tiny
\begin{center}
\renewcommand{\arraystretch}{1}
\setlength{\tabcolsep}{2pt}
\begin{tabular}{|c| c |p{6cm} |c|p{6cm}|}
\hline
\textbf{Models} & \textbf{Actor} & \textbf{Query} & \textbf{Severity} & \textbf{Cause Summary} \\ 
\hline
\hline
Llama3.2 & & {``Predict whether Device with 'name': 'Cisco-IOS-XR-infra-statsd-oper:infra-statistics/interfaces/interface/latest/generic-counters', 'time': '2017-09-01 10:35:19.556000', 'EncodingPath': 'Cisco-IOS-XR-infra-statsd-oper:infra-statistics/interfaces/interface/latest/generic-counters', 'Producer': 'leaf3', 'bytes-received': '934494113183718.0', 'bytes-sent': '2736075649406230.0', 'carrier-transitions': '0.0', 'crc-errors': '0.0', 'input-drops': '0.0', 'input-errors': '0.0', 'input-ignored-packets': '0.0', 'input-queue-drops': '0.0', 'interface-name': 'HundredGigE0/0/0/5', 'output-buffer-failures': '0.0', 'output-drops': '19401038.0', 'output-errors': '0.0', 'output-queue-drops': '0.0', 'packets-received': '224932326520.0', 'packets-sent': '1573333104341.0' has a Critical fault or Non-critical fault.''} & Critical & High data rate on the interface combined with high packet loss indicates a potential interface issue.\\ 
\cline{1-1}
\cline{4-5}
DeepSeek-R1 & &   & Critical & The high number of output drops (over 19 million) suggests that there may be an issue with data transmission capacity or buffer overflow on the output link interface, potentially indicating hardware exhaustion or configuration errors leading to packet loss.
\\ \cline{1-1}
\cline{4-5}
Llava:7b & \multirow{6}{1cm}{\centering Agent}  & & Critical & The interface has experienced output queue drops, which can result in lost data and reduced network performance. This can have serious implications for the functionality of the network device.
\\
\cline{1-1}
\cline{3-5}
Mistral &  & ``Predict whether Device with 'name': 'Cisco-IOS-XR-infra-statsd-oper:infra-statistics/interfaces/interface/latest/generic-counters', 'time': '2017-09-01 10:17:09.336000', 'EncodingPath': 'Cisco-IOS-XR-infra-statsd-oper:infra-statistics/interfaces/interface/latest/generic-counters', 'Producer': 'leaf3', 'bytes-received': '5344118565224976.0', 'bytes-sent': '5096276026294021.0', 'carrier-transitions': '0.0', 'crc-errors': '390.0', 'input-drops': '0.0', 'input-errors': '390.0', 'input-ignored-packets': '0.0', 'input-queue-drops': '0.0', 'interface-name': 'HundredGigE0/0/0/30', 'output-buffer-failures': '0.0', 'output-drops': '2573752423.0', 'output-errors': '0.0', 'output-queue-drops': '0.0', 'packets-received': '1974518315803.0', 'packets-sent': '2244342803311.0' has a Critical fault or Non-critical fault.'' & Critical & Potential network congestion, cable problems or misconfiguration due to high input errors and output drops on the 'HundredGigE0/0/0/30' interface. Additionally, investigate possible causes for CRC errors across all interfaces. \\ 
\hline \hline
\end{tabular}
\caption{Fault severity and cause analysis: The table presents the text queries classified as `Critical' by different LLMs. The LLMs generated the text in the {\em Cause Summary}.}
\label{tab1:Critical-fault_cause-agent}
\end{center}
\end{table*}

\begin{table*}
\tiny
\begin{center}
\renewcommand{\arraystretch}{1}
\setlength{\tabcolsep}{2pt}
\begin{tabular}{|c| c |p{6cm} |c|p{6cm}|}
\hline
\textbf{Models} & \textbf{Actor} & \textbf{Query} & \textbf{Severity} & \textbf{Cause Summary} \\ 
\hline
\hline
IBM's Granite3.2:8b & & ``Predict whether Device with 'name': 'Cisco-IOS-XR-infra-statsd-oper:infra-statistics/interfaces/interface/latest/data-rate', 'time': '2017-09-01 10:31:39.580000', 'EncodingPath': 'Cisco-IOS-XR-infra-statsd-oper:infra-statistics/interfaces/interface/latest/data-rate', 'Producer': 'spine2', 'bandwidth': '100000000.0', 'input-data-rate': '10273100.0', 'input-load': '26.0', 'input-packet-rate': '227678.0', 'interface-name': 'HundredGigE0/0/0/17', 'load-interval': '0.0', 'output-data-rate': '12093337.0', 'output-load': '30.0', 'output-packet-rate': '408305.0', 'peak-input-data-rate': '0.0', 'peak-input-packet-rate': '0.0', 'peak-output-data-rate': '0.0', 'peak-output-packet-rate': '0.0', 'reliability': '255.0' has a Critical fault or Non-critical fault.'' & Critical & The high reliability metric (255) indicates a potential interface failure, which can lead to significant network disruptions. Despite no individual metrics indicating critical faults, the overall non-responsive or malfunctioning state of 'HundredGigE0/0/17' is classified as critical due to its impact on network operations.
\\
\cline{1-1}
\cline{3-5}
Microsoft's Phi4:14b &  & Predict whether Device with 'name': 'Cisco-IOS-XR-infra-statsd-oper:infra-statistics/interfaces/interface/latest/generic-counters', 'time': '2017-09-01 10:18:59.451000', 'EncodingPath': 'Cisco-IOS-XR-infra-statsd-oper:infra-statistics/interfaces/interface/latest/generic-counters', 'Producer': 'leaf3', 'bytes-received': '5344118565224976.0', 'bytes-sent': '5096276026294021.0', 'carrier-transitions': '0.0', 'crc-errors': '390.0', 'input-drops': '0.0', 'input-errors': '390.0', 'input-ignored-packets': '0.0', 'input-queue-drops': '0.0', 'interface-name': 'HundredGigE0/0/0/30', 'output-buffer-failures': '0.0', 'output-drops': '2573752423.0', 'output-errors': '0.0', 'output-queue-drops': '0.0', 'packets-received': '1974518315803.0', 'packets-sent': '2244342803311.0' has a Critical fault or Non-critical fault & Critical & There is a possibility that the metrics are being incorrectly scaled or recorded due to misconfiguration or reporting inaccuracies by the telemetry system.
\\ 
\hline \hline
\end{tabular}
\caption{Fault severity and cause analysis: The table presents the text queries classified as `Critical' by different LLMs. The LLMs generated the text in the {\em Cause Summary}.}
\label{tab2:Critical-fault_cause-agent}
\end{center}
\end{table*}

\begin{table*}
\tiny
\begin{center}
\renewcommand{\arraystretch}{1}
\setlength{\tabcolsep}{4pt}
\begin{tabular}{|c| c |p{5.6cm} |c|p{5.6cm}|}
\hline
\textbf{Models} & \textbf{Actor} & \textbf{Query} & \textbf{Severity} & \textbf{Cause Summary} \\ 
\hline
\hline
Llama3.2 &  & ``Predict whether Device with 'name': 'Cisco-IOS-XR-infra-statsd-oper:infra-statistics/interfaces/interface/latest/generic-counters', 'time': '2017-09-01 10:35:46.985999', 'EncodingPath': 'Cisco-IOS-XR-infra-statsd-oper:infra-statistics/interfaces/interface/latest/generic-counters', 'Producer': 'leaf3', 'bytes-received': '659311227161861.0', 'bytes-sent': '388892590647867.0', 'carrier-transitions': '0.0', 'crc-errors': '0.0', 'input-drops': '0.0', 'input-errors': '0.0', 'input-ignored-packets': '0.0', 'input-queue-drops': '0.0', 'interface-name': 'HundredGigE0/0/0/8', 'output-buffer-failures': '0.0', 'output-drops': '0.0', 'output-errors': '0.0', 'output-queue-drops': '0.0', 'packets-received': '165436307368.0', 'packets-sent': '112420913236.0' has a Critical fault or Non-critical fault.'' & Critical & No Feedback \\ 
\cline{1-1}
\cline{3-5}
DeepSeek-R1 &  & \multirow{6}{5.6cm}{Predict whether Device with 'name': 'Cisco-IOS-XR-infra-statsd-oper:infra-statistics/interfaces/interface/latest/generic-counters', 'time': '2017-09-01 10:17:40.959000', 'EncodingPath': 'Cisco-IOS-XR-infra-statsd-oper:infra-statistics/interfaces/interface/latest/generic-counters', 'Producer': 'spine2', 'bytes-received': '765732261758138.0', 'bytes-sent': '692455587027158.0', 'carrier-transitions': '0.0', 'crc-errors': '0.0', 'input-drops': '0.0', 'input-errors': '0.0', 'input-ignored-packets': '0.0', 'input-queue-drops': '0.0', 'interface-name': 'HundredGigE0/0/0/14', 'output-buffer-failures': '0.0', 'output-drops': '0.0', 'output-errors': '0.0', 'output-queue-drops': '0.0', 'packets-received': '257183573922.0', 'packets-sent': '295378443912.0' has a Critical fault or Non-critical fault.} & Non-critical & No significant issues observed with the interface operational state or data rates within acceptable limits. \\ \cline{1-1}
\cline{4-5}
Mistral & \multirow{6}{1cm}{\centering Critic} &  & Non-critical & The given query data does not indicate a critical fault in the device's interface. The high traffic and data rates shown can be seen as heavy load, which is generally considered non-critical, but monitoring for potential issues in the near future is recommended. To mitigate this, network administrators might consider optimizing traffic or upgrading bandwidth accordingly. \\ \cline{1-1}
\cline{4-5}
Llava:7b &  &  & Non-critical & No Feedback
\\
\cline{1-1}
\cline{4-5}
IBM's Granite3.2:8b & & & Non-critical & The provided data does not indicate any critical faults on interface HundredGigE0/0/0/14. All counters such as bytes-received, bytes-sent, carrier-transitions, crc-errors, input-drops, input-errors, input-ignored-packets, input-queue-drops, output-buffer-failures, output-drops, output-errors, and output-queue-drops are either zero or within normal operation ranges. The interface is up and functioning normally according to the given metrics.
\\
\cline{1-1}
\cline{4-5}
Microsoft's Phi4:14b &  & & Non-critical & No Feedback
\\ 
\hline \hline
\end{tabular}
\caption{Fault severity and cause analysis for critics enabled with large language models. The critics evaluate the responses of agents and send them feedback for improvement. They also classify the query as {\em Non-critical} or {\em Critical} and briefly summarize the fault cause. The critics also have access to network telemetry logs that were vector-embedded and stored in ChromaDB. Note that the large language models generated the text in the {\em Cause Summary}.}
\label{tab:fault_cause-critic}
\end{center}
\end{table*}

The evolution of the ratio of total cost obtained by the developed algorithm and the total optimal cost obtained by the CVX solver $\frac{\sum_{i=1}^N f_i(x_{1i}(k), x_{2i}(k)) + \sum_{j=1}^2 \sum_{\nu =1}^{C_j} g_{j \nu} (y_{j \nu} (k))}{\sum_{i=1}^N f_i(x_{1i}^*, x_{2i}^*) + \sum_{j=1}^2 \sum_{\nu =1}^{C_j} g_{j \nu} (y_{j \nu}^*)}$ is shown in Figure \ref{fig:total_cost}. The ratio converges close to $1$. 

Moreover, Figure \ref{fig:feedback_signal_agents} shows the convergence of the feedback signals $\Theta_1(k)$ and $\Theta_2(k)$ for AI agents completing tasks of modalities $1$ and $2$, respectively. The central server broadcasts the feedback signals. Analogously, the central server broadcasts the feedback signals for AI critics evaluating tasks of modalities $1$ and $2$, denoted by $\Theta^c_1(k)$ and $\Theta^c_2(k)$. The convergence result is shown in Figure \ref{fig:feedback_signal_critics}.

We now present the results for network telemetry fault detection and severity and cause analysis. We chose the network telemetry dataset $4$ of \cite{Putina2021} collected on network devices. The pre-processing steps are described in detail in the previous subsection. As discussed previously, we present fault detection using two approaches: agents enabled with Classical machine learning techniques and agents and critics enabled with generative AI's foundation models. For the generative AI's foundation models we use large language models (LLMs), such as Llama3.2, DeepSeek-R1, Granite, etc. For classical machine learning, we use Extreme Gradient Boosting (XG Boosting). We evaluate the generative AI's foundation models or LLMs-based methods using two techniques: $(a)$ with retrieval augmented generation (RAG) and $(b)$ without RAG. RAG is much slower, as it needs to search and retrieve results similar to user queries in the database and send these to the LLM as context examples.
We use the retrieval augmented generation (RAG) technique to provide context to the large language models. In RAG, the queries are converted into vector embeddings that represent the queries numerically. These embedded values are then stored in a database; we use ChromaDB. Furthermore, when a new query is passed for analysis, it is matched with the queries in the database; the closest $3$ matches are retrieved and passed to the LLM as examples with corresponding fault labels. The LLM then considers this example and processes the input query from the CSV file for fault analysis. We use {\em mxbai-embed-large} embedding technique to convert text queries into vector embeddings. For the log files, we create another instance of the ChromaDB. In the original dataset \cite{Putina2021}, the log files are saved with producer names. Our algorithm extracts the names of the producers from the file names, converts the log files with names into vector embeddings, and stores them in the ChromaDB database. 

At the fault detection stage, the fault-detecting AI agents and critics can access ChromaDB with text queries and their fault labels. 

The following subsections present a few outputs and analyses on fault detection.

\subsubsection{Fault Detection}

The following is one of the AI agent's output at the fault detection stage, the agents and critics are enabled with large language models. The examples include  Query ID, Query, and the agent's Response.
\begin{example}[Query with fault detecting agent's response]
{\small    19,``Device with 'name': `Cisco-IOS-XR-infra-statsd-oper:infra-statistics/interfaces/interface/latest/data-rate', `time': `2017-09-01 10:18:50.139000', `EncodingPath': `Cisco-IOS-XR-infra-statsd-oper:infra-statistics/interfaces/interface/latest/data-rate', `Producer': `leaf8', `bandwidth': `100000000.0', `input-data-rate': `6604818.0', `input-load': `16.0', `input-packet-rate': `235248.0', `interface-name': `HundredGigE0/0/0/13', `load-interval': `0.0', `output-data-rate': `12044697.0', `output-load': `30.0', `output-packet-rate': `467318.0', `peak-input-data-rate': `0.0', `peak-input-packet-rate': `0.0', `peak-output-data-rate': `0.0', `peak-output-packet-rate': `0.0', `reliability': `255.0''',Fault}
\end{example}

The following is one of the AI critic's output at the fault detection stage. The critic is enabled with a large language model (Llama3.2). The examples include  Query ID, Query, agent's Response, and  Critic's evaluation.
\begin{example}
 {\small   33,``Device with `name': 
 
 `Cisco-IOS-XR-infra-statsd-oper:infra-statistics/interfaces/interface/latest/data-rate', 'time': `2017-09-01 10:23:07.295000', `EncodingPath':
 
  `Cisco-IOS-XR-infra-statsd-oper:infra-statistics/interfaces/interface/latest/data-rate', `Producer': `leaf3', `bandwidth': `100000000.0', `input-data-rate': `0.0', `input-load': `0.0', `input-packet-rate': `0.0', `interface-name': `HundredGigE0/0/0/31', `load-interval': `0.0', `output-data-rate': `0.0', `output-load': `0.0', `output-packet-rate': `0.0', `peak-input-data-rate': `0.0', `peak-input-packet-rate': `0.0', `peak-output-data-rate': `0.0', `peak-output-packet-rate': `0.0', `reliability': `255.0''',No Fault,Correct}
\end{example}

We evaluated the fault detection step with AI agents enabled with XG Boosting and AI agents and critics enabled with large language models with RAG and without RAG. Table \ref{tab:fault_detection} presents accuracy, F1 score, precision, and recall for fault detection by both approaches: AI agents enabled with XG Boosting and AI agents enabled with large language models. We observe that agents enabled with XG Boosting are faster and more accurate in predicting faults than those enabled with large language models. 

\subsubsection{Fault severity and Cause analysis}
For the fault severity and cause analysis, we executed the algorithms with six large language models: Meta's Llama3.2, DeepSeek-R1, Mistral, IBM's Granite3.2:8b, Microsoft's Phi4:14b, and Llava:7b. The chosen large language models are open-access and are supported by Ollama. 

Table \ref{tab:fault_cause-agent} presents the fault severity classification and cause summary by AI agents for the randomly chosen text queries from CSV files. Analogously, the fault severity classification and cause summary by AI critics is presented in Table \ref{tab:fault_cause-critic}. Additionally, Tables \ref{tab1:Critical-fault_cause-agent} and \ref{tab2:Critical-fault_cause-agent} present faults classified as `Critical' by AI agents enabled by different LLMs. Note that the LLMs generated the text in the {\em Cause Summary} column of the tables. 

Further analysis on proportion of faults classified as critical or non-critical by different large language models is presented in Figure \ref{fig:piechart_Critical}, visualized through pie-charts. We observe that Llama3.2 classifies the highest percentage of faults as critical, whereas Granite 3.2:8b classifies the least.
\begin{figure*}
    \centering
    \includegraphics[width=0.9\linewidth]{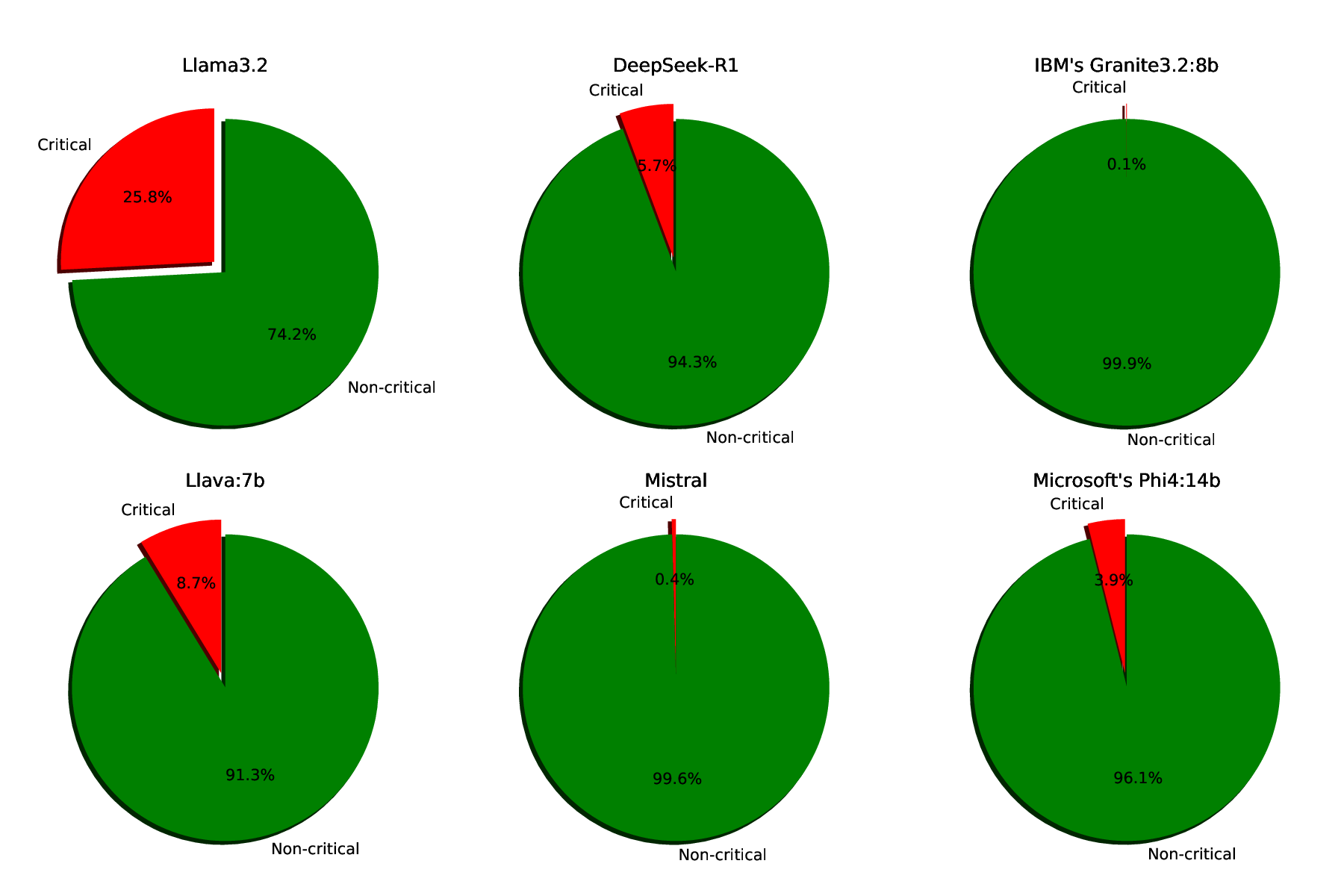}
    \caption{The proportion of critical and non-critical faults classified by various open-source large language models. The green slice represents the proportion of non-critical faults, and the exploded red slice represents the proportion of critical faults.}
    \label{fig:piechart_Critical}
\end{figure*}
\begin{figure*}
\includegraphics[width=1.0\linewidth]{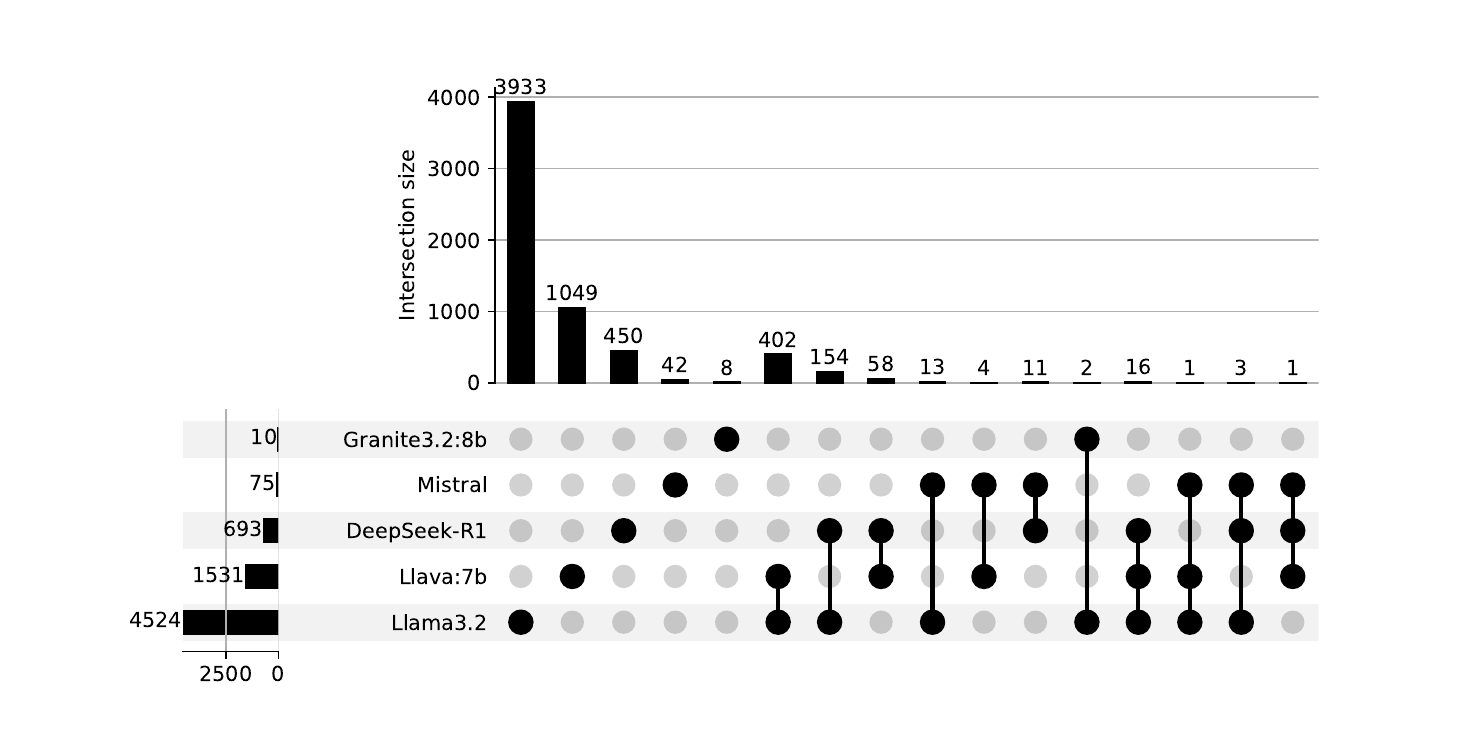}
    \caption{The UpSet plot illustrates the intersection of sets of faults classified as critical by various LLMs. The bars on the bottom left represent the number of critical faults classified by the LLMs. The bars above the single solid dot represent the number of faults classified as critical only by that large language model (a disjoint set). The connecting solid dots represent the common queries classified as critical by the connecting LLMs.}
    \label{Fig:Upset_plot}
\end{figure*}
UpSet plot in Figure \ref{Fig:Upset_plot} illustrates the distribution of faults classified as `Critical' by different large language models. Llama3.2 classifies $4524$ faults as critical, whereas Granite3.2:8b classifies $10$ as critical. Furthermore, $402$ faults are classified as critical by both Llama3.2 and Llava:7b. Similarly, Llama3.2 and DeepSeek-R1 classify $154$ faults as critical. Furthermore, Llama3.2, Llava:7b, and DeepSeek-R1 classify $16$ common fault queries as critical. Further observations can be made for other common queries.

\section{Conclusion} \label{conc}

We developed stochastic iterative algorithms to control the number of active AI agents and critics in a multi-agent system to complete multimodal tasks. The AI agents and critics are either classical machine learning or generative AI's foundation model enabled. The AI agent's cost functions are coupled through the active states of different modalities. The critic evaluates the response of the AI agents and sends feedback on the evaluation to the agent to improve its performance. The critics' cost function depends on the task modality it evaluates. AI agents and critics collaborate with a central server that keeps track of the number of active AI agents and critics to perform a modality task. Additionally, the central server sends feedback signals to the network, which helps AI agents and critics calculate their probabilistic intent to be active in performing a modality task at a time step or not. Eventually, AI agents and critics obtain optimal values over long-term average active states.  

We used the developed algorithms for fault detection, severity, and cause analysis on a real-world network telemetry dataset. For fault detection, the agents enabled with the classical machine learning approach have better accuracy, F1 score, precision, and recall than the large language model-enabled agents. Agents and critics were enabled by large language models for fault severity and cause analysis. The algorithms were evaluated with multiple large language models---Llama3.2, DeepSeek-R1, Microsoft's Phi4, IBM's granite3.2:8b, Mistral, and Llava:7b. It would be exciting to implement the algorithms on real network devices and detect the faults in real-time and analyze the causes. Moreover, the algorithm can be implemented in several application domains, such as smart energy systems, fault detection, and cause analysis in industrial applications.

\section{Acknowledgement}
This work is partially supported by SheQAI Research and the University of Alberta-Huawei Joint Innovation Centre (project number ZAI7F).


\begin{thebibliography}{10}

\bibitem{Syed2018}
Syed Eqbal Alam, Robert Shorten, Fabian Wirth, and Jia Yuan Yu.
\newblock Communication-efficient distributed multi-resource allocation.
\newblock In {\em {IEEE} International Smart Cities Conference ({ISC2})}, pages
  1--8, Sep. 2018.

\bibitem{Syed2018_B}
Syed Eqbal Alam, Robert Shorten, Fabian Wirth, and Jia Yuan Yu.
\newblock Distributed algorithms for {Internet-of-Things} enabled prosumer
  markets: {A} control theoretic perspective.
\newblock In {\em Analytics for the Sharing
  Economy: Mathematics, Engineering and Business Perspectives}. Springer, 2020.

\bibitem{Alam2023_CDC}
Syed~Eqbal Alam and Dhirendra Shukla.
\newblock Communication-efficient allocation of multiple indivisible resources
  in a federated multi-agent system.
\newblock In {\em 62nd {IEEE} Conference on Decision and Control, {CDC} 2023,
  Singapore, December 13-15, 2023}, pages 5279--5285. 2023.

\bibitem{Alayrac2024}
Jean-Baptiste Alayrac, Jeff Donahue, Pauline Luc, Antoine Miech, Iain Barr,
  Yana Hasson, Karel Lenc, Arthur Mensch, Katie Millicah, Malcolm Reynolds,
  Roman Ring, Eliza Rutherford, Serkan Cabi, Tengda Han, Zhitao Gong, Sina
  Samangooei, Marianne Monteiro, Jacob Menick, Sebastian Borgeaud, Andrew
  Brock, Aida Nematzadeh, Sahand Sharifzadeh, Mikolaj Binkowski, Ricardo
  Barreira, Oriol Vinyals, Andrew Zisserman, and Karen Simonyan.
\newblock Flamingo: a visual language model for few-shot learning.
\newblock In {\em Proceedings of the 36th International Conference on Neural
  Information Processing Systems}, NIPS '22, 2024.

\bibitem{Arzani2018}
Behnaz Arzani, Selim Ciraci, Luiz Chamon, Yibo Zhu, Hongqiang Liu, Jitu Padhye,
  Boon~Thau Loo, and Geoff Outhred.
\newblock 007: democratically finding the cause of packet drops.
\newblock In {\em Proceedings of the 15th USENIX Conference on Networked
  Systems Design and Implementation}, pages 419--435, 2018.

\bibitem{Ran2020}
Ran Ben~Basat, Sivaramakrishnan Ramanathan, Yuliang Li, Gianni Antichi, Minian
  Yu, and Michael Mitzenmacher.
\newblock Pint: Probabilistic in-band network telemetry.
\newblock In {\em Proceedings of the Annual Conference of the ACM Special
  Interest Group on Data Communication on the Applications, Technologies,
  Architectures, and Protocols for Computer Communication}, SIGCOMM '20, pages
  662--680, 2020.

\bibitem{Rishi2021}
Rishi Bommasani, Drew~A. Hudson, Ehsan Adeli, Russ~B. Altman, Simran Arora,
  Sydney von Arx, Michael~S. Bernstein, Jeannette Bohg, Antoine Bosselut, Emma
  Brunskill, Erik Brynjolfsson, Shyamal Buch, Dallas Card, Rodrigo Castellon,
  Niladri~S. Chatterji, Annie~S. Chen, Kathleen Creel, Jared~Quincy Davis,
  Dorottya Demszky, Chris Donahue, Moussa Doumbouya, Esin Durmus, Stefano
  Ermon, John Etchemendy, Kawin Ethayarajh, Li~Fei{-}Fei, Chelsea Finn, Trevor
  Gale, Lauren~E. Gillespie, Karan Goel, Noah~D. Goodman, Shelby Grossman, Neel
  Guha, Tatsunori Hashimoto, Peter Henderson, John Hewitt, Daniel~E. Ho, Jenny
  Hong, Kyle Hsu, Jing Huang, Thomas Icard, Saahil Jain, Dan Jurafsky,
  Pratyusha Kalluri, Siddharth Karamcheti, Geoff Keeling, Fereshte Khani, Omar
  Khattab, Pang~Wei Koh, Mark~S. Krass, Ranjay Krishna, Rohith Kuditipudi, and
  et~al.
\newblock On the opportunities and risks of foundation models.
\newblock {\em CoRR}, abs/2108.07258, 2021.

\bibitem{Boyd2004}
Stephen Boyd and Lieven Vandenberghe.
\newblock {\em Convex Optimization}.
\newblock Cambridge University Press, 2004.

\bibitem{Taicheng2024}
Taicheng Guo, Xiuying Chen, Yaqi Wang, Ruidi Chang, Shichao Pei, Nitesh~V.
  Chawla, Olaf Wiest, and Xiangliang Zhang.
\newblock Large language model based multi-agents: A survey of progress and
  challenges.
\newblock In {\em Proceedings of the Thirty-Third International Joint
  Conference on Artificial Intelligence, {IJCAI-24}}, pages 8048--8057, 8 2024.

\bibitem{Hyun2019}
Jonghwan Hyun, Nguyen Van~Tu, Jae-Hyoung Yoo, and James Won-Ki Hong.
\newblock Real-time and fine-grained network monitoring using in-band network
  telemetry.
\newblock {\em International Journal of Network Management}, 29(6), 2019.

\bibitem{Landau-Feibish2025}
Shir Landau-Feibish, Zaoxing Liu, and Jennifer Rexford.
\newblock Compact data structures for network telemetry.
\newblock {\em ACM Comput. Surv.}, 57(8), March 2025.

\bibitem{Junyou2024}
Junyou Li, Qin Zhang, Yangbin Yu, QIANG FU, and Deheng Ye.
\newblock More agents is all you need.
\newblock {\em Transactions on Machine Learning Research}, 2024.

\bibitem{Yuliang2019}
Yuliang Li, Rui Miao, Hongqiang~Harry Liu, Yan Zhuang, Fei Feng, Lingbo Tang,
  Zheng Cao, Ming Zhang, Frank Kelly, Mohammad Alizadeh, and Minlan Yu.
\newblock Hpcc: high precision congestion control.
\newblock In {\em Proceedings of the ACM Special Interest Group on Data
  Communication}, SIGCOMM '19, pages 44--58, 2019.

\bibitem{Mishra2024}
Mayank Mishra, Matt Stallone, Gaoyuan Zhang, Yikang Shen, Aditya Prasad,
  Adriana~Meza Soria, Michele Merler, Parameswaran Selvam, Saptha Surendran,
  Shivdeep Singh, Manish Sethi, Xuan-Hong Dang, Pengyuan Li, Kun-Lung Wu, Syed
  Zawad, Andrew Coleman, Matthew White, Mark Lewis, Raju Pavuluri, Yan Koyfman,
  Boris Lublinsky, Maximilien de~Bayser, Ibrahim Abdelaziz, Kinjal Basu, Mayank
  Agarwal, Yi~Zhou, Chris Johnson, Aanchal Goyal, Hima Patel, Yousaf Shah,
  Petros Zerfos, Heiko Ludwig, Asim Munawar, Maxwell Crouse, Pavan Kapanipathi,
  Shweta Salaria, Bob Calio, Sophia Wen, Seetharami Seelam, Brian Belgodere,
  Carlos Fonseca, Amith Singhee, Nirmit Desai, David~D. Cox, Ruchir Puri, and
  Rameswar Panda.
\newblock Granite code models: A family of open foundation models for code
  intelligence, 2024.

\bibitem{Putina2021}
Andrian Putina and Dario Rossi.
\newblock Online anomaly detection leveraging stream-based clustering and
  real-time telemetry.
\newblock {\em IEEE Transactions on Network and Service Management},
  18(1):839--854, 2021.

\bibitem{Tan2021}
Lizhuang Tan, Wei Su, Wei Zhang, Jianhui Lv, Zhenyi Zhang, Jingying Miao,
  Xiaoxi Liu, and Na~Li.
\newblock In-band network telemetry: A survey.
\newblock {\em Computer Networks}, 186, 2021.

\bibitem{Wirth2014}
F.~Wirth, S.~St\"{u}dli, J.~Y. Yu, M.~Corless, and R.~Shorten.
\newblock Nonhomogeneous place-dependent {M}arkov chains, unsynchronised
  {AIMD}, and optimisation.
\newblock {\em J. ACM}, 66(4):24:1--24:37, 2019.

\bibitem{Minlan2019}
Minlan Yu.
\newblock Network telemetry: towards a top-down approach.
\newblock {\em SIGCOMM Comput. Commun. Rev.}, 49(1):11--17, 2019.


\bibitem{Borkar2008}
V. S. Borkar.
\newblock Stochastic Approximation.
\newblock {\em Cambridge University Press, New York, NY, USA}, 2008.


\bibitem{Hoeffding1963}
W.~Hoeffding.
\newblock Probability inequalities for sums of bounded random
variables.
\newblock {\em Journal of the American Statistical Association}, 58(301):13--30, 1963.



\end{thebibliography}

\end{document}